\documentclass{article}

\PassOptionsToPackage{numbers, compress}{natbib}


\usepackage[preprint]{neurips_2025}



\usepackage[utf8]{inputenc} 
\usepackage[T1]{fontenc}    
\usepackage[colorlinks,linkcolor=blue,citecolor=blue,pagebackref=true]{hyperref}
\usepackage{url}            
\usepackage{booktabs}       
\usepackage{amsfonts}       
\usepackage{nicefrac}       
\usepackage{microtype}      
\usepackage{xcolor}         

\usepackage{eqnarray,amsmath}
\usepackage[utf8]{inputenc} 
\usepackage[T1]{fontenc}    
\usepackage{booktabs}       
\usepackage{amsfonts}       
\usepackage{nicefrac}       
\usepackage{microtype}      

\usepackage{caption}
\usepackage{subcaption}


\usepackage[utf8]{inputenc}
\usepackage[english]{babel}
\usepackage{amsthm,amsmath,mathtools,amsfonts,amssymb}

\usepackage{pdfpages} 

\usepackage{authblk}


\newcommand\blfootnote[1]{%
  \begingroup
  \renewcommand\thefootnote{}\footnote{#1}%
  \addtocounter{footnote}{-1}%
  \endgroup
}








\usepackage{caption}
\usepackage{subcaption}
\usepackage[subrefformat=parens]{subcaption}

\usepackage{graphicx,float}
\usepackage{enumitem}

\usepackage{bbm}


\usepackage{dirtytalk}



\usepackage{url}



\renewcommand*{\backrefalt}[4]{%
	\ifcase #1 \footnotesize{(Not cited.)}%
	\or        \footnotesize{(Cited on page~#2.)}%
	\else      \footnotesize{(Cited on pages~#2.)}%
	\fi}


\usepackage[nameinlink,capitalize,noabbrev]{cleveref}

\usepackage{algorithm}
\usepackage{algorithmicx}
\usepackage{algpseudocode}

\usepackage{nicefrac}



\usepackage[many]{tcolorbox}
\tcbset{
    every box/.style={
            highlight math style={sharp corners,colback=white,colframe=red},
        }
}


\newcounter{algsubstate}
\renewcommand{\thealgsubstate}{\alph{algsubstate}}









\DeclareMathOperator*{\TV}{D_{TV}}
\DeclareMathOperator*{\hels}{D_{h}^2}
\DeclareMathOperator*{\hel}{D_{h}}



\DeclareMathOperator*{\rep}{rep}
\DeclareMathOperator*{\nums}{num}
\DeclareMathOperator*{\VD}{D_{V}}
\DeclareMathOperator{\VDT}{\widetilde{D}}
\DeclareMathOperator{\VDS}{D_2}
\DeclareMathOperator{\VDC}{D_{Conjecture}}
\DeclareMathOperator{\W}{W}
\DeclareMathOperator*{\dsc}{DSC}
\DeclareMathOperator{\doout}{\R}

\let\inf\relax 
\DeclareMathOperator*\inf{\vphantom{p}inf}

\theoremstyle{plain}
\newtheorem{theorem}{Theorem}
\newtheorem{lemma}[theorem]{Lemma}

\newtheorem{fact}[theorem]{Fact}

\newtheorem{assumption}{Assumption}[section]
\theoremstyle{definition}
\newtheorem{definition}[theorem]{Definition}

%
\crefname{lemma}{Lemma}{Lemmas}
\crefname{fact}{Fact}{Facts}
\crefname{theorem}{Theorem}{Theorems}
\crefname{proposition}{Proposition}{Propositions}
\theoremstyle{plain}


\usepackage{epsf}
\usepackage{epsfig}
\usepackage{fancyhdr}
\usepackage{graphics}
\usepackage{graphicx}
\usepackage{psfrag}

\usepackage{url}

\usepackage{color}

\usepackage{amsmath}
\usepackage{amssymb,bbm}
\usepackage{caption}
\usepackage{textcomp}
\usepackage{siunitx}
\usepackage{wrapfig}
\usepackage{multirow}
\usepackage{multicol}

\usepackage{comment}









\newcommand{\Ns}{\mathbb{N}} 
\newcommand{\N}{\mathbb{N}} 


\newcommand{\cD}{\mathcal{D}}

\newcommand{\cE}{\mathcal{E}}
\newcommand{\cF}{\mathcal{F}}

\newcommand{\cI}{\mathcal{I}}
\newcommand{\cJ}{\mathcal{J}}

\newcommand{\cL}{\mathcal{L}}

\newcommand{\cN}{\mathcal{N}}
\newcommand{\cO}{\mathcal{O}}

\newcommand{\cS}{\mathcal{S}}

\newcommand{\cT}{\mathcal{T}}

\newcommand{\cX}{\mathcal{X}}

\newcommand{\cY}{\mathcal{Y}}



\newcommand{\norm}[1]{\|#1\|}

\newcommand{\ck}{\vc_k^0}
\newcommand{\gk}{\mGamma_k^0}
\newcommand{\ak}{\mA_k^0}
\newcommand{\bk}{\vb_k^0}
\newcommand{\sk}{\mSigma_k^0}


\DeclareMathOperator*{\argmax}{arg\,max}
\DeclareMathOperator*{\argmin}{arg\,min}


\usepackage{amsmath,amsfonts,bm}

\usepackage{xcolor}
\definecolor{forestgreen}{rgb}{0.13, 0.55, 0.13}
\newcommand{\forestgreen}[1]{{\color{forestgreen}#1}}

\definecolor{frenchblue}{rgb}{0.0, 0.45, 0.73}

\newcommand{\update}[1]{{\color{black}#1}}

\definecolor{cherryblossompink}{rgb}{1.0, 0.72, 0.77}

\definecolor{bittersweet}{rgb}{1.0, 0.44, 0.37}

\definecolor{navyblue}{rgb}{0.0, 0.0, 0.5}









\def\eqref#1{equation~\ref{#1}}











\def\1{\bm{1}}

\newcommand{\divclus}{\mathsf{d}}
\newcommand{\height}{\mathsf{h}}



\def\rz{{\textnormal{z}}}


\def\rve{e} 

\def\rvx{{\mathbf{x}}}
\def\rvy{\textnormal{y}}





\def\vmu{{\bm{\mu}}}

\def\vtheta{{\bm{\theta}}}
\def\vTheta{{\bm{\Theta}}}
\def\veta{{\bm{\eta}}}

\def\valpha{{\bm{\alpha}}}

\def\va{{\bm{a}}}
\def\vb{{b}}
\def\vc{{\bm{c}}}

\def\vp{{\bm{p}}}

\def\vv{{\bm{v}}}

\def\vx{{\bm{x}}}
\def\vy{y}


\def\evalpha{{\alpha}}

\def\evmu{{\mu}}

\def\evtheta{{\theta}}
\def\evpi{{\pi}}

\def\evp{{p}}

\def\evv{{v}}

\def\mA{{\bm{A}}}
\def\mA{{\bm{a}}}

\def\mI{{\bm{I}}}

\def\mQ{{\bm{Q}}}

\def\mSigma{\sigma}
\def\mGamma{{\bm{\Gamma}}}

\DeclareMathAlphabet{\mathsfit}{\encodingdefault}{\sfdefault}{m}{sl}
\SetMathAlphabet{\mathsfit}{bold}{\encodingdefault}{\sfdefault}{bx}{n}


\def\gE{{\mathcal{E}}}

\def\gH{{\mathcal{H}}}

\def\gV{{\mathcal{V}}}



\def\sN{{\mathbb{N}}}

\def\sP{{\mathbb{P}}}

\def\sR{{\mathbb{R}}}
\def\sS{{\mathbb{S}}}

\def\sV{{\mathbb{V}}}


\def\emQ{{Q}}







\newcommand{\E}{\mathbb{E}}

\newcommand{\R}{\mathbb{R}}

\newcommand{\KL}{D_{\mathrm{KL}}}





\usepackage[capitalize,noabbrev]{cleveref}

\title{Model Selection for Gaussian-gated Gaussian Mixture of Experts Using Dendrograms of Mixing Measures}

%

\author{Tuan Thai$^{\star,1}$, TrungTin Nguyen$^{\star,\dagger,2,3}$, Dat Do$^{4}$, Nhat Ho$^{5}$, Christopher Drovandi$^{2,3}$\\
	$^{1}$University of Science - VNUHCM, Ho Chi Minh City, Vietnam.\\
	$^{2}$ARC Centre of Excellence for the Mathematical Analysis of Cellular System.\\
	$^{3}$School of Mathematical Sciences, Queensland University of Technology, Brisbane City, Australia.\\
	$^{4}$Department of Statistics, University of Michigan, Ann Arbor, USA.\\
	$^{5}$Department of Statistics and Data Science, University of Texas at Austin, Austin, USA.
}

\begin{document}
	
	\maketitle
	
	\begin{abstract}
		Mixture of Experts (MoE) models constitute a widely utilized class of ensemble learning approaches in statistics and machine learning, known for their flexibility and computational efficiency. They have become integral components in numerous state-of-the-art deep neural network architectures, particularly for analyzing heterogeneous data across diverse domains. Despite their practical success, the theoretical understanding of model selection, especially concerning the optimal number of mixture components or experts, remains limited and poses significant challenges. These challenges primarily stem from the inclusion of covariates in both the Gaussian gating functions and expert networks, which introduces intrinsic interactions governed by partial differential equations with respect to their parameters. In this paper, we revisit the concept of dendrograms of mixing measures and introduce a novel extension to Gaussian-gated Gaussian MoE models that enables consistent estimation of the true number of mixture components and achieves the pointwise optimal convergence rate for parameter estimation in overfitted scenarios. Notably, this approach circumvents the need to train and compare a range of models with varying numbers of components, thereby alleviating the computational burden, particularly in high-dimensional or deep neural network settings. Experimental results on synthetic data demonstrate the effectiveness of the proposed method in accurately recovering the number of experts. It outperforms common criteria such as the Akaike information criterion, the Bayesian information criterion, and the integrated completed likelihood, while achieving optimal convergence rates for parameter estimation and accurately approximating the regression function.

		\blfootnote{$^\star$Co-first author, $^{\dagger}$Corresponding author.}
	\end{abstract}

	\section{Introduction}\label{sec_introduction}

	{\bf Mixture of experts (MoE) models.} 
	MoE models were initially introduced as neural network-based architectures in \cite{jacobs_adaptive_1991,jordan_hierarchical_1994}. These conditional mixture density models have gained considerable attention, primarily due to their universal approximation capabilities~\citep{norets_approximation_2010, nguyen_universal_2016, nguyen_approximation_2019_MoE, nguyen_approximations_2021}, and their favorable convergence properties, as demonstrated in mixture of regression settings~\cite{do_strong_2025,ho_convergence_2022} and various MoE frameworks~\cite{nguyen_bayesian_2024,jiang_hierarchical_1999, nguyen_towards_2024, nguyen_demystifying_2023, nguyen_general_2024}. Notably, these advancements enhance both the approximation and convergence behavior compared to unconditional mixture models, as shown in earlier works~\cite{genovese_rates_2000, rakhlin_risk_2005, chong_risk_2024, nguyen_convergence_2013, shen_adaptive_2013, ho_convergence_2016, ho_strong_2016, nguyen_approximation_2020, nguyen_approximation_2023}. For comprehensive discussions on the theoretical foundations and practical implementations of MoE models across various fields, including biological data analysis, natural language processing, and computer vision, see \cite{yuksel_twenty_2012, nguyen_practical_2018, do_hyperrouter_2023, nguyen_model_2021, chen_towards_2022}.
	Moreover, MoE models offer a more expressive alternative to traditional mixture and mixture of regression models~\citep{fruhwirth_schnatter_handbook_2019,mclachlan_finite_2000}, primarily due to their ability to model both the expert components and the gating functions as functions of the input covariates. In regression contexts, a commonly adopted formulation consists of Gaussian experts in conjunction with either softmax-based or Gaussian-based gating mechanisms. A representative example is the Gaussian-gated Gaussian mixture of experts (GGMoE) model~\citep{deleforge_high_dimensional_2015}, which generalizes several classical approaches, including the affine mixture of experts model~\citep{xu_alternative_1995} and inverse regression techniques such as those proposed in~\citep{hoadley_bayesian_1970,li_sliced_1991}. One limitation of softmax-gated MoE models lies in the complexity of their parameter estimation via the expectation-maximization (EM) algorithm~\citep{dempster_maximum_1977}, where the M-step involves solving an internal optimization subroutine, typically through Newton-Raphson iterations. In contrast, the parameter updates for GGMoE models admit closed-form expressions, leading to more computationally efficient inference procedures. Hence, this work focuses on parameter estimation and model selection within the GGMoE framework, which is particularly effective for modeling complex, nonlinear input-output relationships arising from heterogeneous subpopulations.
	
	{\bf Related work on model selection in MoE models.} Universal approximation theorems establish that, given a sufficiently large number of components, both mixture models and MoEs possess the capacity to approximate broad families of unconditional and conditional probability density functions, respectively, with arbitrary accuracy. Despite this theoretical guarantee, the practical determination of an adequate number of components remains a challenging task in real-world applications. This underscores the importance of principled model selection strategies for MMs and MoEs, a topic that has garnered sustained interest in the statistics and machine learning communities for several decades. For a recent synthesis of developments in this area, see, e.g., \cite{celeux_model_2019,gormley_mixture_2019}.
	In practice, the selection of the number of components in MoEs typically relies on model selection criteria that balance fit and complexity. Commonly employed information criteria include the Akaike information criterion (AIC) \citep{akaike_new_1974,fruhwirth_schnatter_analysing_2018}, the Bayesian information criterion (BIC) \citep{schwarz_estimating_1978} and its adaptations MoE models~\citep{khalili_estimation_2024,forbes_mixture_2022,berrettini_identifying_2024,forbes_summary_2022}, the integrated classification likelihood (ICL) \citep{biernacki_assessing_2000,fruhwirth_schnatter_labor_2012}, and the extended BIC (eBIC) used in graphical model contexts \citep{foygel_extended_2010,nguyen_joint_2024}. For dependent data settings, the Sin and White information criterion (SWIC)~\citep{sin_information_1996,westerhout_asymptotic_2024} is also utilized. A common limitation of these methods, however, is their asymptotic nature; they lack finite-sample guarantees and may be unreliable in small-sample regimes, rendering their practical use somewhat heuristic.
	To address these concerns, recent work by \cite{nguyen_non_asymptotic_2022,nguyen_non_asymptotic_2023,montuelle_mixture_2014,nguyen_non_asymptotic_2021,nguyen_model_2022,nguyen_non_asymptotic_Lasso_2023} has developed non-asymptotic risk bounds in the form of weak oracle inequalities. These results justify model selection via penalized likelihood, provided appropriate lower bounds on the penalties are enforced, and are applicable in high-dimensional regression settings across various MoE formulations, including GGMoE.
	Another widely used paradigm is Bayesian model selection via marginal likelihood evaluation, which provides a probabilistic framework for selecting the number of components~\citep{fruhwirth_schnatter_keeping_2019,zens_bayesian_2019}.
	Nonetheless, a key computational challenge persists: existing model selection methods generally require training and evaluating multiple candidate models across different component configurations. This exhaustive search becomes computationally infeasible in high-dimensional regimes or when MoE models are embedded within deep neural networks~\citep{lathuiliere_deep_2017,shazeer_outrageously_2017,pham_competesmoeeffective_2024,do_hyperrouter_2023}. To address this issue, \cite{nguyen_bayesian_2024_JNPS} introduced a computationally efficient Bayesian nonparametric formulation of the GGMoE model that circumvents the need to pre-specify the number of components. This method employs the merge-truncate-merge (MTM) post-processing strategy proposed by \cite{guha_posterior_2021} to achieve posterior consistency. Nevertheless, its practical implementation is constrained by the presence of a sensitive tuning parameter, which undermines its robustness and limits its direct applicability to real-world datasets. 
	
	{\bf Main contributions.} To address the challenge of model selection in GGMoE models, we revisit the concept of dendrograms of mixing measures introduced by \cite{do_dendrogram_2024} and propose a novel extension: the dendrogram selection criterion (DSC) algorithm tailored for GGMoE models. This method enables consistent estimation of the true number of mixture components while significantly mitigating the dependence on sensitive tuning parameters, a common limitation in existing approaches, including the GGMoE framework combined with the MTM post-processing procedure \cite{nguyen_bayesian_2024_JNPS}. In addition to its sensitivity to tuning parameters, the MTM approach does not alleviate the fundamental limitation of slow convergence rates in parameter estimation, even after merging. In contrast, the proposed DSC algorithm is theoretically guaranteed to improve the convergence rate of the overfitted mixing measure. Moreover, it constructs a hierarchical tree of components without requiring repeated model fitting across different tuning parameter configurations. To the best of our knowledge, the DSC algorithm represents the first statistically consistent and computationally efficient method for model selection via merging of experts in GGMoE models.

	{\bf Organization.} The remainder of this paper is structured as follows. In \cref{sec_background}, we begin by reviewing the formal definition of GGMoE models and the corresponding maximum likelihood estimation procedure in the overfitted regime. \Cref{sec_dendrogram_GGMoE} introduces our main methodological contribution, which involves a model selection framework based on the dendrogram of mixing measures, along with an analysis of its asymptotic properties. These theoretical findings are subsequently validated through simulation studies presented in \cref{sec_simulation_study}. Finally, we offer concluding remarks, limitations and future work in \cref{sec_conclusion}, while detailed proofs of all theoretical results are provided in the supplementary material.
	
	\paragraph{Notation.} Throughout the paper, 
	$\{1, 2, \ldots, N\}$ is abbreviated as $[N]$ for any natural number $N \in \Ns$.
	Given any two sequences of positive real numbers $\{a_N\}_{N = 1}^{\infty}$ and $\{b_N\}_{N = 1}^{\infty}$, we write 
	$a_N \lesssim b_N$  to indicate that there exists a constant $C > 0$ such that $a_N \leq C b_N$ for all $ N \in \Ns$. We write $a_N \asymp b_N$ if $a_N \lesssim b_N$ and $b_N \lesssim a_N$. We write $a_N \ll b_N$ or $a_N = o(b_N)$ if $a_N/b_N\to 0$ as $N \to \infty.$ Next, for any vector $\vv\in\R^D, D \in \Ns$, let $|\vv|:=\evv_1+\evv_2+\cdots+\evv_D$, whereas $\|\vv\|_{p}$ stands for its $p$-norm with a note that $\|\vv\|$ implicitly indicates the $2$-norm unless stating otherwise.
	Additionally, the notation $|\sS|$ represents the cardinality of any set $\sS$.
	Finally, given two probability density functions $p,q$ with respect to the Lebesgue measure $\vmu$, we define $\TV(p,q) := \frac 1 2 \int |p-q| d\vmu$ as their Total Variation distance, while $\hels(p,q) := \frac 1 2 \int (\sqrt p - \sqrt q)^2 d\vmu$ denotes the squared Hellinger distance between them.
	Let $\vTheta$ represent the parameter space. We denote by $\cE_K(\vTheta)$ the space of discrete distributions on $\vTheta$ with exactly $K$ atoms, and by $\cO_K(\vTheta) = \cup_{K' \le K} \cE_{K'}(\vTheta)$ the space of discrete distributions on $\vTheta$ with at most $K$ atoms. For a mixing measure $G_K = \sum_{k=1}^{K} \pi_k \delta_{\vtheta_k}$, we adopt a slight abuse of notation by referring to each term $ \pi_k \delta_{\vtheta_k} $ as an ``atom'', which encompasses both the weight $ \pi_k $ and the parameter $ \vtheta_k $. We drop $\vTheta$ in $\cE_K$ and $\cO_K$ when there is no confusion.
	
	\section{Background}\label{sec_background}
	
	{\bf GGMoE models.} Gaussian-gated Gaussian mixture of experts (GGMoE) models are designed to capture nonlinear and heterogeneous relationships between the univariate response variables $\rvy \in \cY \subset \doout$ and the covariates $\rvx \in \cX \subset \R^D$, where $D \in \Ns$.
	In the affine GGMoE model, the univariate response $\rvy$ is modeled as a weighted sum of $K_0$ local affine transformations, that is, $\rvy = \sum_{k=1}^{K_0} \1_{\rz = k} \left( \mA^{0}_k \rvx + \vb^0_k + \rve^0_k \right),$
	where $\1_{\rz = k}$ is the indicator function of the latent variable $\rz$, which encodes the cluster membership of the observation such that $\rz = k$ if $\rvy$ is generated from the $k$-th cluster, for $k \in [K_0]$.
	Each component is parameterized by a cluster-specific affine transformation defined by $\mA^0_k \in \R^{1 \times D}$ and $\vb^0_k \in \doout$. The noise term $\rve^0_k \in \doout$ accounts for both model approximation error (arising from the local affine form) and observation noise.
	Let $\cF_D : = \left\{\cN(\cdot;\vc,\mGamma):\vc\in\R^D,\mGamma\in\cS^+_{D}\right\}$ be the family of $D$-dimensional Gaussian density functions with mean $\vc$ and positive-definite covariance matrix $\mGamma$, where $\cS_D^{+}$ indicates the set of all symmetric positive-definite matrices on $\R^{D \times D}$.
	Assuming that $e^0_k$ is a zero-mean Gaussian random variable with covariance $\mSigma^0_k \in \cS^+_1$, the conditional distribution of $\rvy$ given $\rvx = \vx$ and $\rz = k$ is given by
	\begin{align*}
		p\left(\vy \mid \vx, \rz = k\right) = f_{\cD}\left(\vy \mid \mA^{0}_k \vx + \vb^0_k, \mSigma^0_k\right), \quad f_{\cD} \in \cF_1.
	\end{align*}
	To promote the locality of the affine transformations, the covariates $\rvx$ is a mixture of $K_0$ components:
	\begin{align}\label{eq_marginal_forward}
		p\left(\vx \mid \rz = k\right) = f_{\cL}\left(\vx \mid \vc^0_k, \mGamma^0_k\right), \quad
		p(\rz = k) = \pi^0_k, \quad f_{\cL} \in \cF_D,
	\end{align}
	where $\pi^0_k > 0$ are the mixing proportions, constrained such that $\sum_{k=1}^{K_0} \pi^0_k = 1$. We refer to $f_{\cD}$ as the conditional data density and $f_{\cL}$ as the local (or gating) density.
	By applying the law of total probability, the joint density of the GGMoE model of order $K_0$ is expressed as
	\begin{align}\label{eq_true_joint_GLLiM}
		p_{G_0}(\vx, \vy) = \sum_{k=1}^{K_0} \pi^0_k f_{\cL}(\vx \mid \vc^0_k, \mGamma^0_k) \cdot f_{\cD}(\vy \mid \mA^{0}_k \vx + \vb^0_k, \mSigma^0_k).
	\end{align}
	The mixing measure $G_0$ underlying the model is defined as
	$G_0 := \sum_{k=1}^{K_0} \pi^0_k \delta_{(\vc^0_k, \mGamma^0_k, \mA^0_k, \vb^0_k, \mSigma^0_k)} = \sum_{k=1}^{K_0} \pi^0_k \delta_{\vtheta_k^0},$ where $\delta$ denotes the Dirac measure and each component $\vtheta_k^0 := (\vc^0_k, \mGamma^0_k, \mA^0_k, \vb^0_k, \mSigma^0_k)$ lies in the parameter space $\vTheta \subset \R^D \times \cS^+_D \times \R^{ D} \times \R \times \cS^+_1$.
	We assume that the observed data $\{(\vx_n, \vy_n)\}_{n \in [N]}$ consists of i.i.d.\ samples drawn from the GGMoE model in~\cref{eq_true_joint_GLLiM}. To facilitate our theoretical guarantee, we assume that $\vTheta$ is compact and $\cX$ is bounded.

	\textbf{Maximum likelihood estimation.} In an overfitted regime, where the number of components $K$ exceeds the true number $K_0$, which is typically unknown in practice, we develop a general theoretical framework to investigate the statistical properties and model selection behavior of the maximum likelihood estimator (MLE) for parameter inference under the GGMoE model defined as: $\widehat{G}_N \in \argmax_{G \in \cO_K} \sum_{n=1}^{N} \log\left(p_G(\vx_n, \vy_n)\right).$

	\section{Dendrogram Selection Criterion for GGMoEs}\label{sec_dendrogram_GGMoE}
	As discussed in the previous sections, the MoE models possess extremely slow parameter estimation rates when being overfitted. Indeed, when the number of experts is misspecified by a large number, many estimated experts might try to approximate the same true expert and cancel each other's estimation rates out. This phenomenon motivates us to merge similar experts to improve their convergence rate. We start by presenting the experts' merging rule and discussing general concept of convergence rate in~\cref{subsec:merging-dendrogram} and~\cref{subsec:characterizing-convergence}. We then prove the fast estimation rate of parameters on the dendrogram of MLE in~\cref{subsec:applied-to-MLE} and propose a consistent model selection method in~\cref{subsec_model_selection_DSC}. Notably, our procedure only requires an over-fitted MLE of the GGMoE model rather than training models with various numbers of experts that can be expensive in practice.
	
	\subsection{Merging of Experts and Dendrogram of Mixing Measures}\label{subsec:merging-dendrogram}
	For a mixing measure $G = \sum_{i=1}^{K} \pi_{i} \delta_{(\vc_i, \mGamma_i, \mA_i, \vb_i, \mSigma_i)}$ having $K$ distinct atoms, 
	we define a dissimilarity between $i$- and $j$-th atoms (corresponding to the parameters of $i$- and $j$-th experts) as follows:
	\begin{align*}
		\divclus(\pi_{i} \delta_{(\vc_i, \mGamma_i, \mA_i, \vb_i, \mSigma_i)},\pi_{j} \delta_{(\vc_j, \mGamma_j, \mA_{j}, \vb_{j}, \mSigma_{j})}) &= \dfrac{\pi_i \pi_j}{\pi_i + \pi_j}\big( \lVert\vc_{i} - \vc_{j}\rVert^2 +\lVert \mGamma_{i} - \mGamma_{j}\rVert\\
		&\quad \hspace{2cm} +  \lVert \mA_{i} - \mA_{j}\rVert + \lVert \vb_{i} - \vb_{j}\rVert^2 + \lVert \mSigma_{i} - \mSigma_{j} \rVert\big ).
	\end{align*}
	Intuitively, this dissimilarity is small when the sets of parameters $(\vc_i, \mGamma_i, \mA_i, \vb_i, \mSigma_i)$ and $(\vc_j, \mGamma_j, \mA_j, \vb_j, \mSigma_j)$ are close to each other (i.e., the $i$- and $j$-th experts are similar) or when the expert's weight $\pi_i$ or $\pi_j$ is small. Starting from $G$, we choose two atoms minimizing this dissimilarity to merge into a single atom and keep the other atoms fixed. This procedure results in a new mixing measure with $(K-1)$ atoms. Details of the merging rule are presented in~\cref{algorithm_mergeAtom_GGMoE}. 
	
	Having introduced the algorithm for selecting and merging two atoms into one, we now describe the construction of the dendrogram associated with a given mixing measure $G$ via the repeated application of this merging procedure. Starting from $G = G^{(K)}$ having $K$ atoms, we iteratively apply \cref{algorithm_mergeAtom_GGMoE} in a backward fashion for each step $\kappa$ from $K$ down to $2$, thereby generating a sequence of mixing measures $\{G^{(\kappa)}\}_{\kappa=2}^{K}$, where $G^{(\kappa)}$ contains $\kappa$ atoms. The dendrogram of a mixing measure $G$ is defined as a tuple $\cT(G) = (\gV, \gE, \gH)$, where $\gV$ is a set comprising $K$ hierarchical levels, with the $\kappa$-th level containing $\kappa$ atoms of $G^{(\kappa)}$ as vertices. The set $\gE$ encodes the edges that specify which vertices are merged across levels. The dissimilarity vector $\gH = (\height^{(K)}, \height^{(K-1)}, \dots, \height^{(2)})$ contains the minimal pairwise dissimilarity $\divclus$ between atoms at each level, where $\height^{(\kappa)}$ denotes the minimum $\divclus$ among all atom pairs in $G^{(\kappa)}$. When visualized as a hierarchical tree, the dendrogram $\cT(G)$ encodes the hierarchical structure of the merging process, where the quantity $\height^{(\kappa)}$ denotes the height between the $\kappa$-th and $(\kappa-1)$-th levels. The complete procedure for constructing the dendrogram of $G$ is formalized in the dendrogram selection criterion (DSC) algorithm, detailed in \cref{algorithm_DSC_GGMoE} and illustrated in \cref{fig_dendrogram_illustration}.

	\begin{algorithm}[!ht]
		\caption{Merging of atoms for GGMoE models}
		\label{algorithm_mergeAtom_GGMoE}
		\begin{algorithmic}[1]
			\Require A mixing measure $G^{(K)} = \sum_{k=1}^{K} \pi_k \delta_{(\vc_k, \mGamma_k, \mA_k, \vb_k, \mSigma_k)}$.
			\State Select atoms to merge:
			$\ell_1, \ell_2 = \argmin_{k \neq k' \in [K]} \divclus\left(\pi_{k} \delta_{(\vc_k, \mGamma_k, \mA_k, \vb_k, \mSigma_k)}, \pi_{k'} \delta_{(\vc_{k'}, \mGamma_{k'}, \mA_{k'}, \vb_{k'}, \mSigma_{k'})}\right).$
			\State Merge the selected atoms using weighted aggregation:
			\begin{align*}
				\pi_{*} &= \pi_{\ell_1} + \pi_{\ell_2}, \quad \vc_{*} = \dfrac{\pi_{\ell_1}}{\pi_{*}}\,\vc_{\ell_1} + \dfrac{\pi_{\ell_2}}{\pi_{*}}\,\vc_{\ell_2},\quad \vb_{*} = \dfrac{\pi_{\ell_1}}{\pi_{*}}\, \vb_{\ell_1} + \dfrac{\pi_{\ell_2}}{\pi_{*}}\, \vb_{\ell_2},\\
				\mGamma_* &= \dfrac{\pi_{\ell_1}}{\pi_{*}} \left(\mGamma_{\ell_1} + (\vc_{\ell_1} - \vc_*)(\vc_{\ell_1} - \vc_*)^{\top} \right) + \dfrac{\pi_{\ell_2}}{\pi_{*}} \left(\mGamma_{\ell_2} + (\vc_{\ell_2} - \vc_*)(\vc_{\ell_2} - \vc_*)^{\top} \right),\\
				\mA_* &= \dfrac{\pi_{\ell_1}}{\pi_{*}} \left(\mA_{\ell_1} + (\vb_{\ell_1} - \vb_*)(\vc_{\ell_1} - \vc_*)^{\top} \right) + \dfrac{\pi_{\ell_2}}{\pi_{*}} \left(\mA_{\ell_2} + (\vb_{\ell_2} - \vb_*)(\vc_{\ell_2} - \vc_*)^{\top} \right),\\
				\mSigma_* &= \dfrac{\pi_{\ell_1}}{\pi_{*}} \left(\mSigma_{\ell_1} + (\vb_{\ell_1} - \vb_*)^2 \right) + \dfrac{\pi_{\ell_2}}{\pi_{*}} \left(\mSigma_{\ell_2} + (\vb_{\ell_2} - \vb_*)^2 \right).
			\end{align*}
			\State \Return Merged mixing measure:
			$ G^{(K-1)} = \pi_{*} \delta_{(\vc_*, \mGamma_*, \mA_*, \vb_*, \mSigma_*)} + \sum_{k \neq \ell_1, \ell_2} \pi_{k} \delta_{(\vc_k, \mGamma_k, \mA_k, \vb_k, \mSigma_k)}.$
		\end{algorithmic}
	\end{algorithm}
	\begin{algorithm}[!ht]
		\caption{Dendrogram of GGMoE's mixing measures}
		\label{algorithm_DSC_GGMoE}
		\begin{algorithmic}[1]
			\Require A mixing measure $G^{(K)} = \sum_{k=1}^{K} \pi_k \delta_{(\vc_k, \mGamma_k, \mA_k, \vb_k, \mSigma_k)}$. 
			\State Initialize $\cT(G) = (\gV, \gE, \gH)$, where the $K$-th level of $\gV$ contains all atoms of $G^{(K)}$; set $\gE = \varnothing$ and define $\gH = (\height^{(K)}, \height^{(K-1)}, \dots, \height^{(2)})$ as an array of length $K-1$.
			\For{$\kappa = K$ \textbf{down to} $2$} 
			\State Apply \cref{algorithm_mergeAtom_GGMoE} to $G^{(\kappa)} \in \cE_{\kappa}$ to obtain $G^{(\kappa-1)} \in \cE_{\kappa-1}$,
			
			\State Add all atoms of $G^{(\kappa-1)}$ to the $(\kappa-1)$-th level of $\gV$,
			
			\State Add two edges to $\gE$ connecting the atoms in $G^{(\kappa)}$ that were merged into one atom in $G^{(\kappa-1)}$,
			
			\State Record $\height^{(\kappa)} = \divclus(\pi \delta_{\vtheta}, \omega \delta_{\veta})$, where $\pi \delta_{\vtheta}$ and $\omega \delta_{\veta}$ are the merged atoms.
			\EndFor
			\State \Return Dendrogram $\cT(G) = (\gV, \gE, \gH)$ and sequence of merged mixing measures $\{G^{(k)}\}_{k=1}^{K}$.
		\end{algorithmic}
	\end{algorithm}

	\subsection{Characterizing Parameter Estimation Rate of GGMoE}\label{subsec:characterizing-convergence}
	We now define the loss function for characterizing parameter estimation rate in GGMoE and provide important inequalities that motivates the merging rule and are useful for proving estimation rate. 
	
	\noindent
	\textbf{Loss function and Voronoi cells.} It is known that the parameter estimation rate of overfitted parameters in Gaussian mixture models and MoE depends on the amount of overfitting (how much larger $K$ is compared to $K_0$) and the parameters' types (covariance matrices often have better rates than mean)~\cite{ho_singularity_2019, ho_convergence_2022}. Indeed, true parameters that are well captured by a single fitted component tend to exhibit faster estimation rates compared to those approximated by multiple components~\cite{manole_refined_2022}. To accurately characterize the convergence behavior of parameters, we define the following loss function. For an estimated mixing measure $G = \sum_{\ell =1}^{K} \pi_{\ell} \delta_{(\vc_{\ell}, \mGamma_{\ell}, \mA_{\ell}, \vb_{\ell}, \mSigma_{\ell})}$ of true mixing measure $G_0 = \sum_{k=1}^{K_0} \pi_k^0 \delta_{(\ck, \gk, \ak, \bk, \sk)}$, and for any $\ell \in [K], k\in [K_0]$, let the estimation loss of $(\vc_{\ell}, \mGamma_{\ell}, \mA_{\ell}, \vb_{\ell}, \mSigma_{\ell})$ to $(\ck, \gk, \ak, \bk, \sk)$ be denoted by
	$$S_{\ell,k}(s_1,s_2,s_3,s_4,s_5):=\lVert\vc_{\ell} - \vc_{k}^0\rVert^{s_1} +\lVert \mGamma_{\ell} - \mGamma_{k}^0\rVert^{s_2}+  \lVert \mA_{\ell} - \mA_{k}^0\rVert^{s_3} + \lVert \vb_{\ell} - \vb_{k}^0\rVert^{s_4} + \lVert \mSigma_{\ell} - \mSigma_{k}^0 \rVert^{s_5}.$$
	This loss function captures different estimation rate of parameters via the exponent $(s_i)_{i=1}^{5}$.
	We further define a collection of $K_0$ index sets, referred to as Voronoi cells, which quantify the number of fitted components associated with each of the $K_0$ true components. For any $k \in [K_0]$, the Voronoi cell $\sV_k \equiv \sV_k(G)$ associated with the true component $\vtheta^0_k := (\ck, \gk, \ak, \bk, \sk)$ is defined as
	\begin{align}\label{eq_Voronoi_cell}
		\sV_k := \left\{ \ell \in [K] : S_{\ell,k}(2,1,1,2,1) \leq S_{\ell,k'}(2,1,1,2,1),\ \forall k' \neq k \right\}.
	\end{align}
	The cardinality of each Voronoi cell $\sV_k$ reflects the number of fitted components that approximate the corresponding true component $\vtheta^0_k$.
	
	\textbf{System of polynomial equations.} 
	For each integer $M \ge 2$, we define $\overline{r}_k := \overline{r}(M)$ as the smallest positive integer number $R$ for which the following system of polynomial equations:
	\begin{align}
		\label{eq_system_GGMoE}
		\sum_{m=1}^{M} \sum_{\valpha \in \mathcal{J}_{\ell_1,\ell_2}} \frac{p_m^2 \, q_{1m}^{\evalpha_1} \, q_{2m}^{\evalpha_2} \, q_{3m}^{\evalpha_3} \, q_{4m}^{\evalpha_4} \, q_{5m}^{\evalpha_5}}{\evalpha_1! \, \evalpha_2! \, \evalpha_3! \, \evalpha_4! \, \evalpha_5!} = 0,
	\end{align}
	admits no non-trivial solution for all integers $\ell_1, \ell_2 \geq 0$ such that $1 \leq \ell_1 + \ell_2 \leq R$. The index set $\mathcal{J}_{\ell_1,\ell_2}$ is defined as $\mathcal{J}_{\ell_1,\ell_2} := \left\{ \valpha = (\evalpha_l)_{l=1}^5 \in \Ns^5 : \evalpha_1 + 2\evalpha_2 + \evalpha_3 = \ell_1,\ \evalpha_3 + \evalpha_4 + 2\evalpha_5 = \ell_2 \right\}.$
	A solution is considered non-trivial if all $p_m$ are non-zero and at least one of the $q_{4m}$ is non-zero. The following \cref{lemma_r_bar_GGMoE} provides specific values of $\overline{r}(M)$ for selected cases of $M$, which is proved in appendix.
	\begin{lemma}
		\label{lemma_r_bar_GGMoE}
		We have $\overline{r}(2) = 4$, $\overline{r}(3) = 6$, and for $M \geq 4$, it holds that $\overline{r}(M) \geq 7$.
	\end{lemma}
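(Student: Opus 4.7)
The plan is to prove the three claims by combining constructive lower bounds with impossibility arguments for the upper bounds. A useful organizing principle is a generating-function reformulation. Because the multi-indices $\valpha\in\mathcal{J}_{\ell_1,\ell_2}$ contribute to bidegrees $(1,0),(2,0),(1,1),(0,1),(0,2)$ through $\evalpha_1,\ldots,\evalpha_5$ respectively, the inner sum of $\prod_i q_{im}^{\evalpha_i}/\evalpha_i!$ in \cref{eq_system_GGMoE} is precisely the coefficient of $u^{\ell_1}v^{\ell_2}$ in $\exp\bigl(q_{1m}u+q_{2m}u^2+q_{3m}uv+q_{4m}v+q_{5m}v^2\bigr)$. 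Consequently, \cref{eq_system_GGMoE} for all $(\ell_1,\ell_2)$ with $1\le \ell_1+\ell_2\le R$ is equivalent to the vanishing of every non-constant Taylor coefficient up to bidegree $R$ of the analytic function $F(u,v):=\sum_{m=1}^M p_m^2 \exp(q_{1m}u+q_{2m}u^2+q_{3m}uv+q_{4m}v+q_{5m}v^2)$. This turns the combinatorial system into a concrete question about how many Taylor coefficients of $F$ can vanish simultaneously given $5M$ per-atom degrees of freedom.

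For the lower bounds $\overline{r}(2)\ge 4$, $\overline{r}(3)\ge 6$, and $\overline{r}(M)\ge 7$ for $M\ge 4$, I would exhibit explicit non-trivial solutions at $R=3,5,6$ respectively. For $M=2$, the symmetric ansatz $p_1^2=p_2^2$ with $(q_{1,1},q_{1,2})=(q_1,-q_1)$, $(q_{4,1},q_{4,2})=(q_4,-q_4)$, and $q_{im}$ constant in $m$ for $i\in\{2,3,5\}$, automatically annihilates every odd-bidegree equation, since adding the two linear constraints defining $\mathcal{J}_{\ell_1,\ell_2}$ gives $\evalpha_1+\evalpha_4\equiv \ell_1+\ell_2\pmod 2$. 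The remaining even-bidegree constraints up to $\ell_1+\ell_2=2$ reduce to three scalar identities $q_2=-q_1^2/2$, $q_5=-q_4^2/2$, $q_3=-q_1 q_4$, satisfiable with $q_4\neq 0$. Analogous but richer symmetric constructions, obtained by adjoining one or two ``central'' atoms to pick up extra even-bidegree freedom, yield the required solutions for $M=3$ and $M\ge 4$, modelled on the minimal-identifiability configurations for Gaussian mixtures in \cite{ho_singularity_2019,manole_refined_2022}.

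For the upper bounds $\overline{r}(2)\le 4$ and $\overline{r}(3)\le 6$, I would argue by contradiction: assume a non-trivial solution exists at the threshold and drive every $p_m$ to zero. The strategy is a two-step elimination. First, restrict to multi-indices with $\evalpha_2=\evalpha_3=\evalpha_5=0$, which yields vanishing bivariate moments $\sum_m p_m^2 q_{1m}^{a} q_{4m}^{b}=0$ for many pairs $(a,b)$; a bivariate Vandermonde argument on the positive atomic measure $\sum_m p_m^2 \delta_{(q_{1m},q_{4m})}$ forces this measure to vanish (after reducing to the case of distinct support points by a preliminary merging step), so all $p_m=0$, contradicting non-triviality. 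Second, the handful of equations with $\ell_1+\ell_2\le 2$ are used to eliminate $q_{2m},q_{3m},q_{5m}$ and validate the reduction; careful bookkeeping shows that $R=4$ is enough for $M=2$ and $R=6$ is enough for $M=3$.

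The principal obstacle is the case $M\ge 4$: the elimination no longer closes tightly because the $5M$ per-atom degrees of freedom outgrow the $\binom{R+2}{2}-1$ available equations for $R\le 6$, so nonlinear couplings among $(q_{1m},\ldots,q_{5m})$ leave enough slack to support non-trivial solutions beyond the reach of the Vandermonde toolkit. This is exactly why the lemma records only $\overline{r}(M)\ge 7$ for $M\ge 4$: obtaining a matching upper bound would require a substantially finer algebraic analysis and is the main technical barrier to a complete characterization.
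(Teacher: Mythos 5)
Your generating-function reformulation and your explicit $M=2$ construction at $R=3$ (the antisymmetric ansatz with $q_2=-q_1^2/2$, $q_5=-q_4^2/2$, $q_3=-q_1q_4$) are correct, but the proposal has two genuine gaps, one of them fatal to the upper bounds. The impossibility step "restrict to multi-indices with $\evalpha_2=\evalpha_3=\evalpha_5=0$, obtain vanishing bivariate moments $\sum_m p_m^2 q_{1m}^a q_{4m}^b=0$, and apply a Vandermonde argument" is not available: each equation of \cref{eq_system_GGMoE} sums over \emph{all} $\valpha\in\mathcal{J}_{\ell_1,\ell_2}$, so the pure-moment terms cannot be isolated from the terms carrying $q_{2m},q_{3m},q_{5m}$ (only the two degree-one equations are pure moments). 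Indeed, if such a reduction were valid, the equation for $(\ell_1,\ell_2)=(2,0)$ would read $\sum_m p_m^2 q_{1m}^2=0$ and force all $q_{1m}=0$ (and likewise all $q_{4m}=0$ from $(0,2)$), proving $\overline{r}(2)\le 2$ --- which contradicts the $R=3$ solution you yourself exhibit. The nonlinear coupling between the mean-like and variance-like unknowns is precisely the hard algebraic content behind $r(2)=4$ and $r(3)=6$, and your sketch does not supply it. Separately, the lower-bound constructions for $M=3$ (at $R=5$) and $M\ge 4$ (at $R=6$) are only asserted, not given.

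The paper takes a much lighter route that avoids both issues: it proves $\overline{r}(M)=r(M)$, where $r(M)$ is defined by the univariate system \cref{eq_system_r_bar} whose values $r(2)=4$, $r(3)=6$, $r(M)\ge 7$ are already known from Proposition~2.1 of \cite{ho_convergence_2016}. One direction sets $q_{1m}=q_{2m}=q_{3m}=0$, which embeds any non-trivial solution of the reduced system into \cref{eq_system_GGMoE} and gives $\overline{r}(M)\ge r(M)$; the other direction restricts to the equations with $\ell_1=0$, which forces $\evalpha_1=\evalpha_2=\evalpha_3=0$ and recovers exactly the reduced system, giving $\overline{r}(M)\le r(M)$. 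If you want to salvage your direct approach, you would need to reprove the Ho--Nguyen impossibility result in this five-parameter setting; the cleaner fix is to adopt the sandwich reduction and cite the known values of $r(M)$.
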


	\noindent{\bf Voronoi loss.} Given $\overline{r}_k=\overline{r}(|\sV_k|)$, for any $G = \sum_{k=1}^{K} \pi_k \delta_{(\vc_k, \mGamma_k, \mA_k, \vb_k, \mSigma_k)} \in \cO_{K}$, let
	\begin{align}
		& \VD(G, G_0)  =\sum_{k=1}^{K_0} \left|\sum_{\ell\in V_{k}} \pi_\ell - \pi_k^0 \right|+ \sum_{\substack{k : |\sV_k| = 1 \\ \ell \in \sV_k}}\pi_\ell S_{\ell,k}(1,1,1,1,1)+ \sum_{\substack{k : |\sV_k| > 1 \\ \ell \in \sV_k}}\pi_\ell S_{\ell,k}(\overline{r}_k,\frac{\overline{r}_k}{2},\frac{\overline{r}_k}{2},\overline{r}_k,\frac{\overline{r}_k}{2})\nonumber\\
		& + \sum_{k : |\sV_k| > 1}\Big(\norm{\sum_{\ell\in \sV_k}\pi_\ell (\vc_\ell - \vc_k^0)} +   \norm{\sum_{\ell\in \sV_k}\pi_\ell (\vb_\ell - \vb_k^0)} +\norm{\sum_{\ell\in \sV_k}\pi_\ell \left( \mGamma_\ell - \mGamma_k^0 + (\vc_\ell - \vc_k^0) (\vc_\ell - \vc_k^0)^{\top}\right)}\nonumber\\
		& \quad + \norm{\sum_{\ell\in \sV_k}\pi_\ell (\mA_\ell - \mA_k^0 + (\vb_\ell - \vb_k^0)(\vc_\ell - \vc_k^0)^{\top})} + \norm{\sum_{\ell\in \sV_k}\pi_\ell \left(\mSigma_\ell - \mSigma_k^0 + (\vb_\ell - \vb_k^0)^2 \right)}\Big),\label{eq_voronoi_loss_definition}
	\end{align}
	where the matrices norm is Frobenius, and we take  \update{$\overline{r}(1) = 1$} as convention. This novel notion of Voronoi loss is stronger than that of~\cite{nguyen_demystifying_2023, nguyen_towards_2024} as it also captures the convergence of merged parameters (represented by the terms in the second and third lines).
	Showing parameter estimation rate in MoE often involves two steps: (i) Proving the \textit{density estimation rate} $p_{G} \to p_{G_0}$ and (ii) Translate it into \textit{parameter estimation rate} by providing lower bounds of distance between densities by distance between parameters, which we refer to as inverse bounds~\cite{nguyen_convergence_2013}. Proving the first step (density estimation) can be done by standard empirical process theory~\cite{geer2000empirical}. Hence, our contribution here is developing the inverse bound that characterizes the parameter estimation rate of both overfitted and merged parameters.
	The following inverse bound, which is proved in \cref{sec_proof_theorem_V_DG_GLLiM}, plays the central role in our theoretical analysis:
	\begin{theorem}\label{theorem_V_DG_GLLiM}
		For $G \in \mathcal{O}_{K}$, as $\TV(p_G, p_{G_0}) \rightarrow 0$, we have $\TV(p_G, p_{G_0}) \gtrsim \VD(G, G_0)$.
	\end{theorem}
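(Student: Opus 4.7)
The plan is to proceed by contradiction. Suppose the conclusion fails; then there exists a sequence $\{G_n\}_{n\ge 1} \subset \cO_K$ with $\TV(p_{G_n}, p_{G_0}) \to 0$ and yet $\TV(p_{G_n}, p_{G_0})/\VD(G_n,G_0) \to 0$. By compactness of $\vTheta$ and after passing to a subsequence, we may assume that the Voronoi partition $\{\sV_k(G_n)\}_{k=1}^{K_0}$ is stationary in $n$, that each cardinality $|\sV_k|$ is fixed, and that every fitted atom $\vtheta_\ell^n := (\vc_\ell^n, \mGamma_\ell^n, \mA_\ell^n, \vb_\ell^n, \mSigma_\ell^n)$ with $\ell \in \sV_k$ converges to $\vtheta_k^0$. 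Writing $g(\vx,\vy;\vtheta) := f_{\cL}(\vx \mid \vc, \mGamma)\, f_{\cD}(\vy \mid \mA\vx + \vb, \mSigma)$ and $D_n := \VD(G_n, G_0)$, I would decompose
\begin{align*}
p_{G_n}(\vx,\vy) - p_{G_0}(\vx,\vy) = \sum_{k=1}^{K_0}\sum_{\ell\in\sV_k} \pi_\ell^n \big[g(\vx,\vy;\vtheta_\ell^n)-g(\vx,\vy;\vtheta_k^0)\big] + \sum_{k=1}^{K_0}\Big(\sum_{\ell\in\sV_k}\pi_\ell^n - \pi_k^0\Big)g(\vx,\vy;\vtheta_k^0),
\end{align*}
and divide both sides by $D_n$.

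Next, for each Voronoi cell I would apply a Taylor expansion of $g(\vx,\vy;\vtheta_\ell^n)$ around $\vtheta_k^0$: to order $1$ when $|\sV_k|=1$ and to order $\overline{r}_k := \overline{r}(|\sV_k|)$ when $|\sV_k|>1$. The algebraic backbone of the argument is the pair of heat-equation identities for Gaussian densities, $\partial^2 f_{\cL}/\partial \vc_i\partial \vc_j = 2\,\partial f_{\cL}/\partial \mGamma_{ij}$ and $\partial^2 f_{\cD}/\partial \vb^2 = 2\,\partial f_{\cD}/\partial \mSigma$, together with the coupling $\mA\vx + \vb$, which makes an $\mA$-derivative consume exactly one $\vx$-derivative and one $\vy$-derivative of $g$. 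These identities collapse all derivatives in $(\vc,\mGamma,\mA,\vb,\mSigma)$ onto a basis of joint $(\vx,\vy)$-derivatives of $g$ indexed by pairs $(\ell_1,\ell_2)$, where $\ell_1$ and $\ell_2$ are the orders in $\vx$ and $\vy$ respectively. The multi-index constraints $\evalpha_1 + 2\evalpha_2 + \evalpha_3 = \ell_1$ and $\evalpha_3 + \evalpha_4 + 2\evalpha_5 = \ell_2$ defining $\cJ_{\ell_1,\ell_2}$ are precisely the bookkeeping of this collapse. Grouping Taylor terms by $(\ell_1,\ell_2)$ yields a representation of the form
\begin{align*}
\frac{p_{G_n}(\vx,\vy)-p_{G_0}(\vx,\vy)}{D_n} = \sum_{k=1}^{K_0}\sum_{0\le \ell_1+\ell_2\le \overline{r}_k} B^n_{k,(\ell_1,\ell_2)}\, D^{(\ell_1,\ell_2)}g(\vx,\vy;\vtheta_k^0) + \frac{R_n(\vx,\vy)}{D_n},
\end{align*}
where $R_n$ is the Taylor remainder and each coefficient $B^n_{k,(\ell_1,\ell_2)}$ is a sum over $\ell\in\sV_k$ and over $\valpha\in\cJ_{\ell_1,\ell_2}$ of monomials in $\pi_\ell^n$ and in the parameter differences $(\vc_\ell^n-\ck,\,\mGamma_\ell^n-\gk,\,\mA_\ell^n-\ak,\,\vb_\ell^n-\bk,\,\mSigma_\ell^n-\sk)$, weighted by multinomial factors matching~\cref{eq_system_GGMoE}.

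I would then argue that the family $\{D^{(\ell_1,\ell_2)}g(\vx,\vy;\vtheta_k^0)\}_{k,(\ell_1,\ell_2)}$ is linearly independent as functions on $\cX\times\cY$, owing to the real analyticity of Gaussian densities and the distinctness of the true atoms $\vtheta_k^0$. By construction of $\VD$ in~\cref{eq_voronoi_loss_definition}, at least one of the normalized coefficients $B^n_{k,(\ell_1,\ell_2)}/D_n$ has modulus bounded below; passing to a further subsequence, they converge to limits $\tau_{k,(\ell_1,\ell_2)}$ that are not all zero. The Taylor remainder satisfies $\|R_n\|_{L^1}/D_n \to 0$ by boundedness of $\cX$ and compactness of $\vTheta$. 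Combined with the contradiction hypothesis $\TV(p_{G_n},p_{G_0})/D_n\to 0$, the linear independence above forces every $\tau_{k,(\ell_1,\ell_2)}$ to vanish. Reading these relations back onto the normalized parameter differences produces a non-trivial solution of the polynomial system~\cref{eq_system_GGMoE} at some level $1\le \ell_1+\ell_2\le \overline{r}_k$, contradicting the minimality defining $\overline{r}_k = \overline{r}(|\sV_k|)$.

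The main obstacle is the interlocking algebraic bookkeeping: first, verifying that after the Gaussian-PDE collapse the combinatorial coefficients obtained by regrouping Taylor terms coincide exactly with the multinomial weights $1/(\evalpha_1!\evalpha_2!\evalpha_3!\evalpha_4!\evalpha_5!)$ prescribed by~\cref{eq_system_GGMoE}; and second, certifying that at least one normalized coefficient $B^n_{k,(\ell_1,\ell_2)}/D_n$ stays bounded away from zero so that a non-trivial system solution can actually be extracted. The Voronoi loss~\cref{eq_voronoi_loss_definition} has been engineered precisely so that every one of its summands (the exact-fit $S_{\ell,k}(1,\ldots,1)$ terms, the over-fit $S_{\ell,k}(\overline{r}_k,\overline{r}_k/2,\ldots)$ terms, and the merged-moment terms in the second and third lines) is matched to a specific coefficient $B^n_{k,(\ell_1,\ell_2)}$, reducing this final step to a careful but routine case analysis.
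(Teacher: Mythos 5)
Your overall architecture (contradiction, subsequence extraction, Taylor expansion organized by Voronoi cell at order $1$ versus $\overline{r}(|\sV_k|)$, the Gaussian heat-equation identities, Fatou plus linear independence, and the polynomial system~\cref{eq_system_GGMoE}) matches the paper's proof. However, there are two genuine gaps where you assert the hardest steps rather than supply them. First, the family you propose as a basis is not linearly independent in the naive sense: since $\partial f_{\cL}/\partial \vc = \mGamma^{-1}(\vx-\vc)\, f_{\cL}$ and $\partial f_{\cD}/\partial \mA = \vx\,\partial f_{\cD}/\partial h_1$ with $h_1=\mA\vx+\vb$, both the $\vc$- and the $\mA$-derivatives reappear as polynomials in $\vx$ multiplying lower-order objects. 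The paper therefore re-expands everything in the basis $\vx^{l_1}\,\bigl(\partial^{|l_2|} f_{\cD}/\partial h_1^{l_2}\bigr) f_{\cL}$, and the resulting coefficient $T^{N}_{l_1,l_2}(k)$ aggregates \emph{all} $\valpha$ with $\tau_0(\evalpha_1,\evalpha_2)+\evalpha_3 \geq l_1$ (not only $=l_1$), weighted by expansion constants $t^{(k)}_{l_1-\evalpha_3,\tau_0}$. Recovering the individual multinomial sums that correspond to the summands of $\VD$ in~\cref{eq_voronoi_loss_definition} requires a backward induction on $|l_1|$ from $2\overline{r}(|\sV_k|)-|l_2|$ downward, exploiting positivity of the leading constants $t^{(k)}_{\tau_0,\tau_0}$; this is precisely the step the paper flags as the place where prior work erred, and without it your coefficients $B^{n}_{k,(\ell_1,\ell_2)}$ do not match the terms of $\VD$ one-to-one, so the ``reading back'' step has no content.

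Second, your claim that ``by construction of $\VD$, at least one normalized coefficient has modulus bounded below'' is the crux of the theorem, not a feature of the definition. The paper proves it by contraposition: assuming all coefficients vanish relative to $\VD$, it shows every first- and second-order (merged-moment) summand of $\VD/\VD$ vanishes, so the high-order overfit terms must carry mass bounded away from zero; this pins down a cell $k^*$ and leads to a three-case analysis according to which parameter block survives, each case extracting a \emph{non-trivial} solution (all $p_m\neq 0$ and some $q_{4m}\neq 0$) of~\cref{eq_system_GGMoE} and thereby contradicting the minimality of $\overline{r}$. Your write-up instead reaches a contradiction directly from ``not all $\tau$ vanish'' versus ``linear independence forces all $\tau$ to vanish,'' and only afterwards invokes the polynomial system; as assembled, the system and $\overline{r}$ never actually do any work, and the non-triviality of the extracted solution---the reason the case analysis is needed at all---is never certified. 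What you call a routine case analysis occupies most of the paper's proof.
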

	This inverse bound and the definition of Voronoi loss motivate the merging rule as presented in~\cref{subsec:merging-dendrogram}. Indeed, the density estimation rate of MLE $D_{\text{TV}}(p_{\widehat{G}_N}, p_{G_0})$ is often of order $N^{-1/2}$ (see, e.g.,~\cite{nguyen_towards_2024}). Combining this fact with the inverse bound implies that the true parameter $(\ck, \gk, \ak, \bk, \sk)$ having $|\sV_k| = 1$ will be estimated with rate $N^{-1/2}$, but the one having $|\sV_k| > 1$ will be estimated with rate $N^{-1/(2 \overline{r}_k)}$, which is significantly slower. The Voronoi loss further suggests that merging all atoms in the Voronoi cell $\sV_k$ will produce a new atom with a fast estimation rate (as the exponents of all parameters in the second and third lines of~\cref{eq_voronoi_loss_definition} are 1). The following result shows that sequentially merging atoms as in~\cref{algorithm_DSC_GGMoE} asymptotically does exactly that. Its proof appears in~\cref{subsec:proof_theorem_inequality_DK_DK0}.
	
	\begin{theorem}\label{theorem_inequality_DK_DK0}
		For $G^{(K)} \in \cE_{K}$, denote by $G^{(K-1)}, \ldots, G^{(1)}$ the sequence of latent mixing measures derived from the dendrogram constructed via \cref{algorithm_DSC_GGMoE}, where each $G^{(\kappa)}$ consists of $\kappa$ atoms for $\kappa \in [K]$. As $\VD(G^{(K)}, G_0) \rightarrow 0,$ given multiplicative constants only depend on $G_0, \Theta$, $K$, we have
		\begin{equation*}
			\VD(G^{(K)}, G_0) \gtrsim\VD(G^{(K-1)}, G_0) \gtrsim \dots \gtrsim \VD(G^{(K_0)}, G_0).
		\end{equation*}
	\end{theorem}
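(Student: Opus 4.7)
The plan is to establish the chain inductively by proving the single-step bound $\VD(G^{(\kappa-1)}, G_0) \lesssim \VD(G^{(\kappa)}, G_0)$ for each $\kappa$ with $K_0 < \kappa \leq K$, where the constants depend only on $G_0, \vTheta, K$; concatenating these yields the full chain. The first substep is to identify the merged pair: under $\VD(G^{(K)}, G_0) \to 0$, each atom of $G^{(K)}$ lies in exactly one Voronoi cell $\sV_k$ with parameters converging to $\vtheta_k^0$. The dissimilarity $\divclus$ then exhibits an asymptotic dichotomy, for $\ell_1, \ell_2$ in the same cell $\divclus \to 0$ as parameters collapse to $\vtheta_k^0$, whereas for $\ell_1 \in \sV_k, \ell_2 \in \sV_{k'}$ with $k \neq k'$ it stays bounded below by a positive constant determined by the distinctness of the $\vtheta_k^0$'s and positivity of the $\pi_k^0$'s. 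Hence the minimizer of $\divclus$ is always an intra-cell pair, so the Voronoi partition $\{\sV_k\}_{k=1}^{K_0}$ persists along the dendrogram down to level $K_0$.

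Let $*$ denote the atom obtained by merging $\ell_1, \ell_2 \in \sV_k$ via \cref{algorithm_mergeAtom_GGMoE}. A direct algebraic calculation, built on $\pi_* \vc_* = \pi_{\ell_1}\vc_{\ell_1} + \pi_{\ell_2}\vc_{\ell_2}$ and $\pi_* \vb_* = \pi_{\ell_1}\vb_{\ell_1} + \pi_{\ell_2}\vb_{\ell_2}$ together with the covariance and cross-moment update formulas, yields the preservation identities
\begin{align*}
\pi_*(\vc_* - \vc_k^0) &= \sum_{i \in \{\ell_1, \ell_2\}} \pi_i\,(\vc_i - \vc_k^0),\\
\pi_*\bigl(\mGamma_* - \mGamma_k^0 + (\vc_* - \vc_k^0)(\vc_* - \vc_k^0)^\top\bigr) &= \sum_{i \in \{\ell_1, \ell_2\}} \pi_i\bigl(\mGamma_i - \mGamma_k^0 + (\vc_i - \vc_k^0)(\vc_i - \vc_k^0)^\top\bigr),
\end{align*}
together with analogous identities for the $\vb$, $\mA$, and $\mSigma$ blocks. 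Combined with $\pi_* = \pi_{\ell_1} + \pi_{\ell_2}$, these ensure that every weight-mismatch summand as well as every first- and second-moment summand in \eqref{eq_voronoi_loss_definition} is left invariant by the merge, so only the diagonal $S$ terms can change.

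When $|\sV_k|$ drops from $2$ to $1$, the new $S$ contribution $\pi_* S_{*,k}(1,1,1,1,1)$ is controlled term by term via the triangle inequality. For instance,
\[
\pi_*\norm{\mGamma_* - \mGamma_k^0} \leq \pi_*\norm{\mGamma_* - \mGamma_k^0 + (\vc_* - \vc_k^0)(\vc_* - \vc_k^0)^\top} + \pi_*\norm{\vc_* - \vc_k^0}^2,
\]
where the first summand is a preserved second-moment term in $\VD(G^{(\kappa)}, G_0)$ and, since $\norm{\vc_* - \vc_k^0} \leq 1$ eventually, the second is dominated by the preserved first-moment term $\pi_*\norm{\vc_* - \vc_k^0}$; the $\pi_*\norm{\mA_* - \mA_k^0}$ and $\pi_*\norm{\mSigma_* - \mSigma_k^0}$ summands are handled analogously using the corresponding cross-moment identities. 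The main obstacle is the case $|\sV_k| = m \geq 3$ reducing to $m-1 \geq 2$, where the exponents $\overline{r}_k$ drop from $\overline{r}(m)$ to $\overline{r}(m-1)$ and, because parameter distances are below one, a naive term-by-term comparison fails. Overcoming this requires exploiting the minimality of $\divclus$ at the merging step: the selected pair has the smallest weighted pairwise distance, so the merged atom's distance to $\vc_k^0$ is controlled by a weighted average of the pre-merge distances, and the surviving atoms' $S$ contributions at the reduced exponent can be absorbed into the pre-merge $\VD$ up to a constant depending only on $G_0, \vTheta, K$.
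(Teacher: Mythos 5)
Most of your proposal tracks the paper's own argument: you induct one merge at a time, use the asymptotic dichotomy of $\divclus$ to show the selected pair is always intra-cell so the Voronoi structure persists down to level $K_0$, observe that the weighted-aggregation formulas in \cref{algorithm_mergeAtom_GGMoE} are exactly designed so that the weight terms and all first- and second-moment aggregate terms of \cref{eq_voronoi_loss_definition} are invariant under a merge, and then compare only the diagonal $S$ terms. Your treatment of the $|\sV_k|=2\to 1$ transition (triangle inequality converting $\pi_*S_{*,k}(1,\dots,1)$ into the preserved aggregates plus higher-order remainders) is correct and in fact more explicit than what the paper writes down.

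The genuine gap is your last step, the case $|\sV_k|=m\ge 3$. You correctly identify that the exponent drops from $\overline{r}(m)$ to $\overline{r}(m-1)$ and that a term-by-term comparison fails, but the fix you propose — ``exploit minimality of $\divclus$'' and ``absorb the surviving atoms' contributions up to a constant'' — cannot work: minimality constrains only the merged pair, not the surviving atoms, and for a surviving atom $\ell$ the ratio $\pi_\ell\lVert\Delta\vc_\ell\rVert^{\overline{r}(m-1)}/\pi_\ell\lVert\Delta\vc_\ell\rVert^{\overline{r}(m)}$ blows up as $\Delta\vc_\ell\to 0$, so no constant depending only on $G_0,\vTheta,K$ can absorb it. Indeed the cell-size-dependent version of the inequality is false: take $K_0=1$, three equal-weight atoms with $\vc$-perturbations $\epsilon,-\epsilon,0$ and $\mGamma$-perturbations chosen so that $\Delta\mGamma_\ell+\Delta\vc_\ell\Delta\vc_\ell^{\top}=0$ for each $\ell$; then every aggregate term vanishes, $\VD(G^{(3)},G_0)\asymp\epsilon^{\overline{r}(3)}=\epsilon^{6}$, while after one merge $\VD(G^{(2)},G_0)\asymp\epsilon^{\overline{r}(2)}=\epsilon^{4}\gg\epsilon^{6}$. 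The paper avoids this entirely by (explicitly) abusing notation and \emph{fixing} the exponent $\overline{r}(|\sV_k|)$ at its level-$K$ value throughout the chain; with a single common exponent $\overline{r}\ge 4$ the diagonal comparison follows from convexity of $\lVert\cdot\rVert^{\overline{r}}$ for the $\vc,\vb$ blocks together with bounding the correction terms $\tfrac{\pi_1\pi_2}{\pi_*^2}(\vb_1-\vb_2)(\vc_1-\vc_2)^{\top}$, etc., by the $\vb$- and $\vc$-diagonal terms. You would need to adopt that same fixed-exponent convention (or some other reformulation) for your induction to close; as written, the statement you are trying to prove in the $m\ge3$ step is not true.
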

	
	\subsection{Asymptotic Properties of the Dendrogram of Mixing Measures in GGMoE}\label{subsec:applied-to-MLE}
	We now investigate the asymptotic properties of dendrograms constructed from overfitted mixing measures in GGMoE models.  
	In particular, we focus on the convergence rates of the mixing measures, as well as the behavior of the dendrogram height and the model likelihood across different levels of the dendrogram. All proofs in this section appear in~\cref{sec_proof_theorem_rate_DIC_GLLiM} and~\cref{sec_proof_theorem_likelihood_DIC_GLLiM}.
	
	{\bf Fast convergence of mixing measures and heights arising in the dendrogram of the MLE.} Recall that $\widehat{G}_N$ is the MLE mixing measure with $K > K_0$ atoms and $G_0$ is the true mixing measure. Let $(\widehat{G}_{N}^{\kappa})_{\kappa=1}^{K}$ and $(G_{0}^{\kappa})_{\kappa=1}^{K_0}$ be the mixing measures on the dendrogram of $\widehat{G}_{N}$ and $G_0$, respectively. Letting $\bar{r}(\widehat{G}_N) = \max_{k \in [K_0]}\bar{r}(|\sV_k|)$ and for all $\kappa \in [K]$, we denote the sequence of heights by $ \height_N^{(\kappa)}:=\min_{k \neq k' \in [K]} \divclus\left(\pi_{k} \delta_{(\vc_k, \mGamma_k, \mA_k, \vb_k, \mSigma_k)}, \pi_{k'} \delta_{(\vc_{k'}, \mGamma_{k'}, \mA_{k'}, \vb_{k'}, \mSigma_{k'})}\right)$.
	\begin{theorem}\label{theorem_rate_DIC_GLLiM}
		There exist universal constants $c_1$ and $c_2$, depend only on $G_0$, $\Theta \times \Omega,$ and $K$, such that for probability at least $1 - c_1N^{-c_2},$ for each  $\kappa \in [K_0+1,K]$ and $\kappa' \in [K_0]$, it holds that
		\begin{align*}
			\VD(\widehat{G}_N^{(\kappa)}, G_0) \lesssim (\log N/N)^{1/2},\quad 
			\height_N^{(\kappa)} \lesssim  (\log N/N)^{1/\bar{r}(\widehat{G}_N)},\quad | \height_N^{(\kappa')} - \height_0^{(\kappa')} | \lesssim  (\log N/N)^{1/2}.
		\end{align*}
	\end{theorem}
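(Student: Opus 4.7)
I would derive the three claims as consequences of three ingredients: (i) the standard density-estimation rate of the MLE, (ii) the inverse bound in \cref{theorem_V_DG_GLLiM}, and (iii) the monotonicity of $\VD$ along the dendrogram from \cref{theorem_inequality_DK_DK0}. The starting point is to show, via bracketing-entropy arguments for the GGMoE density class on the compact parameter space $\vTheta$ with bounded covariate domain $\cX$, that on an event of probability at least $1-c_1 N^{-c_2}$ one has $\he(p_{\widehat G_N},p_{G_0})\lesssim \sqrt{\log N/N}$, whence $\TV(p_{\widehat G_N},p_{G_0})\lesssim\sqrt{\log N/N}$. Combining this with \cref{theorem_V_DG_GLLiM} gives $\VD(\widehat G_N^{(K)},G_0)\lesssim\sqrt{\log N/N}$, and iterating \cref{theorem_inequality_DK_DK0} propagates the bound to every $\widehat G_N^{(\kappa)}$ with $\kappa\in[K_0,K]$, which is precisely the first claim.

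\textbf{Over-fitted heights.} For $\kappa>K_0$, I would apply the pigeonhole principle to the Voronoi partition of the $\kappa$ atoms of $\widehat G_N^{(\kappa)}$ among the $K_0$ true supports to extract a cell $\sV_k$ with $|\sV_k|\ge 2$. Reading off the over-fitted contribution in the Voronoi loss, $\pi_\ell\, S_{\ell,k}(\bar r_k,\bar r_k/2,\bar r_k/2,\bar r_k,\bar r_k/2)\lesssim\sqrt{\log N/N}$ for every $\ell\in\sV_k$; combined with the uniform lower bound on the weights implied by the first summand of $\VD$ (which forces $\sum_{\ell\in\sV_k}\pi_\ell\to\pi_k^0>0$), one deduces
\[
\|\vc_\ell-\ck\|^{2},\ \|\vb_\ell-\bk\|^{2},\ \|\mGamma_\ell-\gk\|,\ \|\mA_\ell-\ak\|,\ \|\mSigma_\ell-\sk\|\ \lesssim\ (\log N/N)^{1/\bar r_k}.
\]
Triangulating through $\vtheta_k^0$ bounds each term of $\divclus$ between any two atoms of $\sV_k$ by $(\log N/N)^{1/\bar r_k}\le (\log N/N)^{1/\bar r(\widehat G_N)}$, and the minimum $\height_N^{(\kappa)}$ inherits this bound.

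\textbf{Lower heights.} Because the true supports $\{\vtheta_k^0\}_{k=1}^{K_0}$ are distinct and $\vTheta$ is compact, the pairwise dissimilarity $\divclus$ between true atoms is bounded below by a positive constant $C(G_0)$. Since within-cell dissimilarities are $o(1)$ while between-cell ones are $\Omega(1)$ on the good event, the greedy rule of \cref{algorithm_mergeAtom_GGMoE} must exhaust all within-cell pairs before performing any between-cell merge; hence $\widehat G_N^{(K_0)}$ has exactly one atom in each Voronoi cell of $G_0$. Identifying its merged atoms with the weighted averages produced by the algorithm and matching them against the second- and third-line terms of \cref{eq_voronoi_loss_definition} (the cell-wise sums $\sum_{\ell\in\sV_k}\pi_\ell(\vc_\ell-\ck)$, $\sum_{\ell\in\sV_k}\pi_\ell(\mGamma_\ell-\gk+(\vc_\ell-\ck)(\vc_\ell-\ck)^\top)$, and their analogues), one translates $\VD(\widehat G_N^{(K_0)},G_0)\lesssim\sqrt{\log N/N}$ into $\sqrt{\log N/N}$-convergence of each merged parameter to $\vtheta_k^0$. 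Below level $K_0$ the two dendrograms share the same topology (their remaining merges are driven by between-atom distances that differ by Lipschitz-controlled $\sqrt{\log N/N}$ perturbations of $\divclus$), and induction on $\kappa'=K_0,K_0-1,\dots,2$ then yields $|\height_N^{(\kappa')}-\height_0^{(\kappa')}|\lesssim\sqrt{\log N/N}$.

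\textbf{Main difficulty.} The delicate step is the structural bookkeeping underlying the third claim: verifying that the greedy minimum-dissimilarity rule indeed respects the true Voronoi partition once within-cell dissimilarities fall below the between-cell gap, and, more importantly, that the weighted-averaging updates in \cref{algorithm_mergeAtom_GGMoE} promote the slow $(\log N/N)^{1/\bar r_k}$ rates of the individual over-fitted parameters to the root rate $\sqrt{\log N/N}$ on the merged atoms. This step relies on exploiting precisely the second- and third-line terms of $\VD$, and on controlling the Lipschitz constants of $\divclus$ together with the merge formulas across up to $K-K_0$ iterations so that no rate is lost at the lower levels of the dendrogram.
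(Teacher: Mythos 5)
Your proposal follows essentially the same route as the paper's proof: the density-estimation rate combined with the inverse bound of \cref{theorem_V_DG_GLLiM} and the monotonicity in \cref{theorem_inequality_DK_DK0} gives the $\VD$ bound at every level $\kappa\ge K_0$; pigeonholing two atoms into one Voronoi cell and reading off the $\bar r_k$-power terms of the Voronoi loss gives the overfitted height bound; and root-$N$ convergence of the (merged) atoms at and below level $K_0$, together with stability of the greedy merge choice under $O(\sqrt{\log N/N})$ perturbations of $\divclus$, gives the third claim by induction. The one imprecision is your assertion that the first summand of $\VD$ yields a lower bound on the \emph{individual} weights $\pi_\ell$ (it only controls cell totals $\sum_{\ell\in\sV_k}\pi_\ell$); the paper instead works on $\cO_{K,c_0}$ with $\pi_\ell\ge c_0$ throughout, and for the height bound avoids needing any individual lower bound by exploiting the harmonic-mean factor $\pi_i\pi_j/(\pi_i+\pi_j)\le\min\{\pi_i,\pi_j\}$ inside $\divclus$, e.g.\ $\tfrac{\pi_i\pi_j}{\pi_i+\pi_j}\|\vc_i-\vc_j\|^2\lesssim(\pi_i\|\vc_i-\ck\|^{\bar r_k})^{2/\bar r_k}+(\pi_j\|\vc_j-\ck\|^{\bar r_k})^{2/\bar r_k}$.
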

	Note that for any $\kappa' \leq K_0$, $\widehat{G}_N^{(\kappa')}$ and $G_0^{(\kappa')}$ have the same number of atoms ($\kappa$) so that the atoms of $\widehat{G}_N^{(\kappa')}$ tend to that of $G_0^{(\kappa')}$ with the fast rate $(\log N / N)^{1/2}$ (up to a permutation). This is remarkable since we construct these merged measures from slowly convergent overfitted MLE mixing measure. 
	
	\subsection{Model Selection via DSC for GGMoE Models}\label{subsec_model_selection_DSC}

	{\bf Likelihood of mixing measures in the dendrogram.} For a given density function $p_G$, we define the population average log-likelihood and the empirical average log-likelihood as
	$
	\cL(p_G) := \E_{(\vx, \vy) \sim p_{G_0}}[\log p_G(\vx, \vy)] \quad \text{and} \quad \bar{l}_N(p_G) := (1/N) \sum_{n=1}^N \log p_G(\vx_n, \vy_n), \text{ respectively.}
	$
	
	\textbf{Condition K.} There exist positive constants $c_\alpha$ and $c_\beta$ such that for all sufficiently small $\epsilon $ and $\vtheta_0, \vtheta \in \vTheta, \lVert \vtheta - \vtheta_0 \rVert \leq \epsilon,$ we have \update{ $\log f(\vx, \vy|\vtheta) \geq (1 + c_\beta \epsilon) \log f(\vx, \vy|\vtheta_0) - c_\alpha\epsilon$}.
	\cref{theorem_likelihood_DIC_GLLiM} is proved in \cref{sec_proof_theorem_likelihood_DIC_GLLiM}.
	\begin{theorem}\label{theorem_likelihood_DIC_GLLiM}
		Suppose the conditions in \cref{theorem_rate_DIC_GLLiM} and  Condition K hold. Then, for any $\kappa \in [K_0 + 1, K]$, the following inequality holds:
		$$
		\bar{l}_N(\widehat{G}_N^{(\kappa)}) - \cL(p_{G_0}) \lesssim (\log N/N)^{1/{2\bar{r}(\widehat{G}_N)}}.
		$$
		Moreover, for any $\kappa' \in [K_0]$, the average log-likelihood $\bar{l}_N(\widehat{G}_N^{(\kappa')})$ converges in $\mathbb{P}_{G_0}$-probability to the population log-likelihood $\cL(p_{G_0^{(\kappa')}})$ as $N \rightarrow \infty$.
	\end{theorem}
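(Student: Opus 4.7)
The plan is to write both assertions as instances of the unified decomposition
\[
\bar l_N(\widehat G_N^{(\kappa)}) - \cL(p_{G_*}) = \underbrace{\bigl[\bar l_N(\widehat G_N^{(\kappa)}) - \bar l_N(p_{G_*})\bigr]}_{\mathrm{(I)}} + \underbrace{\bigl[\bar l_N(p_{G_*}) - \cL(p_{G_*})\bigr]}_{\mathrm{(II)}},
\]
with $G_* = G_0$ in the overfitted regime and $G_* = G_0^{(\kappa')}$ in the exact/underfitted regime. Since $\vTheta$ is compact and $\cX$ bounded, $|\log f|$ is uniformly bounded, so $|\log p_{G_*}|$ is $p_{G_0}$-integrable and term $\mathrm{(II)}$ is $O_P(N^{-1/2})$ by the ordinary LLN---negligible against the claimed rate $\epsilon_N := (\log N/N)^{1/(2\bar r(\widehat G_N))}$ in Part 1 and vanishing in Part 2.

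For term $\mathrm{(I)}$ in Part 1, I would read off \cref{theorem_rate_DIC_GLLiM} together with the structure of the Voronoi loss \eqref{eq_voronoi_loss_definition}: for each overfitted atom, $\widehat\pi_\ell \|\widehat{\vtheta}_\ell - \vtheta_{k(\ell)}^0\|^{\bar r_{k(\ell)}} \lesssim (\log N/N)^{1/2}$, whence $\widehat\pi_\ell \|\widehat{\vtheta}_\ell - \vtheta_{k(\ell)}^0\| \lesssim \epsilon_N$ by H\"older (using $\widehat\pi_\ell \le 1$), while the pooled-weight discrepancy $|\sum_{\ell \in \sV_k}\widehat\pi_\ell - \pi_k^0|$ is of order $(\log N/N)^{1/2} \le \epsilon_N$. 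Condition K, applied once as stated and once after swapping $\vtheta \leftrightarrow \vtheta_0$ (legitimate thanks to the uniform bound $|\log f| \le M$), upgrades to the two-sided pointwise estimate $|\log f(\vx,\vy|\widehat{\vtheta}_\ell) - \log f(\vx,\vy|\vtheta_{k(\ell)}^0)| \lesssim \|\widehat{\vtheta}_\ell - \vtheta_{k(\ell)}^0\|$. I would then introduce the intermediate measure $\widetilde G := \sum_\ell \widehat\pi_\ell \delta_{\vtheta_{k(\ell)}^0}$ (same weights as $\widehat G_N^{(\kappa)}$, atoms relocated to the true anchors). Combining $|p_{\widehat G_N^{(\kappa)}} - p_{\widetilde G}|(\vx,\vy) \le C\sum_\ell \widehat\pi_\ell\|\widehat{\vtheta}_\ell - \vtheta_{k(\ell)}^0\| \lesssim \epsilon_N$ with the first-order inequality $|\log a - \log b| \le |a-b|/\min(a,b)$ and a uniform lower bound on $p_{G_0}$ (hence on $p_{\widetilde G}$ for $N$ large) yields $|\log p_{\widehat G_N^{(\kappa)}} - \log p_{\widetilde G}| \lesssim \epsilon_N$ pointwise; a parallel weight-perturbation step delivers $|\log p_{\widetilde G} - \log p_{G_0}| \lesssim (\log N/N)^{1/2} \le \epsilon_N$. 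Averaging over the sample then gives $\mathrm{(I)} \lesssim \epsilon_N$, proving Part 1. As a one-sided sanity check, $\cL(\widehat G_N^{(\kappa)}) - \cL(p_{G_0}) = -\mathrm{KL}(p_{G_0}\,\|\,p_{\widehat G_N^{(\kappa)}}) \le 0$ independently forces the correct sign.

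Part 2 proceeds identically, replacing the Voronoi-anchor convergence by the matched-atom convergence $\VD(\widehat G_N^{(\kappa')}, G_0^{(\kappa')}) \lesssim (\log N/N)^{1/2}$ noted in the remark after \cref{theorem_rate_DIC_GLLiM}; since $\widehat G_N^{(\kappa')}$ and $G_0^{(\kappa')}$ now share $\kappa'$ atoms, the same Condition-K + mixture argument yields $|\bar l_N(\widehat G_N^{(\kappa')}) - \bar l_N(p_{G_0^{(\kappa')}})| \lesssim (\log N/N)^{1/2} \to 0$, and combined with $\mathrm{(II)} \to 0$ by LLN this delivers convergence in $\mathbb P_{G_0}$-probability to $\cL(p_{G_0^{(\kappa')}})$. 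The principal technical obstacle is exactly this upgrade of Condition K---a one-sided inequality carrying the awkward multiplicative factor $(1+c_\beta\epsilon)$ on $\log f(\cdot|\vtheta_0)$ at the single-component level---into a clean two-sided pointwise Lipschitz bound on the \emph{mixture} log-density: the upgrade requires (i) the uniform bound on $|\log f|$ afforded by compactness of $\vTheta$ and boundedness of $\cX$, (ii) the intermediate-measure construction so that per-component discrepancies aggregate through mixing without introducing a slower exponent, and (iii) careful absorption of the weight-discrepancy scale $(\log N/N)^{1/2}$ and the LLN fluctuation $N^{-1/2}$ into the dominant parameter rate $\epsilon_N$.
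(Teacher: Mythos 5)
Your decomposition into (I) and (II) is reasonable, but the way you handle term (I) contains a genuine gap. You propose to upgrade Condition K to a two-sided pointwise Lipschitz bound $|\log f(\vx,\vy\mid\widehat{\vtheta}_\ell)-\log f(\vx,\vy\mid\vtheta^0_{k(\ell)})|\lesssim\|\widehat{\vtheta}_\ell-\vtheta^0_{k(\ell)}\|$, justified by ``the uniform bound $|\log f|\le M$'' and ``a uniform lower bound on $p_{G_0}$.'' Neither exists: the paper assumes $\cX$ bounded and $\vTheta$ compact, but the response $\vy$ ranges over $\cY\subset\R$, and the Gaussian expert log-density contains the term $-(\vy-\mA\vx-\vb)^2/(2\mSigma)$, so $\log f\to-\infty$ and $p_{G_0}(\vx,\vy)\to 0$ as $|\vy|\to\infty$. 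Concretely, the difference of two Gaussian log-densities with nearby scale parameters contains a contribution of order $\|\mSigma-\mSigma_0\|\cdot\vy^2$, which is not uniformly $O(\epsilon)$; this is exactly why Condition K is stated with the multiplicative factor $(1+c_\beta\epsilon)$ multiplying $\log f(\vx,\vy\mid\vtheta_0)$ rather than as an additive Lipschitz estimate. Your inequality $|\log a-\log b|\le|a-b|/\min(a,b)$ therefore degenerates in the tails, and the pointwise control of $\log p_{\widehat{G}_N^{(\kappa)}}-\log p_{G_0}$ on which your whole treatment of (I) rests does not go through. (Your bound for (II) and the extraction $\widehat{\pi}_\ell\|\widehat{\vtheta}_\ell-\vtheta^0_{k(\ell)}\|\lesssim\epsilon_N$ from the Voronoi loss are fine.)

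For comparison, the paper's proof avoids any pointwise comparison of log-densities in the overfitted regime. For $\kappa>K_0$ it writes $P_N\log(p_{\widehat{G}_N^{(\kappa)}}/p_{G_0})\le 2(P_N-P_{G_0})\log(\bar{p}_{\widehat{G}_N^{(\kappa)}}/p_{G_0})$ using concavity of $\log$ and nonnegativity of $\KL(p_{G_0}\Vert\bar p_G)$, converts the Voronoi-loss rate into a Hellinger-ball radius $R\asymp(\log N/N)^{1/(2\bar r(\widehat G_N))}$ via $\hel\lesssim W_{\bar r}$, and then controls the empirical process $\sup_{\hel(p_G,p_{G_0})\le R}|\nu_N(G)|$ with a bracketing-entropy bound (van de Geer's Theorem 5.11); Condition K is used only at the exact-fitted level $\kappa=K_0$ to produce the matching \emph{lower} bound, and the underfitted levels are handled by a uniform law of large numbers plus dominated convergence. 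To repair your argument you would either need to restrict to a bounded $\cY$ (an assumption the paper does not make), or replace your additive Lipschitz step by the multiplicative form of Condition K together with an integrated (rather than uniform) control of the tail terms---at which point you are essentially reconstructing the paper's empirical-process route.
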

	
	
	{\bf Model selection via DSC.} For each $\kappa \in [K]$, let $\height_N^{(\kappa)}$ denote the height at the $\kappa$-th level of the dendrogram, and let $\bar{l}_N^{(\kappa)}$ represent the average log-likelihood corresponding to the model with mixing measure $G_N^{(\kappa)}$. We define the DSC as
	$
	\dsc_N^{(\kappa)} := -(\height_N^{(\kappa)} + \omega_N \bar{l}_N^{(\kappa)}),
	$
	where the weight $\omega_N$ satisfies $1 \ll \omega_N \ll (N/\log N)^{1/{2\bar{r}(\widehat{G}_N)}}$. Although the value of $\bar{r}(\widehat{G}_N)$ is generally unknown, a practical choice is $\omega_N = \log N$. The estimated number of components is then given by
	$
	\widehat{K}_N := \argmin_{\kappa \in [2, K]} \dsc_N^{(\kappa)}.
	$
	\begin{theorem}{\label{DIC}}
		Assume that $K_0 \geq 2$, then as $N \rightarrow \infty$, $\widehat{K}_N \rightarrow K_0$ in $\mathbb{P}_{G_0}$-probability.
	\end{theorem}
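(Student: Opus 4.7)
The strategy is to show that $\dsc_N^{(\kappa)} > \dsc_N^{(K_0)}$ with $\mathbb{P}_{G_0}$-probability tending to one for every $\kappa \in [2,K] \setminus \{K_0\}$; since the argmin is over a finite set, a union bound over $\kappa$ then yields $\widehat{K}_N \to K_0$ in probability. The central decomposition is
\begin{equation*}
\dsc_N^{(\kappa)} - \dsc_N^{(K_0)} = \bigl(\height_N^{(K_0)} - \height_N^{(\kappa)}\bigr) + \omega_N\bigl(\bar{l}_N^{(K_0)} - \bar{l}_N^{(\kappa)}\bigr),
\end{equation*}
and the analysis splits into the overfitted ($\kappa > K_0$) and underfitted ($\kappa < K_0$) regimes, with \cref{theorem_rate_DIC_GLLiM} and \cref{theorem_likelihood_DIC_GLLiM} as the main ingredients.

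For the overfitted regime, \cref{theorem_rate_DIC_GLLiM} gives $\height_N^{(\kappa)} \lesssim (\log N/N)^{1/\bar{r}(\widehat G_N)} \to 0$ and $\height_N^{(K_0)} \to \height_0^{(K_0)} > 0$ (strict positivity holds because $K_0 \geq 2$ and the atoms of $G_0$ are distinct, so their minimum pairwise dissimilarity is positive), so the height difference converges in probability to $\height_0^{(K_0)} > 0$. For the likelihood term, \cref{theorem_likelihood_DIC_GLLiM} supplies $\bar{l}_N^{(\kappa)} - \cL(p_{G_0}) \lesssim (\log N/N)^{1/(2\bar{r}(\widehat G_N))}$, while $\bar{l}_N^{(K_0)} - \cL(p_{G_0}) = O_p((\log N/N)^{1/2})$ follows by combining the Voronoi-loss convergence $\VD(\widehat G_N^{(K_0)}, G_0) \lesssim (\log N/N)^{1/2}$ (obtained from \cref{theorem_rate_DIC_GLLiM,theorem_inequality_DK_DK0}) with Lipschitz continuity of $\log p_G$ on the compact parameter set and a standard uniform-LLN control. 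The triangle inequality yields $|\bar{l}_N^{(K_0)} - \bar{l}_N^{(\kappa)}| = O_p\bigl((\log N/N)^{1/(2\bar{r}(\widehat G_N))}\bigr)$, and the calibration $\omega_N \ll (N/\log N)^{1/(2\bar{r}(\widehat G_N))}$ makes $\omega_N$ times this quantity $o_p(1)$. Hence the full difference converges in probability to $\height_0^{(K_0)} > 0$.

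For the underfitted regime, the height term $\height_N^{(K_0)} - \height_N^{(\kappa)}$ remains bounded, while the likelihood term drives the argument. \cref{theorem_likelihood_DIC_GLLiM} gives $\bar{l}_N^{(K_0)} \to \cL(p_{G_0})$ and $\bar{l}_N^{(\kappa)} \to \cL(p_{G_0^{(\kappa)}})$ in $\mathbb{P}_{G_0}$-probability. Since $G_0^{(\kappa)}$ has strictly fewer than $K_0$ atoms, identifiability of the GGMoE parametrization gives $p_{G_0^{(\kappa)}} \neq p_{G_0}$, so $\cL(p_{G_0}) - \cL(p_{G_0^{(\kappa)}}) = \KL(p_{G_0} \| p_{G_0^{(\kappa)}})$ is a strictly positive constant. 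Multiplying by $\omega_N \to \infty$ forces the weighted likelihood gap to diverge to $+\infty$ in probability, overwhelming the bounded height contribution and sending the full difference to $+\infty$.

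\textbf{Main obstacle.} The delicate step is the two-sided calibration of $\omega_N$: it must grow fast enough to overwhelm the positive KL gap in the underfit case, yet stay below $(N/\log N)^{1/(2\bar{r}(\widehat G_N))}$ to keep the slow overfit likelihood fluctuation $(\log N/N)^{1/(2\bar{r}(\widehat G_N))}$ from being amplified. The prescribed window $1 \ll \omega_N \ll (N/\log N)^{1/(2\bar{r}(\widehat G_N))}$ is precisely calibrated for this, with the practical choice $\omega_N = \log N$ always lying inside it since $\bar{r} \geq 1$. A secondary technical point is that the rate $(\log N/N)^{1/2}$ for $\bar{l}_N^{(K_0)} - \cL(p_{G_0})$ is not stated explicitly in \cref{theorem_likelihood_DIC_GLLiM} and must be derived from the Voronoi-loss convergence of $\widehat G_N^{(K_0)}$ together with a standard Lipschitz-plus-uniform-LLN argument; the identifiability of GGMoE invoked in the underfit step is likewise a standard property of the model under the compactness and nondegeneracy assumptions on $\vTheta$.
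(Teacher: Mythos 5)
Your proposal is correct and follows essentially the same route as the paper: both arguments combine the height rates and likelihood rates from \cref{theorem_rate_DIC_GLLiM} and \cref{theorem_likelihood_DIC_GLLiM}, split into the overfitted case (where $\height_N^{(\kappa)} \to 0$ while $\height_N^{(K_0)} \to \height_0^{(K_0)} > 0$ and the $\omega_N$-weighted likelihood gap is $o_p(1)$ by the upper calibration of $\omega_N$) and the underfitted case (where $\omega_N \KL(p_{G_0}\Vert p_{G_0^{(\kappa)}}) \to \infty$ dominates the bounded heights). The only cosmetic difference is that you phrase the comparison as a sign analysis of $\dsc_N^{(\kappa)} - \dsc_N^{(K_0)}$ while the paper tabulates the three asymptotic expansions of $\dsc_N^{(\kappa)}$ directly; note also that for $\kappa > K_0$ only a one-sided bound $\bar{l}_N^{(\kappa)} \leq \cL(p_{G_0}) + O((\log N/N)^{1/(2\bar{r}(\widehat{G}_N))})$ is available, but that is the only direction the argument actually uses.
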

	While AIC, BIC, and ICL evaluate models solely based on their likelihood and the number of components, DIC incorporates additional information from the mixing measures. Specifically, the criterion $\dsc_N^{(\kappa)}$ imposes a stronger penalty when $\height_N^{(\kappa)}$ is small, which typically occurs either due to the presence of closely located atoms or atoms with small weights in the mixing measure. Hence, DSC demonstrates greater robustness to model misspecification than AIC, BIC, and ICL, as supported by empirical evidence in \cref{sec_simulation_study}.

	\section{Simulation Studies}\label{sec_simulation_study}
	We begin by demonstrating that, despite being derived from an overfitted mixing measure that converges slowly, the merged mixing measure obtained via the dendrogram can attain a fast convergence rate to the true underlying mixing measure. Then, we investigate the model selection properties of the proposed DSC-based approach and compare its performance against standard criteria such as AIC, BIC, and ICL. Unlike these traditional methods, which typically yield a single estimate of the number of subpopulations, the dendrogram of mixing measures provides a hierarchical representation of the merging process and the structure of atoms. This hierarchical insight facilitates a more nuanced understanding of population heterogeneity and supports more informative data exploration. All simulation studies were conducted using Python 3.12.2 on a standard Unix-based system.

	{\bf Numerical details.} The true mixing measure $G_0 = \sum_{k=1}^{3} \pi^0_k \delta_{(\vc^0_k, \mGamma^0_k, \mA^0_k, \vb^0_k, \mSigma^0_k)}$ is specified as:
	\begin{align*} 
		G_0 = 0.3\delta_{(-0.1,\hspace{.05cm} 0.04,\hspace{.05cm} 0.40,\hspace{.05cm} 0.34,\hspace{.05cm} 0.01)} 
		+ 0.4\delta_{(0.1,\hspace{.05cm} 0.02,\hspace{.05cm} -0.71,\hspace{.05cm} -0.33,\hspace{.05cm} 0.03)} 
		+ 0.3\delta_{(0.5,\hspace{.05cm} 0.01,\hspace{.05cm} 0,\hspace{.05cm} 0.2,\hspace{.05cm} 0.02)}.
	\end{align*}
	For each experimental setting, we vary the logarithm of the sample size, $\log_{10}(N)$, from $\log_{10}(N_{\min})$ to $\log_{10}(N_{\max})$, corresponding to $N_{\nums}$ different values of $N$ within the interval $[N_{\min}, N_{\max}]$. For each sample size, we generate $N_{\rep}$ independent synthetic datasets from the ground-truth distribution corresponding to the true mixing measure $G_0$. We estimate both the exact-fitted MLE, denoted by $\widehat{G}^{e}_N \in \cE_3$, and the overfitted MLE, denoted by $\widehat{G}^{o}_N \in \cO_5$, $K=5$, using an adaptation of the EM algorithm as described in~\cite[Section~5]{deleforge_high_dimensional_2015}. The convergence criterion is set to $\epsilon = 10^{-5}$, with a maximum of 2000 EM iterations.
	To emphasize the theoretical properties of the estimator $\widehat{G}_N$, the EM algorithm is initialized in a favorable manner. Specifically, for each replication and each $(K, K_0)$ pair, we randomly partition the set $[K]$ into $K_0$ disjoint index subsets $\sS_1, \ldots, \sS_{K_0}$, each containing at least one element. For each $k \in \sS_t$, the initial parameters $\vc^0_k$ (respectively, $\mGamma^0_k$, $\mA^0_k$, $\vb^0_k$, $\mSigma^0_k$) are sampled from distinct Gaussian distributions centered at $\vc^0_t$ (respectively, $\mGamma^0_t$, $\mA^0_t$, $\vb^0_t$, $\mSigma^0_t$), with vanishing covariance to promote identifiability. After estimating the overfitted model $\widehat{G}^{o}_N$, we apply the merging procedure in~\cref{algorithm_DSC_GGMoE} to obtain the merged estimator $\widehat{G}^{m}_N \in \cE_3$.

	{\bf Fast parameter estimation via the dendrogram.} In this experimental setup, we fix $K = 5$, $N_{\nums} =N_{\min} = 10^2$, $N_{\max} = 10^5$, and $N_{\rep} = 20$. To quantify how well $\widehat{G}^{e}_N$, $\widehat{G}^{o}_N$, and $\widehat{G}^{m}_N$ approximate the ground-truth mixing measure $G_0$, we evaluate the Voronoi loss distance as defined in~\cref{eq_voronoi_loss_definition}. The average performance, along with error bars corresponding to twice the empirical standard deviation of the log error, is depicted in~\cref{fig_convergence_rate_DV_1_K5_n0_0_nmin100_nmax100000_rep20_nnum100}, \cref{fig_convergence_rate_atoms_1_K5_n0_0_nmin100_nmax100000_rep20_nnum100} (best viewed in color). Our results in \cref{fig_convergence_rate_DV_1_K5_n0_0_nmin100_nmax100000_rep20_nnum100} indicate that the average discrepancy $\VD$ between each estimator $\widehat{G}^{e}_N$, $\widehat{G}^{o}_N$, and $\widehat{G}^{m}_N$ and the true mixing measure $G_0$ decreases at a rate proportional to $N^{-1/2}$, which is aligned with the theoretical guarantees in \cref{theorem_rate_DIC_GLLiM}. Particularly noteworthy is that $\widehat{G}^{m}_N$, which corresponds to the exact-fit level in the dendrogram hierarchy, attains the optimal root-$N$ rate of convergence to $G_0$. Importantly, this is achieved without retraining the model, even though the original parameter estimator from $\widehat{G}^{o}_N$ was overfitted and converges more slowly as shown in \cref{fig_convergence_rate_atoms_1_K5_n0_0_nmin100_nmax100000_rep20_nnum100} and \cref{fig_dendrogram_illustration}.

	\begin{figure*}[!ht]
		\centering
		\begin{subfigure}[t]{.5\textwidth}
			\centering
			\includegraphics[width=\linewidth]{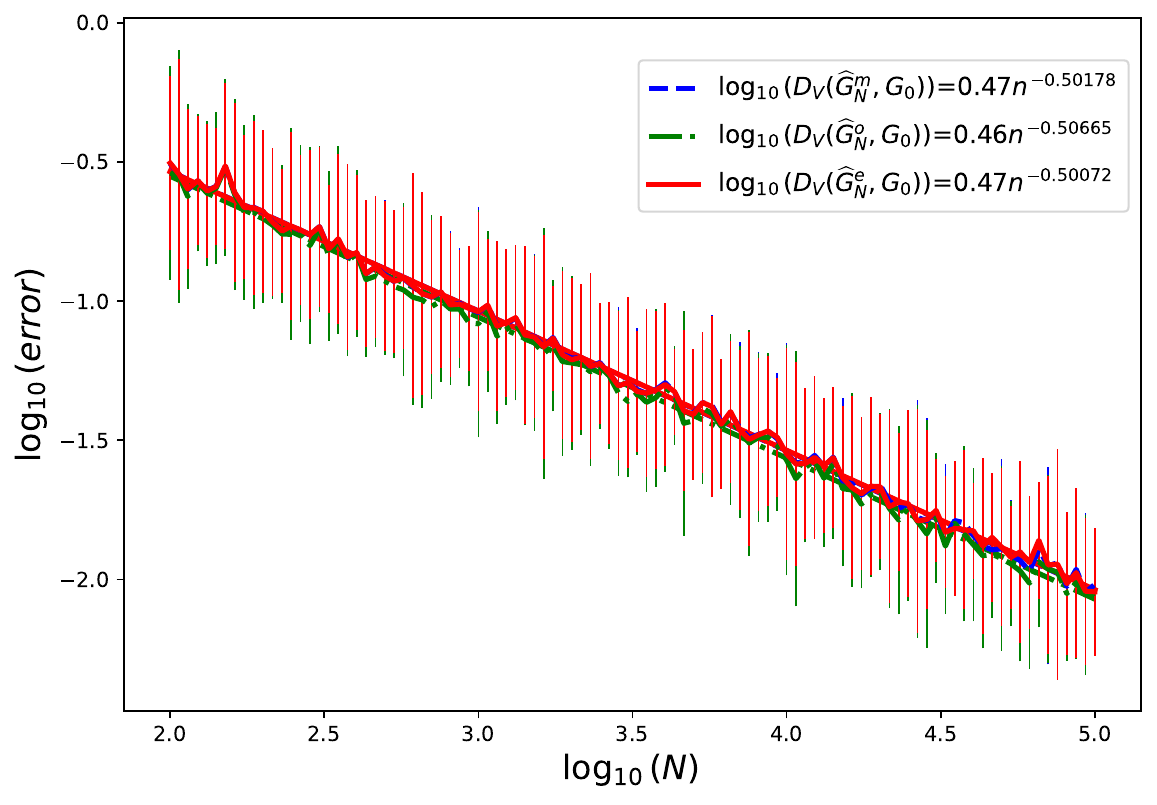}
			\caption{Voronoi loss distances.}
			\label{fig_convergence_rate_DV_1_K5_n0_0_nmin100_nmax100000_rep20_nnum100}
		\end{subfigure}%
		~
		\begin{subfigure}[t]{.5\textwidth}
			\centering
			\includegraphics[width=\linewidth]{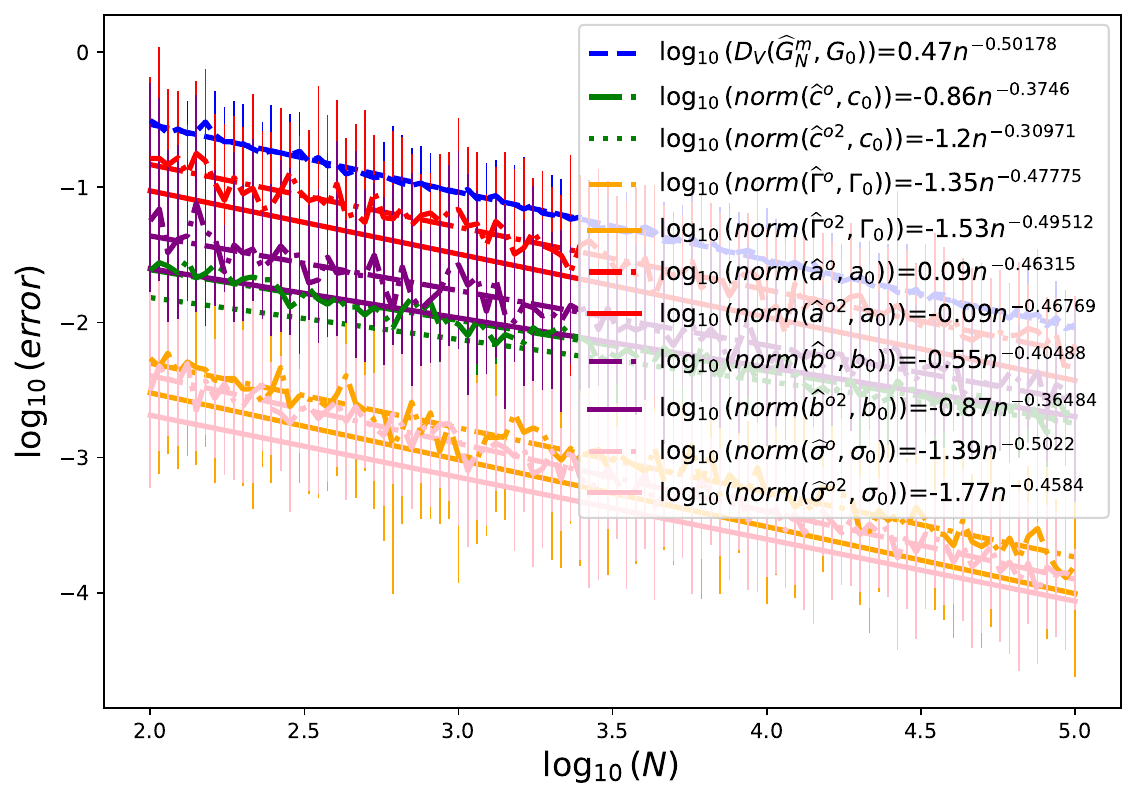}
			\caption{Parameter distances.}
			\label{fig_convergence_rate_atoms_1_K5_n0_0_nmin100_nmax100000_rep20_nnum100}
		\end{subfigure}
		\caption{Convergence rates of overfitted, exact-fitted, and merged mixing measures for GGMoE.}
	\end{figure*}
	
	\begin{figure*}[!ht]
		\centering
		\begin{subfigure}[t]{.32\textwidth}
			\centering
			\includegraphics[width=\linewidth]{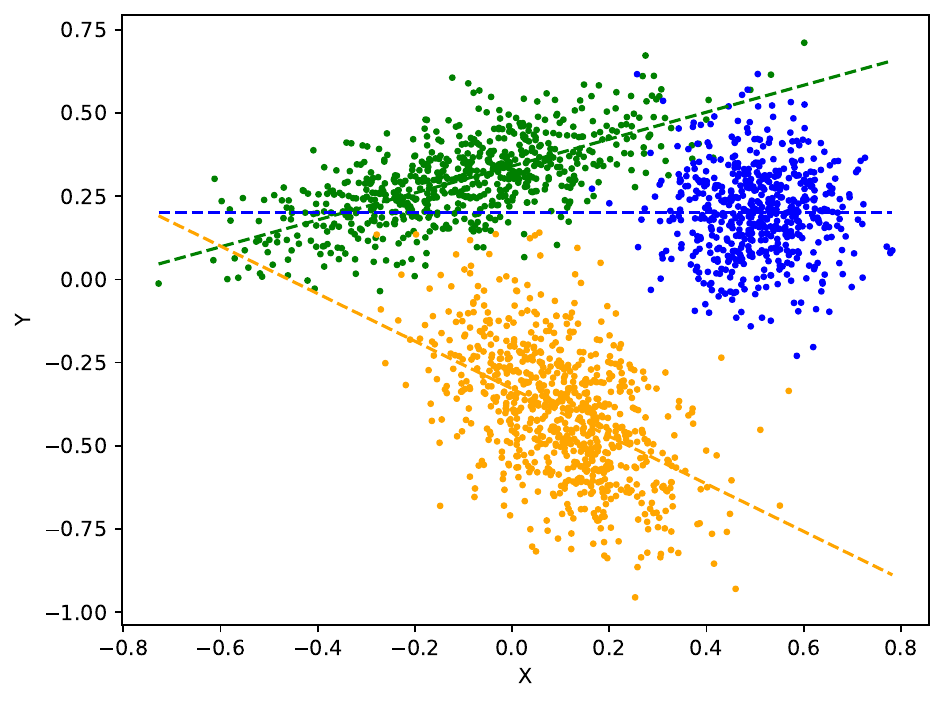}
			\caption{Data with true regression.}
			\label{fig_plot_true_GGMoE_model_1_K10_nmin100_nmax10000_rep20_nnum21_initAtom1}
		\end{subfigure}%
		~
		\begin{subfigure}[t]{.32\textwidth}
			\centering
			\includegraphics[width=\linewidth]{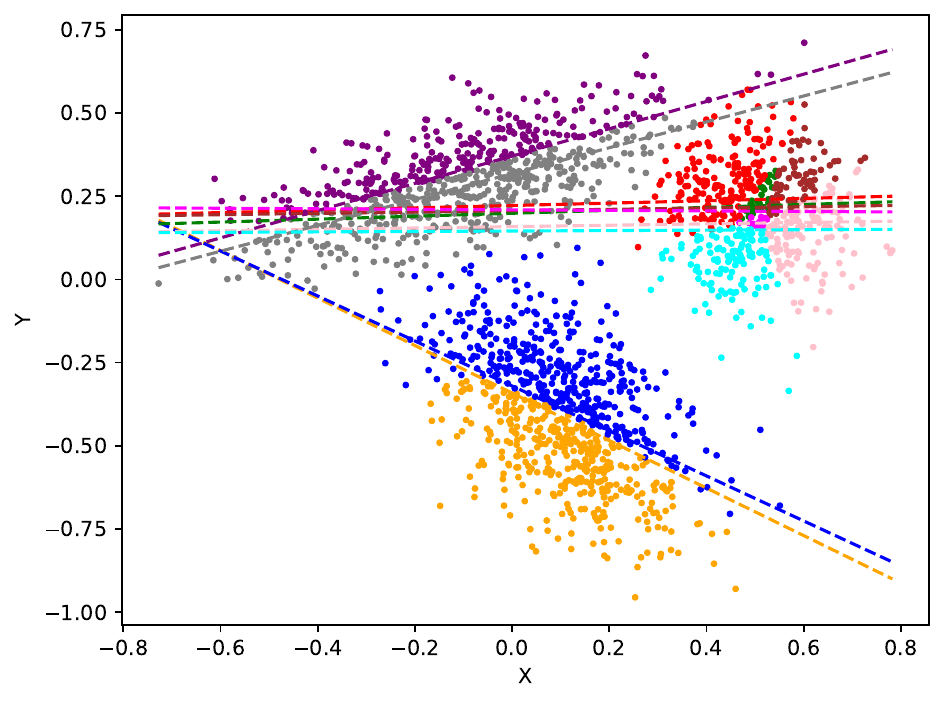}
			\caption{Overfitting $K = 10$.}
			\label{fig_plot_estimated_GGMoE_saved_data_model_1_kappa10_nmin100_nmax10000_rep20_nnum21_initAtom1}
		\end{subfigure}
		~
		\begin{subfigure}[t]{0.32\textwidth}
			\centering
			\includegraphics[width=\linewidth]{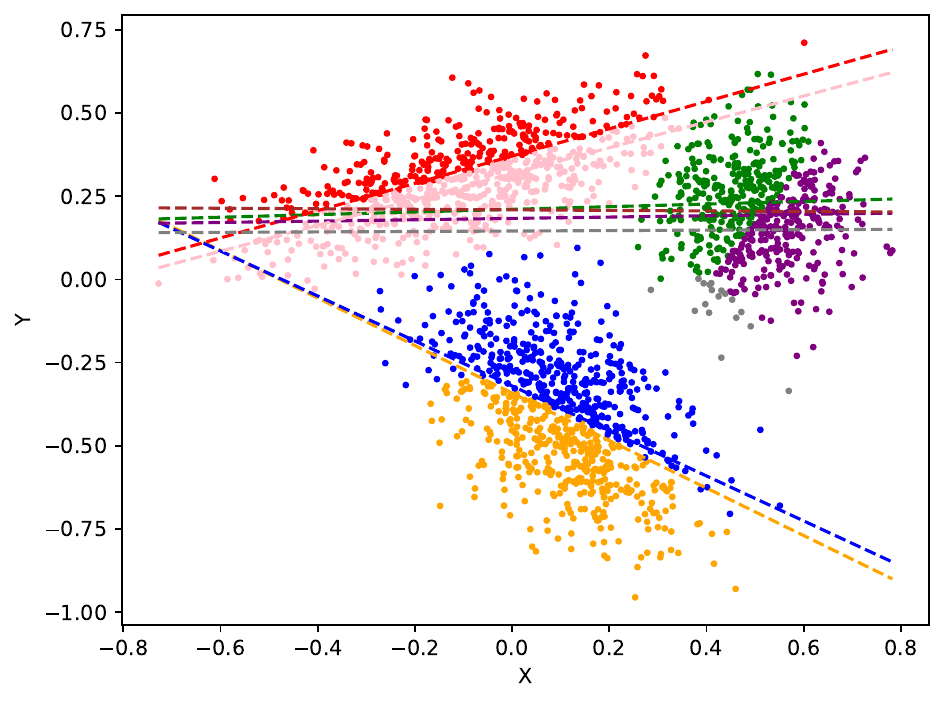}
			\caption{Merge $K = 8$.}
			\label{fig_plot_estimated_GGMoE_saved_data_model_1_kappa8_nmin100_nmax10000_rep20_nnum21_initAtom1}
		\end{subfigure}
		
		\begin{subfigure}[t]{0.32\textwidth}
			\centering
			\includegraphics[width=\linewidth]{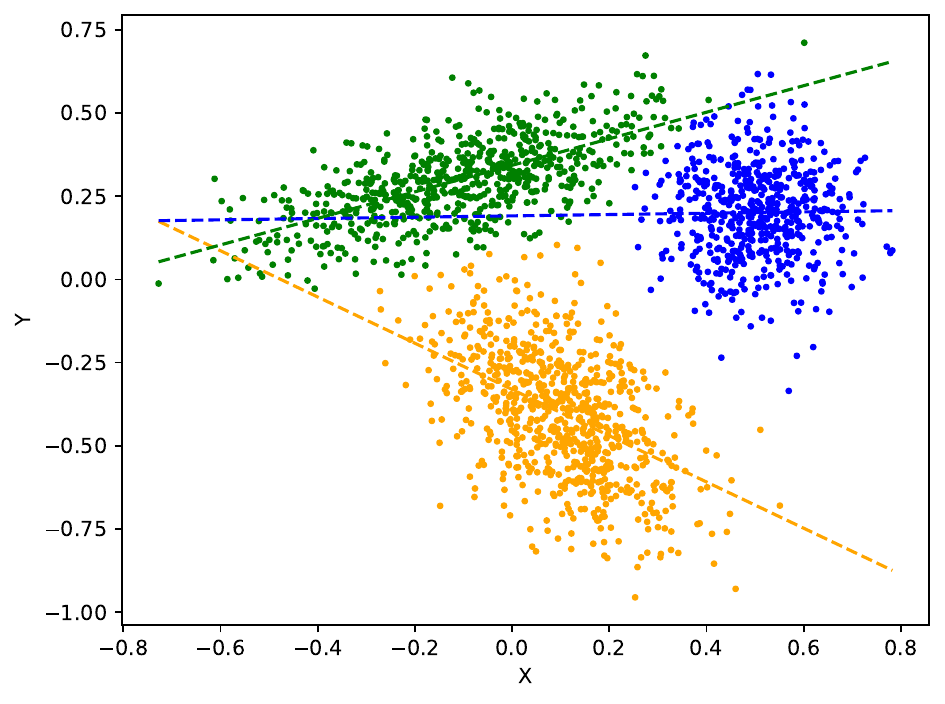}
			\caption{Merge $K = 3$.}
			\label{fig_plot_estimated_GGMoE_saved_data_model_1_kappa3_nmin100_nmax10000_rep20_nnum21_initAtom1}
		\end{subfigure}
		~
		\begin{subfigure}[t]{.32\textwidth}
			\centering
			\includegraphics[width=\linewidth]{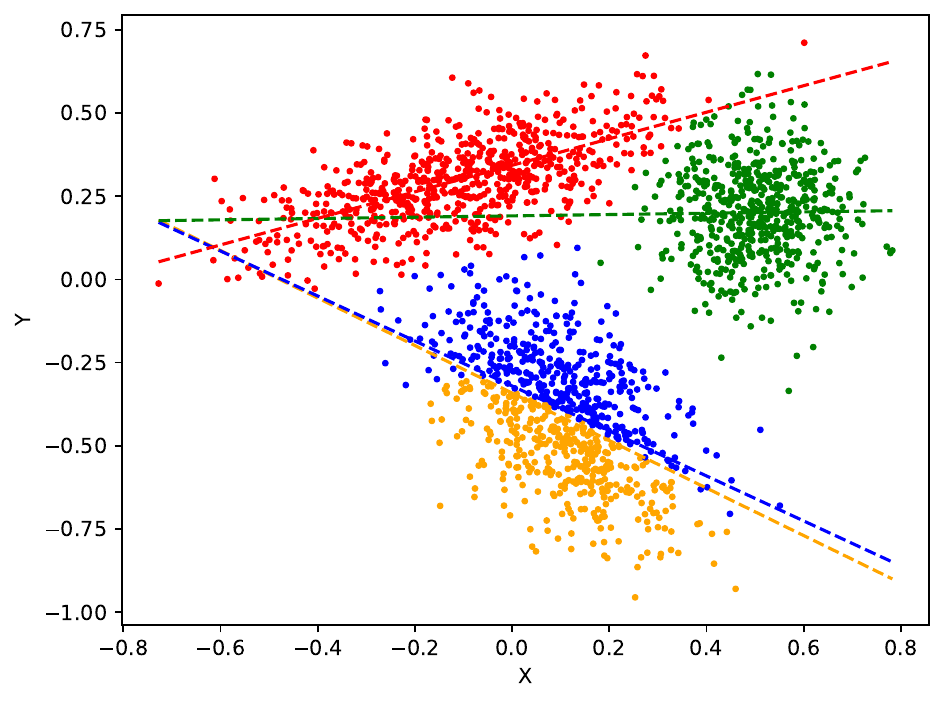}
			\caption{Merge $K = 4$.}
			\label{fig_plot_estimated_GGMoE_saved_data_model_1_kappa4_nmin100_nmax10000_rep20_nnum21_initAtom1}
		\end{subfigure}
		~
		\begin{subfigure}[t]{.32\textwidth}
			\centering
			\includegraphics[width=\linewidth]{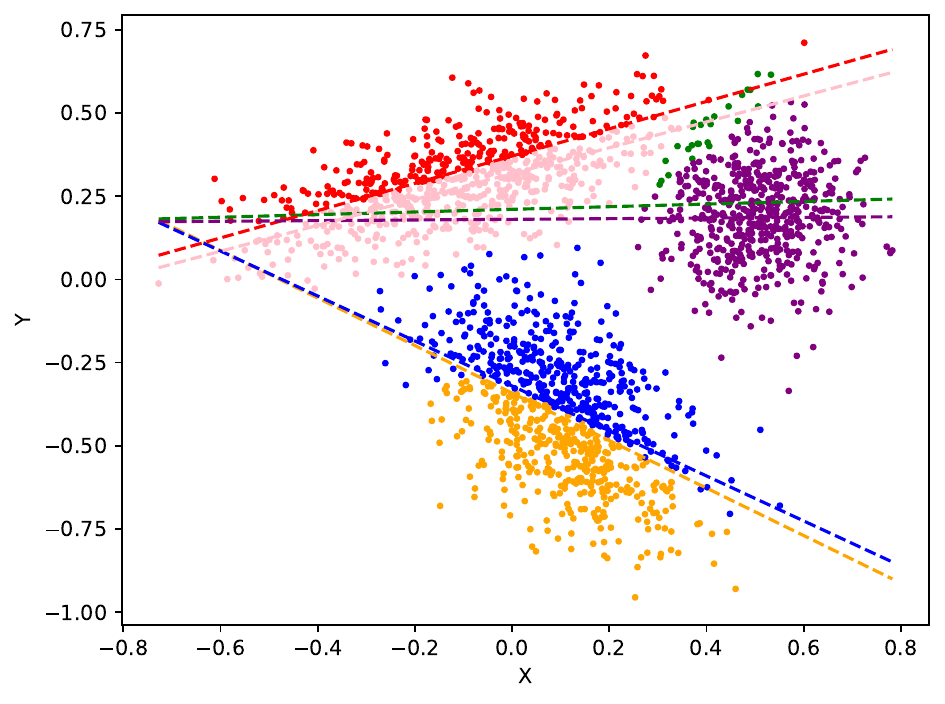}
			\caption{Merge $K = 6$.}
			\label{fig_plot_estimated_GGMoE_saved_data_model_1_kappa6_nmin100_nmax10000_rep20_nnum21_initAtom1}
		\end{subfigure}%
		
		\caption{Dendrogram mixing measures with $K = 10$ and $K_0 = 3$, arranged in a clockwise layout to enhance visual comparison between the merged estimates and the true mixing measure.}\label{fig_dendrogram_illustration}
	\end{figure*}
	
	{\bf Model selection with DSC.} \update{In this experimental setup, we fix $K =20$, $N_{\nums} = 64$, $N_{\min} = 200$, $N_{\max} = 2*10^4$, and set the number of repetitions to $N_{\rep} = 80$.} For each scenario, we record both the frequency in \cref{fig_model_selection_average_model_1_GGMoE_update} with which each method correctly identifies the true number of mixture components $K_0$ and the average number of components selected in \cref{fig_model_selection_proportion_model_1_GGMoE_update}.  For the AIC, BIC, and ICL criteria, the procedure involves fitting GGMoE models with $\kappa$ components using the EM algorithm for each $\kappa \in [K]$. The information criteria are then computed and minimized to select the optimal model. In contrast, the proposed DSC approach requires fitting the GGMoE model only once with $K = 10$ components. Then, a dendrogram is constructed from the inferred mixing measure, and the DSC is evaluated based on this hierarchical structure as detailed in~\cref{subsec_model_selection_DSC}, using a penalty scaling of $\omega_N = \log(N)$.
	The results indicate that AIC and BIC exhibit similar tendencies: they consistently overestimate the number of mixture components, often approaching the upper bound of $10$ as the sample size increases. These criteria tend to perform reasonably only when the sample size is neither too small nor excessively large. In contrast, ICL initially behaves similarly to AIC and BIC for small sample sizes but demonstrates improved performance as the sample size increases, ultimately converging to the true number of components, $K_0 = 3$. Our proposed DSC method outperforms all baseline criteria in accurately recovering the true number of mixture components and selecting a suitable number of components. Remarkably, DSC maintains strong performance even in small-sample regimes and achieves high empirical accuracy with moderately large sample sizes.
	
	\begin{figure*}[!ht]
		\centering
		\begin{subfigure}[t]{.45\textwidth}
			\centering
			\includegraphics[width=\linewidth]{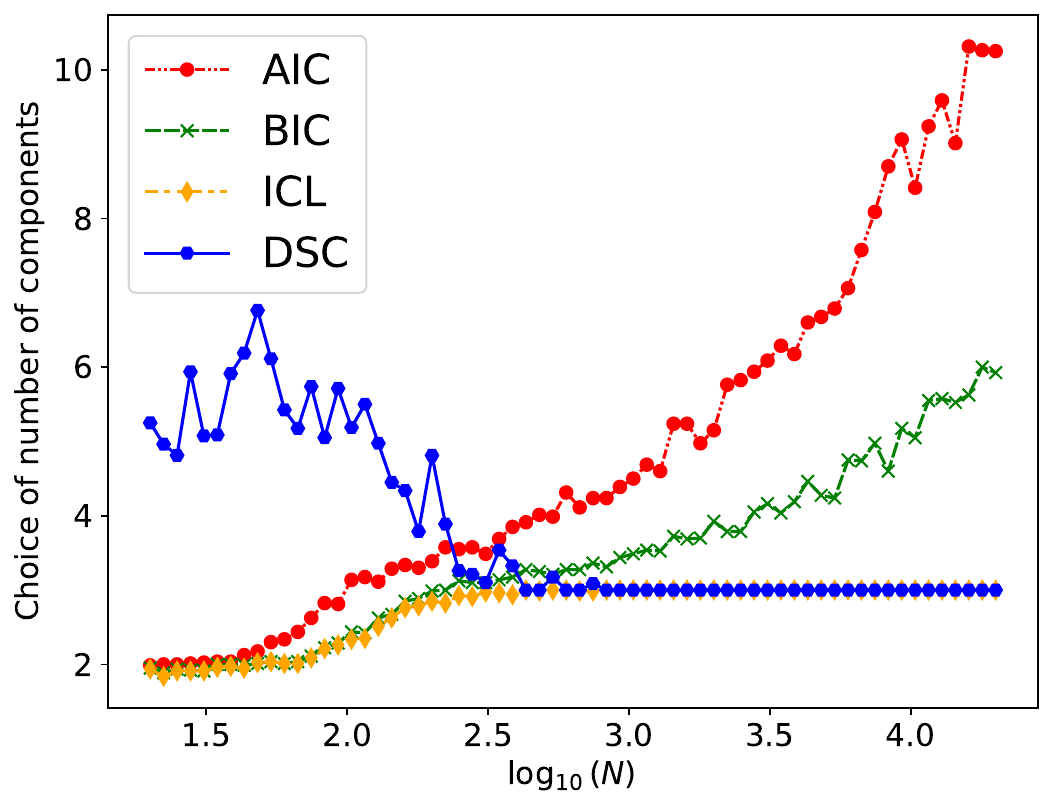}
			\caption{Average choices number of components.}
			\label{fig_model_selection_average_model_1_GGMoE_update}
		\end{subfigure}%
		~
		\begin{subfigure}[t]{.45\textwidth}
			\centering
			\includegraphics[width=\linewidth]{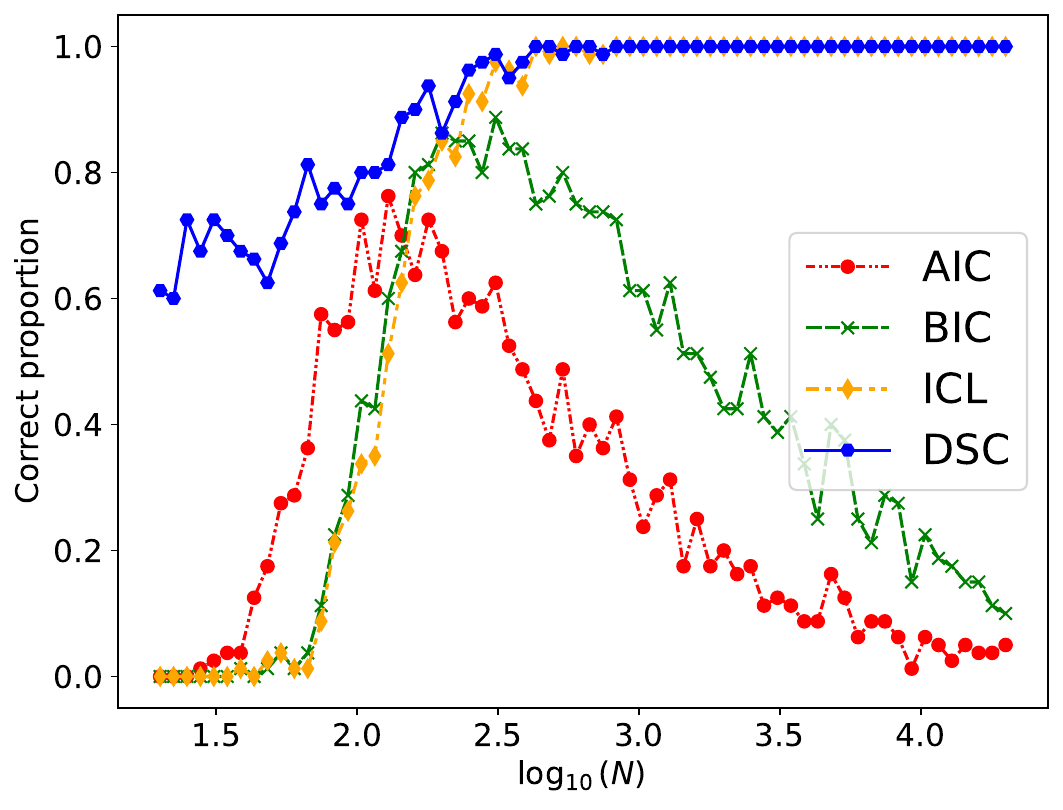}
			\caption{Proportion of selecting $K_0 = 3$.}
			\label{fig_model_selection_proportion_model_1_GGMoE_update}
		\end{subfigure}
		\caption{\update{Model selection with $K =20, K_0 = 3$, $N_{\nums} = 64$, $N_{\min} = 200$, $N_{\max} = 2*10^4$, $N_{\rep} = 80$.}}
	\end{figure*}

	\section{Conclusion, Limitations and Perspectives}\label{sec_conclusion}
	
	This work addresses a central challenge in the theoretical analysis of MoE models: the selection of the optimal number of mixture components in the presence of covariate-dependent gating and expert networks. By revisiting the framework of dendrograms of mixing measures, we proposed a novel and principled extension to GGMoE models. Our approach enables consistent estimation of the true number of components and attains pointwise optimal convergence rates in overfitted settings, all while avoiding the computational cost of fitting multiple models with different configurations. Through extensive simulations, we demonstrated that the proposed method not only achieves superior statistical accuracy in parameter estimation but also outperforms standard model selection criteria such as AIC, BIC, and ICL in recovering the correct model complexity. These findings highlight the utility of our approach in applications involving complex, heterogeneous data structures. 
	Looking forward, several avenues merit further investigation. In this work, we restrict our presentation to the univariate response case for clarity and simplicity. However, the extension to multiple responses can be developed analogously, albeit with more intricate notation and technical details. Next, extending the proposed framework to hierarchical or non-Gaussian MoE models could enhance its applicability in broader real-world scenarios. Moreover, theoretical analysis under weaker assumptions or in the presence of model misspecification would further deepen our understanding of model selection in MoE architectures. Finally, integrating this approach into scalable deep learning pipelines remains an exciting direction for future research, with the potential to inform the design of more robust and interpretable neural network models.
	
	\subsection*{Acknowledgments} TrungTin Nguyen and Christopher Drovandi acknowledge funding from the Australian Research Council Centre of Excellence for the Mathematical Analysis of Cellular Systems (CE230100001). Christopher Drovandi was aslo supported by an Australian Research Council Future Fellowship (FT210100260).

	
	\appendix

	\begin{center}
		\textbf{\Large Supplementary Materials for\\
			``Model Selection for Gaussian-gated Gaussian Mixture of Experts Using Dendrograms of Mixing Measures''}
	\end{center}

	In this supplementary material, we first provide additional details on the theoretical challenges, technical contributions and practical implication in \cref{sec_challenges}, aiming to enhance the reader's understanding of our main theoretical developments. Next, we revisit the identifiability and joint density estimation rates for GGMoE models in \cref{sec_identifiability_GGMoE}. We then present the proofs of the main theoretical results in \cref{sec_proofs_main_results}. Finally, we include additional experimental results in \cref{sec_additional_experiments}.

	\tableofcontents

	\section{From Theoretical Challenges and Technical Contributions to Practical Implications}\label{sec_challenges}
	
	\subsection{Characterizing Parameter Estimation Rates of GGMoE}
	
	The main challenges arise in analyzing the convergence rates of the estimated parameters in GGMoE models. We study these rates by applying the Taylor theorem to decompose the difference $p_{\widehat{G}_N}(\vy, \vx) - p_{G_0}(\vy, \vx)$ into a linear combination of linearly independent components. A key difficulty lies in the presence of linearly dependent derivative terms in the Taylor expansion, such as
	\begin{equation*}
		\dfrac{\partial^2 f_{\cL}}{\partial \vc \partial \vc^{\top}} = 2\dfrac{\partial f_{\cL}}{\partial \mGamma}, \quad \dfrac{\partial^2 f_{\cD}}{\partial \vb^2} = 2\dfrac{\partial f_{\cD}}{\partial \mSigma}.
	\end{equation*}
	To address this issue, we transform all left-hand side terms in the above expressions into their corresponding right-hand side forms. Nevertheless, a linear dependence remains between $\dfrac{\partial f_{\cL}}{\partial \vc}$ and $\vx$, since
	\[
	\dfrac{\partial f_{\cL}}{\partial \vc} = (\mGamma^{-1})(\vx -\vc)f_{\cL}.
	\]
	To fully clarify the challenge, we describe it in detail. Specifically, for $\valpha = (\evalpha_1, \evalpha_2, \evalpha_3, \evalpha_4, \evalpha_5) \in \N^D \times \N^{D \times D} \times\N^{D} \times \N \times \N$, define
	\begin{equation*}
		\cI_{l_1, l_2}(k) := \left\{ \valpha : \tau_0(\evalpha_1, \evalpha_2) + \evalpha_3 \geq l_1,\; |\evalpha_3| + \evalpha_4 + 2\evalpha_5 = l_2,\; 1 \leq |\valpha| \leq \bar{r}(|\sV_k|) \right\}.
	\end{equation*}
	We then define
	\begin{equation*}
		T_{l_1, l_2}^N(k) = \sum_{\ell \in \sV_k} \sum_{\valpha \in \cI_{l_1, l_2}} \frac{\evpi_\ell^N}{2^{|\evalpha_2| + |\evalpha_5|}\valpha!}t_{l_1-\evalpha_3, \tau_0(\evalpha_1, \evalpha_2)}^{(k)} (\Delta \vc_{\ell k}^N)^{\evalpha_1} (\Delta \mGamma_{\ell k}^N)^{\evalpha_2} (\Delta \mA_{\ell k}^N)^{\evalpha_3} (\Delta \vb_{\ell k}^N)^{\evalpha_4} (\Delta \mSigma_{\ell k}^N)^{\evalpha_5},
	\end{equation*}
	where $t_{w, \tau_0(\evalpha_1, \evalpha_2)}^{(k)}$ is defined by the following expansion:
	\begin{equation*}
		\dfrac{\partial^{|\evalpha_1| + 2|\evalpha_2|}}{\vc^{\tau_0(\evalpha_1, \evalpha_2)}} f_{\cL}(\vx \mid \vc_k^0, \mGamma_k^0) = \sum_{|w| = 0}^{|\evalpha_1| + 2|\evalpha_2|} t_{w, \tau_0(\evalpha_1, \evalpha_2)}^{(k)} \vx^w.
	\end{equation*}
	Although we can show that $T_{l_1, l_2}^N(k) / \VD(G_N, G_0) \to 0$, we aim to establish the stronger property:
	\begin{equation*}
		\frac{\sum\limits_{\ell \in \sV_k} \sum\limits_{\substack{\tau_0(\evalpha_1, \evalpha_2) + \evalpha_3 = l_1 \\ |\evalpha_3| + \evalpha_4 + 2\evalpha_5 = l_2}} \frac{\evpi_\ell^N}{2^{|\evalpha_2| + |\evalpha_5|} \valpha!} (\Delta \vc_{\ell k}^N)^{\evalpha_1} (\Delta \mGamma_{\ell k}^N)^{\evalpha_2} (\Delta \mA_{\ell k}^N)^{\evalpha_3} (\Delta \vb_{\ell k}^N)^{\evalpha_4} (\Delta \mSigma_{\ell k}^N)^{\evalpha_5}}{\VD(G_N, G_0)} \rightarrow 0.
	\end{equation*}
	
	In~\cite{nguyen_towards_2024}, there are errors in deriving this property, which we correct through updated proofs. A critical step involves understanding the relationship between $r(M)$ and $\bar{r}(M)$ in the systems of~\cref{eq_system_r_bar,eq_system_GGMoE}. While~\cite{nguyen_towards_2024} claims that $\bar{r}(M) \leq r(M)$, we rigorously establish that $\bar{r}(M) = r(M)$.
	
	\subsection{Model Selection via DSC for GGMoE Models}
	
	Another major technical challenge lies in verifying Condition \textbf{K} (see \cref{subsec_model_selection_DSC}) for model selection via DSC in GGMoE models, which has only been proven for simple location-scale models in~\cite{do_dendrogram_2024}.
	
	Furthermore, we introduce a novel Voronoi loss function in~\cref{eq_voronoi_loss_definition}, which generalizes the versions proposed in~\cite{nguyen_towards_2024} and~\cite{do_dendrogram_2024}. While the former was designed exclusively for parameter estimation, the latter was tailored to location-scale Gaussian mixtures of GGMoE models in clustering tasks. This new loss enables our merging procedure for GGMoE models in \cref{algorithm_mergeAtom_GGMoE}, which accounts for both the proximity between clusters and the similarity between expert regression functions, as illustrated in \cref{fig_dendrogram_illustration}.
	
	In particular, it holds that
	$$\VD(G, G_0) \gtrsim \VDT(G, G_0),$$ 
	where the Voronoi-type loss function $\VDT$ is defined in Equation (13) of~\cite{nguyen_towards_2024} for GGMoE models. Extending this approach, we conjecture that a similar relation holds for a novel Voronoi loss $\VDC(G, G_0)$ satisfying 
	$$\VDC(G, G_0) \gtrsim \VDS(G, G_0),$$ 
	where $\VDS$ is defined in Equation (13) of~\cite{nguyen_demystifying_2023}.
	
	Moreover, letting $\bar{r}(\widehat{G}) = \max_{k \in [K_0]} \bar{r}(|\sV_k|)$, we further establish that
	$$\VD(G, G_0) \gtrsim \W^{\bar{r}(\widehat{G})}_{\bar{r}(\widehat{G})}(G, G_0),$$
	where $\W^r_r$ denotes the Wasserstein-$r$ distance between mixing measures, as defined in \cref{eq_define_Wr}.
	Recall that for two discrete mixing measures $G = \sum_{k=1}^K \evp_k \delta_{\evtheta_k}$ and $G' = \sum_{l=1}^{K'} \evp'_l \delta_{\evtheta'_l}$, the Wasserstein-$r$ distance ($r \geq 1$) is given by
	\begin{equation}\label{eq_define_Wr}
		\W_r^r(G, G') := \inf_{\mQ \in \Pi(\vp, \vp')} \sum_{k=1}^K \sum_{l=1}^{K'} \emQ_{kl} \|\evtheta_k - \evtheta'_l\|^r,
	\end{equation}
	where $\Pi(\vp, \vp')$ is the set of all couplings between the mixing proportions $\vp = (\evp_1, \dots, \evp_K)$ and $\vp' = (\evp'_1, \dots, \evp'_{K'})$, that is,
	$$\Pi(\vp, \vp') = \left\{ \mQ \in \mathbb{R}_+^{K \times K'} : \sum_{k=1}^K \emQ_{kl} = \evp'_l, \sum_{l=1}^{K'} \emQ_{kl} = \evp_k, \ \forall k \in [K], \ l \in [K'] \right\}.$$
	
	Hence, our novel inverse bound in \cref{theorem_V_DG_GLLiM} provides a significantly stronger result than existing bounds for GGMoE models~\cite{manole_refined_2022, ho_convergence_2016, ho_singularity_2019, nguyen_towards_2024}.

	Finally, our new expert merging scheme serves as a bridge between model-based and model-free approaches and contributes to the theoretical understanding of the recent success of expert merging in deep neural network communities. This direction is particularly promising for improving computational efficiency in sparse MoE models~\cite{li_merge_2024, he_merging_2023, zhong_lory_2024, chen_retraining_free_2024, he_towards_2025, nguyen_camex_2025}.

	\section{Recap Identifiability and Joint Density Estimation Rate for GGMoE Models}\label{sec_identifiability_GGMoE}
	
	We begin by establishing the identifiability of the GGMoE model through the following \cref{fact_identifiability,fact_density_rate}:
	\begin{fact}[Identifiability of the GGMoE model, Proposition 1 in \cite{nguyen_towards_2024}]
		\label{fact_identifiability}
		Consider two mixing measures $G, G' \in \cO_{K}$. If their corresponding joint densities satisfy $p_G(\vx,\vy) = p_{G'}(\vx,\vy)$ almost everywhere on $\cX \times \cY$, then the measures are equivalent, i.e., $G \equiv G'$.
	\end{fact}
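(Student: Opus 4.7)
The plan is to peel off the gating triples $(\pi_k, \vc_k, \mGamma_k)$ via the $\vx$-marginal and then recover the expert triples $(\mA_k, \vb_k, \mSigma_k)$ fibrewise in $\vx$. Writing $G = \sum_{k=1}^{K_1} \pi_k \delta_{\vtheta_k}$ and $G' = \sum_{k'=1}^{K_2} \pi'_{k'} \delta_{\vtheta'_{k'}}$ with all weights strictly positive, I would first integrate the hypothesis $p_G(\vx,\vy) = p_{G'}(\vx,\vy)$ over $\vy$, yielding
$$\sum_{k=1}^{K_1} \pi_k f_{\cL}(\vx \mid \vc_k, \mGamma_k) = \sum_{k'=1}^{K_2} \pi'_{k'} f_{\cL}(\vx \mid \vc'_{k'}, \mGamma'_{k'}) \quad \text{for a.e.\ } \vx \in \cX.$$
Since both sides are real-analytic in $\vx$ and $\cX$ has nonempty interior, the identity extends to $\R^D$. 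Classical identifiability of finite multivariate Gaussian mixtures (Yakowitz--Spragins, 1968) then forces $K_1 = K_2 =: K$, together with a permutation $\sigma$ of $[K]$ such that $(\pi_k, \vc_k, \mGamma_k) = (\pi'_{\sigma(k)}, \vc'_{\sigma(k)}, \mGamma'_{\sigma(k)})$ for every $k$.

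Next, I would fix $\vx_0 \in \cX$ at which the $K$ values $\omega_k(\vx_0) := \pi_k f_{\cL}(\vx_0 \mid \vc_k, \mGamma_k)$ are pairwise distinct; such $\vx_0$ form a dense open subset of $\cX$, because the equality $\omega_k(\vx) = \omega_j(\vx)$ for $k \neq j$ defines a proper real-analytic subvariety of $\R^D$ (the gating atoms being distinct). At such $\vx_0$, dividing the full joint-density identity by $\sum_k \omega_k(\vx_0)$ exhibits $p_G(\vx_0,\cdot)$ as a finite mixture of Gaussians in $\vy$ with weights $\{\omega_k(\vx_0)\}_k$ and component parameters $(\mA_k \vx_0 + \vb_k, \mSigma_k)$, equal on the right-hand side to the analogous mixture built from $G'$ with weights $\{\omega'_{\sigma(k)}(\vx_0)\}_k = \{\omega_k(\vx_0)\}_k$. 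Applying Gaussian-mixture identifiability in $\vy$ and using the pairwise distinctness of the weights to pin down the matching gives
$$\mA_k \vx_0 + \vb_k = \mA'_{\sigma(k)} \vx_0 + \vb'_{\sigma(k)}, \qquad \mSigma_k = \mSigma'_{\sigma(k)}$$
for every $\vx_0$ in a dense open subset of $\cX$. Since $\cX$ has nonempty interior, varying $\vx_0$ forces $\mA_k = \mA'_{\sigma(k)}$ and $\vb_k = \vb'_{\sigma(k)}$, completing $\vtheta_k = \vtheta'_{\sigma(k)}$ and hence $G \equiv G'$.

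The main obstacle I anticipate is ensuring that the fibrewise permutation produced in the second step is independent of $\vx_0$ and coincides with the permutation $\sigma$ already fixed by the marginal in the first step; without this, the gating-side identification does not transfer cleanly to the expert-side parameters. The resolution hinges on the elementary but crucial observation that, for generic $\vx_0$, the weights $\omega_k(\vx_0)$ are all distinct, so the matching of components in the $\vy$-mixture identity is unique and is forced to agree with the matching coming from the $\vx$-marginal. Everything else reduces to standard Gaussian-mixture identifiability plus the affine-algebra argument on $\cX$ enabled by the boundedness-with-interior setup assumed in \cref{sec_background}.
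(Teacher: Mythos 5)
First, note that the paper does not prove this statement itself: it is imported verbatim as Proposition~1 of \cite{nguyen_towards_2024}, so there is no internal proof to compare against. Judged on its own terms, your two-step marginal-then-fibrewise argument has a genuine gap.

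The gap is in Step~1, and it propagates to Step~2. A mixing measure in $\cO_K$ has distinct atoms as \emph{five}-tuples $(\vc_k,\mGamma_k,\mA_k,\vb_k,\mSigma_k)$, but nothing prevents two atoms of $G$ from sharing the same gating pair $(\vc_k,\mGamma_k)$ while differing in $(\mA_k,\vb_k,\mSigma_k)$. In that case the $\vx$-marginal $\sum_k \pi_k f_{\cL}(\vx\mid\vc_k,\mGamma_k)$ is a Gaussian mixture with \emph{repeated} components, and Yakowitz--Spragins identifiability only determines the distinct gating components together with their \emph{grouped} weights; it does not produce a permutation $\sigma$ with $(\pi_k,\vc_k,\mGamma_k)=(\pi'_{\sigma(k)},\vc'_{\sigma(k)},\mGamma'_{\sigma(k)})$ atom by atom. (Concretely, $G$ could assign weights $0.3$ and $0.2$ to two experts with a common gating Gaussian while $G'$ assigns $0.25$ and $0.25$; the marginal sees only the total $0.5$.) Your parenthetical ``the gating atoms being distinct'' is exactly the unstated assumption doing the work. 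The dual degeneracy breaks Step~2: two atoms may share $(\mA,\vb,\mSigma)$ but differ in gating, in which case the fibrewise $\vy$-mixture at \emph{every} $\vx_0$ has repeated components $(\mA_k\vx_0+\vb_k,\mSigma_k)$, and one-dimensional Gaussian-mixture identifiability again only pins down grouped weights, not the per-atom matching your argument needs.

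The repair is a grouping argument rather than a permutation argument. Fix $\vy$ and view $p_G(\vx,\vy)=\sum_j c_j(\vy)\,f_{\cL}(\vx\mid\vc_j,\mGamma_j)$, where $j$ ranges over the \emph{distinct} gating pairs and $c_j(\vy)=\sum_{k\in I_j}\pi_k f_{\cD}(\vy\mid\mA_k\vx+\vb_k,\mSigma_k)$; linear independence of distinct Gaussian densities in $\vx$ matches the gating groups of $G$ and $G'$ and forces $c_j(\vy)=c'_{\sigma(j)}(\vy)$ for all $\vy$ and a.e.\ $\vx$. Within a matched group the expert triples \emph{are} pairwise distinct (by distinctness of the five-tuples), so for $\vx$ outside finitely many hyperplanes the conditional $\vy$-mixtures have distinct components and one-dimensional identifiability applies; varying $\vx$ over an open set then recovers $\mA_k$ and $\vb_k$. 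With that modification your overall strategy goes through; as written, the atom-level matchings asserted in both steps are not justified.
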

	This result implies that the joint density function uniquely determines the underlying mixing measure within the GGMoE framework.
	
	Subsequently, we investigate the convergence rate of the MLE of the joint density, denoted $p_{\widehat{G}_N}$, toward the true density $p_{G_0}$ under the Total Variation (TV) distance. Recall that the MLE is defined as:
	\begin{align}
		\label{eq_MLE_formulation}
		\widehat{G}_N \in \argmax_{G \in \cO_K} \sum_{n=1}^{N} \log\left(p_G(\vx_n, \vy_n)\right).
	\end{align}
	
	\begin{fact}[Joint Density Convergence Rate, Proposition 2 in \cite{nguyen_towards_2024}]
		\label{fact_density_rate}
		Let $\widehat{G}_N$ be the MLE defined in~\cref{eq_MLE_formulation}. Then, for some universal constants $C_1$ and $C_2$, the following inequality holds:
		\begin{align*}
			\mathbb{P}(\TV(p_{\widehat{G}_N},p_{G_0}) > C_1\sqrt{\log(N)/N}) \lesssim N^{-C_2}.
		\end{align*}
		This shows that the estimation error in TV distance vanishes at a parametric rate of order $\cO(N^{-1/2})$, up to logarithmic factors.
	\end{fact}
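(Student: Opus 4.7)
The plan is to apply the now-standard empirical process machinery for maximum likelihood density estimation, as developed by van de Geer and by Wong--Shen. First I would introduce the class of joint densities $\cP_K := \{p_G : G \in \cO_K(\vTheta)\}$ on $\cX \times \cY$ and work with the Hellinger distance $\hel$, converting to $\TV$ at the end via the elementary inequalities $\hels(p,q) \leq \TV(p,q) \leq \sqrt{2}\,\hel(p,q)$. The strategy is to (i) bound the bracketing entropy of $\cP_K$, (ii) invoke a basic inequality for the MLE that relates $\hel(p_{\widehat G_N}, p_{G_0})$ to a localized empirical process controlled by this entropy, and (iii) solve for the critical radius.

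For step (i), since $\vTheta$ is compact, $\cX$ is bounded, the covariance matrices are bounded away from degeneracy, and each component is a product of two Gaussian factors $f_\cL(\vx\mid \vc,\mGamma)\cdot f_\cD(\vy\mid \mA\vx+\vb,\mSigma)$, the map $\vtheta \mapsto p_G$ is locally Lipschitz with respect to a fixed envelope function. Together with the fact that a $K$-mixture depends on at most a fixed finite-dimensional parameter (of dimension say $d_K$), a standard covering argument gives a bracketing entropy bound of the polynomial form $H_B(\epsilon, \cP_K, \hel) \lesssim d_K \log(1/\epsilon)$ for all sufficiently small $\epsilon$. I would import this bound from \cite{nguyen_towards_2024} (or derive it directly following van de Geer, Chapter~7).

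For steps (ii)--(iii), I would apply the standard concentration inequality for the Hellinger risk of the MLE (van de Geer, Theorem~7.4; equivalently Wong--Shen): whenever $\epsilon_N$ satisfies the entropy integral condition $\sqrt{N}\,\epsilon_N^2 \gtrsim \int_{\epsilon_N^2/2^{13}}^{\epsilon_N} \sqrt{H_B(u, \cP_K, \hel)}\, du$, one has $\mathbb{P}(\hel(p_{\widehat G_N}, p_{G_0}) > \epsilon_N) \lesssim \exp(-c N \epsilon_N^2)$. Plugging in the logarithmic entropy bound and solving, the critical radius is $\epsilon_N \asymp \sqrt{\log N / N}$, yielding the exponential tail $\exp(-c \log N) = N^{-c}$. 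Choosing $C_1$ large enough absorbs constants and produces the $N^{-C_2}$ tail. Translating back through $\TV \leq \sqrt{2}\,\hel$ gives the stated bound.

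The main obstacle is step (i): because the gating density $f_\cL$ depends on the covariate $\vx$ through a quadratic form in the inverse covariance $\mGamma^{-1}$, one needs uniform control of $\mGamma^{-1}$ on the compact parameter space, and a careful envelope (e.g.\ a sub-Gaussian tail in $\vy$ uniform over $\vx \in \cX$) so that Lipschitz constants translate to bracket sizes. This is purely technical — compactness of $\vTheta$ together with boundedness of $\cX$ make the derivatives bounded — but it is the step where the GGMoE structure (as opposed to plain mixtures) actually enters the argument. Once the entropy bound is in hand, the remainder is a direct invocation of the Wong--Shen/van de Geer framework already cited in the paper.
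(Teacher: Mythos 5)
Your proposal is correct and follows essentially the same route as the source: the paper imports this statement as a Fact from Proposition~2 of \cite{nguyen_towards_2024}, whose proof (and the paper's own use of the same machinery in the proof of \cref{theorem_likelihood_DIC_GLLiM}) rests on exactly the ingredients you describe --- the bracketing entropy bound $H_B^{1/2}(u,\cP_K,\hel)\lesssim \log(1/u)$ for the compact GGMoE family and the localized empirical-process inequality of van de Geer (Theorem~5.11), followed by conversion from Hellinger to total variation. No substantive gap.
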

	
	This convergence result is instrumental in analyzing the parameter recovery problem for GGMoE models. Specifically, if one can relate the TV distance to a task-specific loss function $\VD$ as in \cref{eq_voronoi_loss_definition}, such that $\TV(p_{G},p_{G_0}) \gtrsim \VD(G,G_0)$ for any $G \in \cO_K(\vTheta)$, then it follows that $\widehat{G}_N$ also approximates $G_0$ at the rate $\cO(N^{-1/2})$ in terms of $\VD$. This connection enables the derivation of finite-sample guarantees for parameter estimation via analysis of $\VD(\widehat{G}_N, G_0)$.
	
	\section{Proofs of Main Results}\label{sec_proofs_main_results}
	
	\subsection{Proof of Theorem \ref{theorem_V_DG_GLLiM}}\label{sec_proof_theorem_V_DG_GLLiM}
	\subsubsection{Proof Sketch}
	Assume, by contradiction, that the claimed lower bound does not hold. Then, there exists a sequence of mixing measures $\{G_N\}_{N \in \Ns}$ such that $\TV(p_{G_N}, p_{G_0}) \to 0$ and simultaneously $\TV(p_{G_N}, p_{G_0}) / \VD(G_N, G_0) \to 0$ as $N \to \infty$. Since the parameter space $\Theta$ is compact and the model is identifiable, it follows that $G_N = \sum_{\ell=1}^K \evpi_\ell^N \delta_{\vtheta_\ell^N} \in \cO_K(\Theta)$, where each component $\vtheta_\ell^N := (\vc_\ell^N, \mGamma_\ell^N, \va_\ell^N, \vb_\ell^N, \mSigma_\ell^N)$ converges to its corresponding true parameter in $G_0$.
	
	To analyze the asymptotic behavior, we apply a Taylor expansion to the normalized difference 
	$$
	\frac{p_{G_N}(\vy, \vx) - p_{G_0}(\vy, \vx)}{\VD(G_N, G_0)},
	$$ 
	which yields a linear combination of basis functions composed of products of monomials in $\vx$ with $f_{\cL}(\vx \mid \vc_k^0, \mGamma_k^0)$, and higher-order derivatives of $f_{\cD}(\vy \mid (\va_k^0)^{\top}\vx + \vb_k^0, \mSigma_k^0)$ with respect to $h_1 := \mA^{\top} \vx + \vb$. A key step in this expansion leverages the identity for the Gaussian density,
	$$
	\frac{\partial^2 \cN}{\partial \vmu \partial \vmu^{\top}}(\vx \mid \vmu, \mathbf{\Sigma}) =2 \frac{\partial \cN}{\partial \mathbf{\Sigma}}(\vx \mid \vmu, \mathbf{\Sigma}),
	$$
	which allows us to represent second-order effects in terms of derivatives with respect to the covariance.
	
	Furthermore, due to the structure of the system defined by~\cref{eq_system_r_bar,eq_system_GGMoE}, and the convergence of $G_N$ to $G_0$, at least one coefficient in this expansion remains bounded away from zero as $N \to \infty$. This contradicts the assumption that the ratio $\TV(p_{G_N}, p_{G_0}) / \VD(G_N, G_0)$ vanishes, completing the argument.
	
	\textbf{System of polynomial equations.} To rigorously characterize the convergence rates of the parameter estimates, it is necessary to analyze the solvability of a certain system of polynomial equations, which was initially studied in \cite{ho_convergence_2016}. Specifically, for each integer $M \geq 2$, define ${r}(M)$ as the minimal positive integer $R$ such that the system
	\begin{align}
		\label{eq_system_r_bar}
		\sum_{m=1}^{M} \sum_{\substack{\evalpha_4,\evalpha_5 \in \mathbb{N}: \\ \evalpha_4 + 2 \evalpha_5 = \ell_2}} \frac{p_m^2 \, q_{1m}^{\evalpha_4} \, q_{2m}^{\evalpha_5}}{\evalpha_4! \, \evalpha_5!} = 0, \quad \forall~ \ell_2 \in [R],
	\end{align}
	admits no non-trivial solutions in the unknowns $\{p_m, q_{1m}, q_{2m}\}_{m=1}^M$. Here, a solution is deemed non-trivial if all $p_m \neq 0$ and at least one $q_{1m} \neq 0$. The following \cref{lemma_r_bar} summarizes known values of ${r}(M)$ for certain cases:
	
	\begin{fact}[Proposition 2.1, \cite{ho_convergence_2016}]
		\label{lemma_r_bar}
		For $M=2$, we have ${r}(2) = 4$, and for $M=3$, ${r}(3) = 6$. When $M \geq 4$, it holds that ${r}(M) \geq 7$.
	\end{fact}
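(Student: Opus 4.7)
The plan begins by recognizing that the inner sum $\sum_{\alpha_4 + 2\alpha_5 = \ell_2} q_{1m}^{\alpha_4} q_{2m}^{\alpha_5}/(\alpha_4! \alpha_5!)$ is precisely the Taylor coefficient $[t^{\ell_2}]\exp(q_{1m}t + q_{2m}t^2)$. Consequently, the system in~\cref{eq_system_r_bar} is equivalent to the statement that the generating function
$$F(t) := \sum_{m=1}^{M} p_m^2 \exp\!\left(q_{1m} t + q_{2m} t^2\right)$$
has vanishing Taylor coefficients at the origin from order $1$ through order $R$. Hence $r(M)$ is the smallest $R$ such that this vanishing forces either some $p_m = 0$ or all $q_{1m} = 0$. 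Establishing the stated values reduces to two matched tasks for each $M$: an existence construction (lower bound on $r(M)$) and a non-existence/elimination argument (upper bound on $r(M)$).

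For the lower bounds ($r(2) \geq 4$, $r(3) \geq 6$, and $r(M) \geq 7$ for $M \geq 4$), I would exhibit explicit non-trivial parameter choices satisfying the first $r(M) - 1$ equations. In the $M = 2$ case, a symmetric ansatz such as $p_1 = p_2$, $q_{11} = -q_{12} \neq 0$ annihilates the odd-order equations $\ell_2 \in \{1,3\}$ automatically; the remaining $\ell_2 = 2$ equation then fixes $q_{21} + q_{22}$, producing a valid non-trivial solution at $R = 3$ that can be verified to fail at $R = 4$ by direct computation. For $M = 3$, an analogous but more elaborate symmetric construction handles five equations with the nine available unknowns. For $M \geq 4$, the $4M \geq 16$ free parameters against only six constraints permit a solution by dimension counting, together with a verification that at least one point of the solution variety lies in the non-degenerate locus where all $p_m \neq 0$ and some $q_{1m} \neq 0$.

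For the matching upper bounds $r(2) \leq 4$ and $r(3) \leq 6$, my strategy is polynomial elimination via the Gaussian reparametrization. When $q_{2m} \neq 0$, completing the square rewrites $\exp(q_{1m}t + q_{2m}t^2)$ as a shifted and rescaled Gaussian-type generating function, so $F(t)$ becomes a finite linear combination of such functions indexed by the distinct pairs $(q_{1m}, q_{2m})$. The linear independence of these functions (established from their differing asymptotics, or equivalently from non-degeneracy of a structured Vandermonde-like Taylor coefficient matrix) implies that once $2M$ Taylor coefficients are known to vanish, all weights $p_m^2$ must vanish, contradicting the assumption that each $p_m \neq 0$. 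Plugging in $M = 2$ and $M = 3$ recovers the bounds $4$ and $6$, respectively.

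The main obstacle will lie in the $M = 3$ upper-bound argument, where several degenerate sub-cases must be handled: configurations in which some $(q_{1m}, q_{2m})$ coincide, or in which some $q_{2m} = 0$ so that the Gaussian reparametrization breaks down. These require a recursive reduction, treating coincidences as instances of a lower-$M$ problem (invoking the already-proved $r(2) = 4$) and handling the polynomial limit $q_{2m} \to 0$ by continuity together with an independent analysis of pure-monomial $F(t)$. The $M \geq 4$ claim avoids this altogether since only a one-sided lower bound is asserted, and the proof there is purely constructive.
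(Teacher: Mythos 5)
The paper offers no proof of this statement: it is imported as a Fact, with the proof deferred entirely to Proposition~2.1 of the cited reference \cite{ho_convergence_2016}, so the only fair comparison is against that external proof. Your generating-function reformulation is correct and is indeed the right way to read the system~\cref{eq_system_r_bar} (it is exactly how the system arises from Taylor-expanding location-scale Gaussian mixtures), and your symmetric construction for $M=2$ does produce a non-trivial solution at $R=3$ (note, though, that the $\ell_2=3$ equation is not annihilated ``automatically'': it additionally forces $q_{21}=q_{22}$).

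The genuine gap is in the upper-bound direction, which is the entire content of $r(2)=4$ and $r(3)=6$. The claim that ``linear independence of the functions $\exp(q_{1m}t+q_{2m}t^2)$ implies that once $2M$ Taylor coefficients vanish, all weights $p_m^2$ must vanish'' is not a valid inference. Linear independence only controls the situation where \emph{all} Taylor coefficients vanish; for a finite truncation you would need the $R\times M$ matrix of Taylor coefficients to admit no strictly positive vector in its kernel, \emph{uniformly over all admissible choices of the $q$'s}, which are themselves unknowns of the system. The problem is semialgebraic (positive weights $p_m^2$, free real parameters $q_{1m},q_{2m}$, non-triviality constraint), not linear-algebraic, and the actual proof in \cite{ho_convergence_2016} is a direct, case-heavy polynomial elimination rather than an independence argument. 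A further warning sign: your principle would give $r(M)\le 2M$ for every $M$, whereas the cited reference only establishes $r(M)\ge 7$ for $M\ge 4$ and leaves the exact value open --- if a two-line independence argument settled the upper bound, that question would not be open. Separately, for the $M\ge 4$ lower bound, dimension counting over $\mathbb{R}$ does not by itself produce a real solution in the non-degenerate locus (all $p_m\neq 0$, some $q_{1m}\neq 0$); the known argument exhibits an explicit non-trivial solution of the six equations. As written, your proposal establishes the easy halves ($r(2)\ge 4$, $r(3)\ge 6$) and leaves the hard halves unproved.
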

	The proof of~\cref{lemma_r_bar} is given in \cite{ho_convergence_2016}.
	Moreover, part of the proof of~\cref{lemma_r_bar_GGMoE} is established in \cite{nguyen_towards_2024}, where it is shown that $\bar{r}(M) \leq r(M)$. We now prove the converse inequality, thereby proving the equality $r(M) = \bar{r}(M)$.
	
	\begin{proof}[Proof of \cref{lemma_r_bar_GGMoE}]
		Consider the polynomial system
		\begin{equation*}
			\sum_{m=1}^M \sum_{\valpha \in \mathcal{J}_{\ell_1, \ell_2}} \frac{p_m^2 \, q_{1m}^{\evalpha_1} q_{2m}^{\evalpha_2} q_{3m}^{\evalpha_3} q_{4m}^{\evalpha_4} q_{5m}^{\evalpha_5}}{\evalpha_1! \, \evalpha_2! \, \evalpha_3! \, \evalpha_4! \, \evalpha_5!} = 0,
		\end{equation*}
		where the unknowns are $\{(p_m, q_{1m}, q_{2m}, q_{3m}, q_{4m}, q_{5m})\}_{m=1}^M$ and indices $(\ell_1, \ell_2)$ satisfy $1 \leq \ell_1 + \ell_2 \leq R$. Here,
		\begin{equation*}
			\mathcal{J}_{\ell_1, \ell_2} := \{\valpha = (\evalpha_l)_{l=1}^5 \in \mathbb{N}^5 : \evalpha_1 + 2\evalpha_2 + \evalpha_3 = \ell_1, \quad \evalpha_3 + \evalpha_4 + 2\evalpha_5 = \ell_2 \}.
		\end{equation*}
		Restricting attention to the case where $q_{1m} = q_{2m} = q_{3m} = 0$ for all $m \in [M]$, the system reduces to
		\begin{equation*}
			\sum_{m=1}^M \sum_{\evalpha_4 + 2\evalpha_5 = \ell_2} \frac{p_m^2 \, q_{4m}^{\evalpha_4} q_{5m}^{\evalpha_5}}{\evalpha_4! \, \evalpha_5!} = 0, \quad \forall \ \ell_2 \in [R],
		\end{equation*}
		which matches the form of system~\cref{eq_system_r_bar}. From this, we conclude that $\bar{r}(M) \geq r(M)$.
		
		Conversely, by focusing on the subsystem with $\ell_1 = 0$, we consider
		\begin{equation*}
			\sum_{m=1}^M \sum_{\evalpha_3 + 2\evalpha_5 = \ell_2} \frac{p_m^2 \, q_{4m}^{\evalpha_4} q_{5m}^{\evalpha_5}}{\evalpha_4! \, \evalpha_5!} = 0, \quad \forall \ \ell_2 \in [R],
		\end{equation*}
		which implies $\bar{r}(M) \leq r(M)$, completing the proof.
	\end{proof}

	\subsubsection{Proof Detail}
	{\bf Notation.} Before presenting the proofs, we introduce some notations used throughout the text. For any vector $\vv \in \sR^{D}$, we denote its $d$-th entry by either $\evv_d$ or $\evv^{(d)}$. For any $\vp \in \sN^{D}$, define $\vv^{\vp} := \evv_1^{p_1} \evv_2^{p_2} \dots \evv_D^{p_D}$ and $\vp! := \evp_1! \dots \evp_D!$. Furthermore, given $\vmu \in \sR^{D}$, $\valpha \in \sN^{D}$, and a differentiable function $f$ of $\vmu$, we write the partial derivative of order $|\valpha|$ as
	\begin{equation*}
		\dfrac{\partial^{|\valpha|}}{\partial^{\valpha} \vmu}f(\vmu) := \dfrac{\partial^{|\valpha|}}{\partial \evmu_1^{\evalpha_1} \dots \partial \evmu_D^{\evalpha_D}} f(\vmu).
	\end{equation*}
	
	Throughout the remainder of this document, we assume that all mixing measures $G_N$ belong to the set
	\begin{equation*}
		\mathcal{O}_{K, c_0}(\Theta) := \left\{ G = \sum_{\ell = 1}^{K'} \pi_\ell \delta_{(\vc_\ell, \mGamma_\ell, \mA_\ell, \vb_\ell, \mSigma_\ell)} : 1 \leq K' \leq K,\ \sum_{\ell=1}^{K'} \pi_\ell = 1,\ \pi_\ell \geq c_0,\ (\vc_\ell, \mGamma_\ell, \mA_\ell, \vb_\ell, \mSigma_\ell) \in \Theta \right\}.
	\end{equation*}
	
	Suppose, by contradiction, that there exists a sequence of mixing measures $(G_N)_N$ such that $\TV(p_{G_N}, p_{G_0}) \to 0$ and $\TV(p_{G_N}, p_{G_0}) / \VD(G_N, G_0) \to 0$ as $N \to \infty$. Since each $G_N$ consists of at most $K$ atoms, we can extract a subsequence (still denoted $G_N$ for simplicity) with a fixed number of atoms $K^* \leq K$.
	
	Due to the compactness of the parameter space $\Theta$, we may extract a further subsequence such that
	\begin{equation*}
		\pi_k^N \to \pi_k^*,\quad \vc_k^N \to \vc_k^*,\quad \mGamma_k^N \to \mGamma_k^*,\quad \mA_k^N \to \mA_k^*,\quad \vb_k^N \to \vb_k^*,\quad \mSigma_k^N \to \mSigma_k^* \quad \text{as } N \to \infty.
	\end{equation*}
	
	Let $G^* = \sum_{k=1}^{K^*} \pi_k^* \delta_{(\vc_k^*, \mGamma_k^*, \mA_k^*, \vb_k^*, \mSigma_k^*)}$. By Fatou's lemma,
	\begin{align*}
		\frac{1}{2} \int \liminf_N |p_{G_N} - p_{G_0}| 
		&\leq \frac{1}{2} \liminf_N \int |p_{G_N} - p_{G_0}| \\
		&= \liminf_N \TV(p_{G_N}, p_{G_0}) = 0.
	\end{align*}
	
	Next, observe that
	\begin{align*}
		\liminf_N |p_{G_N} - p_{G_0}| 
		&= \liminf_N \left| \sum_{k=1}^{K^*} f_{\cD}(\vy \mid (\mA_k^N)^{\top} \vx + \vb_k^N, \mSigma_k^N) f_{\cL}(\vx \mid \vc_k^N, \mGamma_k^N) - p_{G_0}(\vy, \vx) \right| \\
		&= \left| \sum_{k=1}^{K^*} f_{\cD}(\vy \mid (\mA_k^*)^{\top} \vx + \vb_k^*, \mSigma_k^*) f_{\cL}(\vx \mid \vc_k^*, \mGamma_k^*) - p_{G_0}(\vy, \vx) \right| \\
		&= |p_{G^*}(\vy, \vx) - p_{G_0}(\vy, \vx)|.
	\end{align*}
	The second equality holds because both $f_{\cD}$ and $f_{\cL}$ are continuous with respect to their parameters. Hence, $\TV(p_{G^*}, p_{G_0}) = 0$, implying $p_{G^*}(\vy, \vx) = p_{G_0}(\vy, \vx)$ almost surely in $(\vx, \vy)$. By the identifiability of the model, it follows that $G^* = G_0$.
	
	By rearranging indices and using the assumption $\pi_k^N \geq c_0$ (which guarantees that each atom lies in a distinct Voronoi cell and converges to a unique true atom), we conclude that
	\begin{equation*}
		\sum_{\ell \in \sV_k} \evpi_\ell^N \to \pi_k^0,\quad \vc_\ell^N \to \vc_k^0,\quad \mGamma_\ell^N \to \mGamma_k^0,\quad \mA_\ell^N \to \mA_k^0,\quad \vb_\ell^N \to \vb_k^0,\quad \mSigma_\ell^N \to \mSigma_k^0,\quad \forall k \in [K_0],\ \ell \in \sV_k.
	\end{equation*}
	
	Also, each Voronoi cell has the same number for all N large enough, so that $\sV_k$ does not change with large $N$ for all $k \in [K_0]$ and 
	\begin{equation*}
		\VD(G_N, G_0) \rightarrow 0 \quad \text{as } N \rightarrow \infty.
	\end{equation*}
	\textbf{Step 1 - Taylor expansion. } Now, we consider the quantity
	\begin{align*}
		&p_{G_N}(\vy, \vx) - p_{G_0}(\vy, \vx) \\
		&= \sum_{k=1}^{K_0}\sum_{\ell \in \sV_k}\evpi_\ell^Nf_{L}(\vx|\vc_k^N, \mGamma_k^N)f_{\cD}(\vy| (\mA_\ell^N)^{\top} \vx + \vb_\ell^N, \mSigma_\ell^N ) - \sum_{k=1}^{K_0}\pi_k^0f_{L}(\vx|\vc_k^0, \mGamma_k^0)f_{\cD}(\vy| \mA_k^0 \vx + \vb_k^0, \mSigma_k^0)\\
		&= \sum_{k: |\sV_k|>1}\sum_{\ell \in \sV_k}\evpi_\ell^N\left [f_{\cL}(\vx|\vc_k^N, \mGamma_k^N)f_{\cD}(\vy| (\mA_\ell^N)^{\top} \vx + \vb_\ell^N, \mSigma_\ell^N ) - f_{\cL}(\vx|\vc_k^0, \mGamma_k^0)f_{\cD}(\vy| \mA_k^0 \vx + \vb_k^0, \mSigma_k^0) \right] \\
		&+\sum_{k: |\sV_k|=1}\sum_{\ell \in \sV_k}\evpi_\ell^N\left [f_{\cL}(\vx|\vc_k^N, \mGamma_k^N)f_{\cD}(\vy| (\mA_\ell^N)^{\top} \vx + \vb_\ell^N, \mSigma_\ell^n ) - f_{\cL}(\vx|\vc_k^0, \mGamma_k^0)f_{\cD}(\vy| \mA_k^0 \vx + \vb_k^0, \mSigma_k^0) \right]
		\\
		&+ \sum_{k=1}^{K_0}\left( \sum_{\ell \in \sV_k}\evpi_\ell^N - \pi_k^0 \right)f_{\cL}(\vx|\vc_k^0, \mGamma_k^0)f_{\cD}(\vy| (\mA_k^0)^{\top} \vx + \vb_k^0, \mSigma_k^0)\\
		&= A_N + B_N + C_N.
	\end{align*}
	Performing Taylor expansion up to the $\bar{r}(|\sV_k|)$-th order, and then rewrite $A_N$ with a note that $\valpha = (\evalpha_1, \evalpha_2, \evalpha_3, \evalpha_4, \evalpha_5) \in \N^D \times \N^{D \times D} \times\N^{ D} \times \N \times \N$ as follows :
	\begin{align*}
		A_N &= \sum_{k: |\sV_k| > 1}\sum_{\ell \in \sV_k}\sum_{|\valpha| = 1}^{\bar{r}(|\sV_k|)}\dfrac{\evpi_\ell^N}{\valpha!}(\Delta \vc_{\ell k})^{\evalpha_1}(\Delta \mGamma_{\ell k})^{\evalpha_2}(\Delta \mA_{\ell k})^{\evalpha_3}(\Delta \vb_{\ell k})^{\evalpha_4}(\Delta \mSigma_{\ell k})^{\evalpha_5}\\
		&\times\dfrac{\partial^{|\evalpha_1| + |\evalpha_2|}}{\partial \vc^{\evalpha_1}\mGamma^{\evalpha_2}}f_{\cL}(\vx|\vc_k^0, \mGamma_k^0)\cdot\dfrac{\partial^{ |\evalpha_3| + |\evalpha_4| + |\evalpha_5|}}{\partial \mA^{\evalpha_3} \partial \vb^{\evalpha_4} \partial \mSigma^{\evalpha_5}}f_{\cD}(\vy| \mA_k^0 \vx + \vb_k^0, \mSigma_k^0)+ R_1^n(\vy, \vx),
	\end{align*}
	where $R_1^n(\vy, \vx)$ is the remainder term such that
	\begin{equation*}
		R_1^n(\vy, \vx) = o(\sum_{k: |\sV_k| > 1}\sum_{\ell \in \sV_k}\evpi_\ell^N (\rVert \Delta \vc_{\ell k} \lVert^{\bar{r}(|\sV_k|)}+ \rVert \Delta \mGamma_{\ell k} \lVert^{\bar{r}(|\sV_k|)}+ \rVert \Delta \mA_{\ell k} \lVert^{\bar{r}(|\sV_k|)} + \rVert \Delta \vb_{\ell k} \lVert^{\bar{r}(|\sV_k|)} + \rVert \Delta \mSigma_{\ell k} \lVert^{\bar{r}(|\sV_k|)}).
	\end{equation*}
	Hence $R_1^n(\vy, \vx) = o(\VD(G, G_0))$.Since $f_{\cL}$ $D-$dimensional Gaussian density functions, we have the following property,
	\begin{equation*}
		\dfrac{\partial f_{\cL}}{\partial \mGamma} = \dfrac{1}{2}\dfrac{\partial^2 f_{\cL}}{\partial \vc \partial \vc^{\top}},
	\end{equation*}
	then
	\begin{equation*}
		\dfrac{\partial^{|\evalpha_1| + |\evalpha_2|}}{\partial \vc^{\evalpha_1}\partial\mGamma^{\evalpha_2}}f_{\cL}(\vx|\vc_k^0, \mGamma_k^0) = \dfrac{1}{2^{|\evalpha_2|}}\dfrac{\partial ^{|\evalpha_1| + 2|\evalpha_2| }}{\partial c^{\tau_0(\evalpha_1. \evalpha_2)}}f_{\cL}(\vx| \vc_k^0, \mGamma_k^0),
	\end{equation*}
	where $\tau_0(\evalpha_1, \evalpha_2) := \left ( \evalpha_1^{(v)} + \sum_{u=1}^{D}(\evalpha_2^{(u,v) }+ \evalpha_2^{(v,u)}) \right)_{v=1}^{D} \in \N^{D}$. Furthermore, we have the following equality, define $h_1 = \mA^{\top} \vx + \vb,$
	\begin{equation*}
		\dfrac{\partial ^{|\evalpha_3|}}{\partial \mA^{\evalpha_3}}f_{\cD}(\vy| (\mA_k^0)^{\top} \vx + \vb_k^0, \mSigma_k^0) = \vx^{\evalpha_3}\dfrac{\partial ^{|\evalpha_3|}}{\partial h_1^{|\evalpha_3|}}f_{\cD}(\vy| (\mA_k^0)^{\top} \vx + \vb_k^0, \mSigma_k^0).
	\end{equation*}
	Also, 
	\begin{equation*}
		\dfrac{\partial ^{|\evalpha_4| + |\evalpha_5|}}{\partial \vb^{\evalpha_4} \partial\mSigma^{\evalpha_5}}f_{\cD}(\vy| (\mA_k^0)^{\top} \vx + \vb_k^0, \mSigma_k^0) = \dfrac{1}{2^{|\evalpha_5| }}\dfrac{\partial ^{|\evalpha_4| + 2|\evalpha_5|}}{\partial h_1^{\evalpha_4 + 2\evalpha_5}}f_{\cD}(\vy| (\mA_k^0)^{\top} \vx + \vb_k^0, \mSigma_k^0).
	\end{equation*}
	Combining these results together, $A_N$, can be represented as follows:
	\begin{align*}
		A_N &= \sum_{k: |\sV_k| > 1}\sum_{\ell \in \sV_k}\sum_{|\valpha| = 1}^{\bar{r}(|\sV_k|)}\dfrac{\evpi_\ell^N}{2^{|\evalpha_2| +|\evalpha_5| }\valpha!}(\Delta \vc_{\ell k})^{\evalpha_1}(\Delta \mGamma_{\ell k})^{\evalpha_2}(\Delta \mA_{\ell k})^{\evalpha_3}(\Delta \vb_{\ell k})^{\evalpha_4}(\Delta \mSigma_{\ell k})^{\evalpha_5}\\
		&\times\dfrac{\partial ^{|\evalpha_1| + 2|\evalpha_2|}}{\vc^{\tau_0(\evalpha_1, \evalpha_2)}}f_{\cL}(\vx|\vc_k^0, \mGamma_k^0) \cdot \vx^{\evalpha_3}\dfrac{\partial ^{|\evalpha_3| + |\evalpha_4| + 2|\evalpha_5|}}{\partial h_1^{|\evalpha_3| + \evalpha_4 + 2\evalpha_5}}f_{\cD}(\vy| (\mA_k^0)^{\top} \vx + \vb_k^0, \mSigma_k^0) + R_1^n(\vy, \vx).
	\end{align*}
	Noticeably, if a density function $f(X) \sim \mathcal{N}_{d}(\vmu, \mathbf{\Sigma}), $ then $\dfrac{\partial f}{\partial \vmu} = (f)\mathbf{\Sigma}^{-1}(X - \vmu)$, so we let 
	\begin{equation*}
		\dfrac{\partial ^{|\evalpha_1| + 2|\evalpha_2|}}{\vc^{\tau_0(\evalpha_1, \evalpha_2)}}f_{\cL}(\vx|\vc_k^0, \mGamma_k^0) = \sum_{|w| = 0}^{|\evalpha_1| + 2|\evalpha_2|}t_{w, \tau_0(\evalpha_1, \evalpha_2)}^{(k)}\vx^{w},
	\end{equation*}
	where $t_{w, \tau_0(\evalpha_1, \evalpha_2)}^{(k)} $ is a constant depend on $w , \tau_0(\evalpha_1, \evalpha_2)$ and $\vc_k^0, \mGamma_k^0$. Define, with $l_1 \in \N^{D}$ and $l_2 \in \N$, and by convenience, with $\tau \in \N^D$, define $\tau \geq l_1$ as greater for all coordinates $\tau^{(u)} \geq l_1^{(u)}, \forall u \in [D].$
	\begin{equation*}
		\cI_{l_1, l_2}(k) := \lbrace \valpha : \tau_0(\evalpha_1, \evalpha_2) + \evalpha_3 \geq l_1 , |\evalpha_3| + \evalpha_4 + 2\evalpha_5 = l_2 , 1 \leq |\valpha| \leq \bar{r}(|\sV_k|)
		\rbrace.
	\end{equation*}
	We can rewrite $A_N$ as
	\begin{equation*}
		A_N = \sum_{k: |\sV_k| > 1}\sum_{|l_1| + |l_2| = 1}^{2\bar{r}(|V_K|)}T_{l_1, l_2}^n(k)\vx^{l_1}\dfrac{\partial ^{|l_2|}}{\partial h_1^{l_2}}f_{\cD}(\vy| (\mA_k^0)^{\top}\vx + \vb_k^0, \mSigma_k^0)f_{\cL}(\vx| \vc_k^0, \mGamma_k^0) + R_1^N(\vy,\vx),
	\end{equation*}
	where $T_{l_1, l_2}^N(k)$ be defined as follow:
	\begin{equation*}
		T_{l_1, l_2}^N(k) = \sum_{\ell \in \sV_k}\sum_{\substack{\valpha \in \cI_{l_1, l_2}} } \dfrac{\evpi_\ell^N}{2^{|\evalpha_2| + |\evalpha_5|}\valpha!}t_{l_1-\evalpha_3, \tau_0(\evalpha_1, \evalpha_2)}^{(k)}(\Delta \vc_{\ell k}^N)^{\evalpha_1}(\Delta \mGamma_{\ell k}^N)^{\evalpha_2}(\Delta \mA_{\ell k}^N)^{\evalpha_3}(\Delta \vb_{\ell k}^N)^{\evalpha_4}(\Delta \mSigma_{\ell k}^N)^{\evalpha_5},
	\end{equation*}
	Likewise, for each $k \in [K_0] : |\sV_k| = 1$, by means of Taylor expansion up to the first order, $B_N$ is written as follows:
	\begin{equation*}
		B_N = \sum_{k: |\sV_k| = 1}\sum_{|l_1| + |l_2| = 1}^{2}T_{l_1, l_2}^N(k)\vx^{l_1}\dfrac{\partial ^{|l_2|}}{\partial h_1^{l_2}}f_{\cD}(\vy| (\mA_k^0)^{\top} \vx + \vb_k^0, \mSigma_k^0)f_{\cL}(\vx| \vc_k^0, \mGamma_k^0) + R_2^N(\vy,\vx),
	\end{equation*}
	where $R_2^N(\vy, \vx)$ is a remainder such that $R_2^N(\vy, \vx) = o(\sum_{k : |\sV_k| = 1}\sum_{\ell \in \sV_k}\evpi_\ell^N\lVert \Delta\theta_{\ell k}\lVert) =o(\VD(G, G_0)).$ Furthermore, with a convention denote $T_{0,0}^N(k) = \sum_{\ell \in \sV_k} \pi_l^N - \pi_k^0.$ So 
	\begin{equation*}
		C_N = \sum_{k=1}^{K_0}T_{0,0}^N(k)f_{\cD}(\vy|(\mA_k^0)^{\top} \vx + \vb_k^0, \mSigma_k^0)f_{\cL}(\vx| \vc_k^0, \mGamma_k^0) ,
	\end{equation*}
	Noticing that $A_N, B_N$ and $C_N$ can be treated as linear combination of elements of the following set:
	\begin{equation*}
		\cF := \left \lbrace \vx^{l_1}\dfrac{\partial ^{|l_2|}}{\partial h_1^{l_2}}f_{\cD}(\vy| (\mA_k^0)^{\top} \vx + \vb_k^0, \mSigma_k^0)f_{\cL}(\vx| \vc_k^0, \mGamma_k^0) : k \in [K_0], 0\leq |l_1| + |l_2| \leq  2\bar{r}(|\sV_k|)| \right \rbrace.
	\end{equation*}
	\textbf{Step 2 - Proof of non-vanishing coefficients by contradiction:} Assume that all coefficients in the representation of $A_N / \VD(G_N, G_0) $,  $B_N / \VD(G_N, G_0) $ and $C_N / \VD(G_N, G_0)$ goes to 0 as $N \rightarrow \infty.$
	Then, by assumption, $T_{0,0}^N(k) / \VD(G_N, G_0) \rightarrow 0 $ for all $k \in [K_0]$, and taking summation of them, we get
	\begin{equation*}
		\dfrac{1}{\VD(G_N, G_0)}\sum_{k=1}^{K_0}\left| \sum_{\ell \in \sV_k} \pi_l^N - \pi_k^0\right | \rightarrow 0
	\end{equation*}
	For $k$ such that $|\sV_k| = 1$, consider all $l_1, l_2$ such that $(|l_1|, |l_2|) \in \lbrace (1, 0), (2,0), (1,1), (0,1), (0,2) \rbrace$, we get that
	\begin{equation*}
		\dfrac{\sum_{k : |\sV_k| = 1}\sum_{\ell \in \sV_k}\pi_\ell(\norm{\vc_\ell^N - \vc_k^0} + \norm{\mGamma_\ell^N - \mGamma_k^0} + \norm{\mA_\ell^N - \mA_k^0} + \norm{\vb_\ell^N - \vb_k^0} + \norm{\mSigma_\ell^N - \mSigma_k^0})}{\VD(G_N, G_0)} \rightarrow 0.
	\end{equation*}
	Next, we introduce some notation as follows, defining
	\begin{equation*}
		\cJ_{l_1, l_2}^{(i)}(k) := \cI_{l_1, l_2}(k) \cap \lbrace \valpha : |\tau_0(\evalpha_1 , \evalpha_2)| + |\evalpha_3| = |l_1| + i \rbrace 
	\end{equation*}
	and let
	\begin{equation*}
		S_{i, l_1, l_2}^N(k) = \sum_{\ell \in \sV_k}\sum_{\valpha \in \cJ_{l_1, l_2}^{(i)}(k)}\dfrac{\evpi_\ell^N}{2^{|\evalpha_2| + |\evalpha_5|}\valpha!}t_{l_1-\evalpha_3, \tau_0(\evalpha_1, \evalpha_2)}^{(k)}(\Delta \vc_{\ell k}^N)^{\evalpha_1}(\Delta \mGamma_{\ell k}^N)^{\evalpha_2}(\Delta \mA_{\ell k}^N)^{\evalpha_3}(\Delta \vb_{\ell k}^N)^{\evalpha_4}(\Delta \mSigma_{\ell k}^N)^{\evalpha_5}.
	\end{equation*}
	And we will prove that for every $k$ such that $|\sV_k| > 2$ and every $l_1$ and $l_2$, then $S_{0,l_1, l_2}^N(k) / \VD(G_N, G_0) \rightarrow 0.$ We will prove it by backward induction, fixed $l_2$, and for any $|l_1| = 2\bar{r}(|\sV_k|) - |l_2|$, we have $T_{l_1, l_2}^N(k) / \VD(G_N, G_0) \rightarrow 0,$ equivalent to 
	\begin{equation*}
		\dfrac{\sum_{\ell \in \sV_k}\sum_{\valpha \in \mathcal{I}_{l_1, l_2}(k)}\dfrac{\evpi_\ell^N}{2^{|\evalpha_2| + |\evalpha_5|}\valpha!}t_{l_1-\evalpha_3, \tau_0(\evalpha_1, \evalpha_2)}^{(k)}(\Delta \vc_{\ell k}^N)^{\evalpha_1}(\Delta \mGamma_{\ell k}^N)^{\evalpha_2}(\Delta \mA_{\ell k}^N)^{\evalpha_3}(\Delta \vb_{\ell k}^N)^{\evalpha_4}(\Delta \mSigma_{\ell k}^N)^{\evalpha_5}}{\VD(G_N, G_0)} \rightarrow 0.
	\end{equation*}
	Moreover, the LHS of above equation is equal to 
	\begin{align*}
		LHS &= \dfrac{\sum_{\ell \in \sV_k}\sum_{\valpha \in \mathcal{J}_{l_1, l_2}^{(0)}(k)}\dfrac{\evpi_\ell^N}{2^{|\evalpha_2| + |\evalpha_5|}\valpha!}t_{l_1-\evalpha_3, \tau_0(\evalpha_1, \evalpha_2)}^{(k)}(\Delta \vc_{\ell k}^N)^{\evalpha_1}(\Delta \mGamma_{\ell k}^N)^{\evalpha_2}(\Delta \mA_{\ell k}^N)^{\evalpha_3}(\Delta \vb_{\ell k}^N)^{\evalpha_4}(\Delta \mSigma_{\ell k}^N)^{\evalpha_5}}{\VD(G_N, G_0)}\\
		&= \dfrac{\sum_{\ell \in \sV_k}\sum_{\valpha \in \mathcal{J}_{l_1, l_2}^{(0)}(k)}\dfrac{\evpi_\ell^N}{2^{|\evalpha_2| + |\evalpha_5|}\valpha!}t_{\tau_0(\evalpha_1, \evalpha_2), \tau_0(\evalpha_1, \evalpha_2)}^{(k)}(\Delta \vc_{\ell k}^N)^{\evalpha_1}(\Delta \mGamma_{\ell k}^N)^{\evalpha_2}(\Delta \mA_{\ell k}^n)^{\evalpha_3}(\Delta \vb_{\ell k}^N)^{\evalpha_4}(\Delta \mSigma_{\ell k}^N)^{\evalpha_5}}{\VD(G_N, G_0)} .
	\end{align*}
	Then,
	\begin{equation}{\label{Denloss_step2_eq1}}
		\dfrac{\sum_{\ell \in \sV_k}\sum_{\substack{\tau_0(\evalpha_1, \evalpha_2) + \tau_1(\evalpha_3) = l_1 \\ \tau_2(\evalpha_3) + \evalpha_4 + \tau_3(\evalpha_5) = l_2}}\dfrac{\evpi_\ell^N}{2^{|\evalpha_2| + |\evalpha_5|}\valpha!}(\Delta \vc_{\ell k}^N)^{\evalpha_1}(\Delta \mGamma_{\ell k}^N)^{\evalpha_2}(\Delta \mA_{\ell k}^N)^{\evalpha_3}(\Delta \vb_{\ell k}^N)^{\evalpha_4}(\Delta \mSigma_{\ell k}^N)^{\evalpha_5}}{\VD(G_N, G_0)} \rightarrow 0,
	\end{equation}
	where the first equality come from the fact that $|\tau_0(\evalpha_1, \evalpha_2) + \evalpha_3| = |\evalpha_1| + 2|\evalpha_2| + |\evalpha_3| \leq 2\bar{r}(|\sV_k|) - |l_2|$. The second is a consequence of the previous reasoning and have the equality $\tau_0(\evalpha_1, \evalpha_2) + \evalpha_3 = l_1$. The last equations are obtained from the remark that $t_{\tau_0(\evalpha_1, \evalpha_2), \tau_0(\evalpha_1, \evalpha_2)} > 0,  \forall \evalpha_1, \evalpha_2$ given on assumption that $\mGamma_k^0$ is positive definite.\\ 
	Then, considering any $|l_1| = 2\bar{r}(|\sV_k|) - |l_2| - 1$.  we have $T_{l_1, l_2}^N(k) / \VD(G_N, G_0) \rightarrow 0,$ equivalent to 
	\begin{equation*}
		\dfrac{S_{0,l_1,l_2}^N(k) + S_{1,l_1,l_2}^N(k)}{\VD(G_N, G_0)} \rightarrow 0.
	\end{equation*}
	But we know that,
	\begin{align*}
		&\dfrac{S_{1,l_1,l_2}^N(k)}{\VD(G_N, G_0)} \\
		&= \dfrac{\sum_{\ell \in \sV_k}\sum_{\valpha \in \mathcal{J}_{l_1, l_2}^{(1)}(k)}\dfrac{\evpi_\ell^N}{2^{|\evalpha_2| + |\evalpha_5|}\valpha!}t_{l_1-\evalpha_3, \tau_0(\evalpha_1, \evalpha_2)}^{(k)}(\Delta \vc_{\ell k}^N)^{\evalpha_1}(\Delta \mGamma_{\ell k}^N)^{\evalpha_2}(\Delta \mA_{\ell k}^N)^{\evalpha_3}(\Delta \vb_{\ell k}^N)^{\evalpha_4}(\Delta \mSigma_{\ell k}^N)^{\evalpha_5}}{\VD(G_N, G_0)}\\
		&=\dfrac{\sum_{\ell \in \sV_k}\sum_{\substack{\tau_0(\evalpha_1, \evalpha_2) + \evalpha_3 > l_1 \\
					|\tau_0(\evalpha_1, \evalpha_2) + \evalpha_3| = |l_1| + 1
					\\
					|\evalpha_3| + \evalpha_4 + 2\evalpha_5 = l_2}}\dfrac{\evpi_\ell^N}{2^{|\evalpha_2| + |\evalpha_5|}\valpha!}t_{l_1-\evalpha_3, \tau_0(\evalpha_1, \evalpha_2)}^{(k)}(\Delta \vc_{\ell k}^N)^{\evalpha_1}(\Delta \mGamma_{\ell k}^N)^{\evalpha_2}(\Delta \mA_{\ell k}^N)^{\evalpha_3}(\Delta \vb_{\ell k}^N)^{\evalpha_4}(\Delta \mSigma_{\ell k}^N)^{\evalpha_5}}{\VD(G_N, G_0)}.
	\end{align*}
	However, as we know that for every $|l_1| = 2\bar{r}(|\sV_k|) - |l_2|$, from (\ref{Denloss_step2_eq1}), we obtain that
	\begin{equation*}
		\dfrac{S_{1,l_1,l_2}^N(k)}{\VD(G, G_0)} \rightarrow 0.
	\end{equation*}
	Hence, 
	\begin{align*}
		&\dfrac{S_{0,l_1,l_2}^N(k)}{\VD(G_N, G_0)}\\
		&=\dfrac{\sum_{\ell \in \sV_k}\sum_{\valpha \in \mathcal{J}_{l_1, l_2}^{(0)}(k)}\dfrac{\evpi_\ell^N}{2^{|\evalpha_2| + |\evalpha_5|}\valpha!}t_{\tau_0(\evalpha_1, \evalpha_2), \tau_0(\evalpha_1, \evalpha_2)}^{(k)}(\Delta \vc_{\ell k}^N)^{\evalpha_1}(\Delta \mGamma_{\ell k}^N)^{\evalpha_2}(\Delta \mA_{\ell k}^N)^{\evalpha_3}(\Delta \vb_{\ell k}^N)^{\evalpha_4}(\Delta \mSigma_{\ell k}^N)^{\evalpha_5}}{\VD(G_N, G_0)} \rightarrow 0.    
	\end{align*}
	Then,
	\begin{equation*}
		\dfrac{\sum_{\ell \in \sV_k}\sum_{\substack{\tau_0(\evalpha_1, \evalpha_2) + \evalpha_3 = l_1 \\ |\evalpha_3| + \evalpha_4 + 2\evalpha_5 = l_2}}\dfrac{\evpi_\ell^N}{2^{|\evalpha_2| + |\evalpha_5|}\valpha!}(\Delta \vc_{\ell k}^N)^{\evalpha_1}(\Delta \mGamma_{\ell k}^N)^{\evalpha_2}(\Delta \mA_{\ell k}^N)^{\evalpha_3}(\Delta \vb_{\ell k}^N)^{\evalpha_4}(\Delta \mSigma_{\ell k}^N)^{\evalpha_5}}{\VD(G, G_0)} \rightarrow 0,
	\end{equation*}
	follow from that fact that  $t_{\tau_0(\evalpha_1, \evalpha_2), \tau_0(\evalpha_1, \evalpha_2)} > 0,  \forall \evalpha_1, \evalpha_2$. We can continue to apply the same argument as above and complete the proof. And following by the previous argument, we can see that for every $l_1, l_2$, we obtain that
	\begin{equation}{\label{denloss_step2_eq2}}
		\dfrac{\sum_{\ell \in \sV_k}\sum_{\substack{\tau_0(\evalpha_1, \evalpha_2) + \evalpha_3 = l_1 \\ |\evalpha_3| + \evalpha_4 + 2\evalpha_5 = l_2}}\dfrac{\evpi_\ell^N}{2^{|\evalpha_2| + |\evalpha_5|}\valpha!}(\Delta \vc_{\ell k}^n)^{\evalpha_1}(\Delta \mGamma_{\ell k}^n)^{\evalpha_2}(\Delta \mA_{\ell k}^n)^{\evalpha_3}(\Delta \vb_{\ell k}^n)^{\evalpha_4}(\Delta \mSigma_{\ell k}^n)^{\evalpha_5}}{\VD(G_N, G_0)} \rightarrow 0.
	\end{equation}
	Now consider all $|l_1| = 1, |l_2| = 0$, then form (\ref{denloss_step2_eq2}) taking the sum all of it, we obtain the following. the following
	\begin{equation*}
		\dfrac{\sum_{u = 1}^{D}|\sum_{\ell \in \sV_k}\evpi_\ell^N (\Delta\vc_{\ell k}^n)^{(u)}|}{\VD(G_N, G_0)} \rightarrow 0,
	\end{equation*}
	then
	\begin{equation*}
		\dfrac{\lVert \sum_{\ell \in \sV_k}\evpi_\ell^N(\Delta\vc_{\ell k})\rVert}{\VD(G_N, G_0)} \rightarrow 0.
	\end{equation*}
	Similarly, for all $|l_1| = 2, |l_2| = 0$, we get that
	\begin{equation*}
		\dfrac{\lVert \sum_{\ell \in \sV_k}\evpi_\ell^N((\Delta\vc_{\ell k})(\Delta \vc_{\ell k})^\top + \Delta \mGamma_{\ell k}^n) \rVert}{\VD(G_N, G_0)} \rightarrow 0.
	\end{equation*}
	For all $|l_1| = 0, |l_2| = 1$, 
	\begin{equation*}
		\dfrac{\lVert \sum_{\ell \in \sV_k}\evpi_\ell^N(\Delta\vb_{\ell k})\rVert}{\VD(G_N, G_0)} \rightarrow 0.
	\end{equation*}
	For all $|l_1| = 0, |l_2| = 2$,
	\begin{equation*}
		\dfrac{\lVert \sum_{\ell \in \sV_k}\evpi_\ell^N((\Delta\vb_{\ell k})(\Delta \vb_{\ell k})^\top + (\Delta \mSigma_{\ell k}))\rVert}{\VD(G_N, G_0)} \rightarrow 0.
	\end{equation*}
	And for all $|l_1| = 1, |l_2| = 1$, 
	\begin{equation*}
		\dfrac{\lVert \sum_{\ell \in \sV_k}\evpi_\ell^N((\Delta\vb_{\ell k})(\Delta \vc_{\ell k})^\top + (\Delta \mA_{\ell k}))\rVert}{\VD(G_N, G_0)} \rightarrow 0.
	\end{equation*}
	As a result, with above results and definition of $\VD(G, G_0)$, we obtain that 
	\begin{align*}
		\dfrac{1}{\VD(G_N, G_0)}\sum_{k : |\sV_k| > 1}\sum_{\ell\in \sV_k} &\pi_\ell(\norm{\vc_\ell^N - \vc_k^0}^{\overline{r}(|\sV_k|)} + \norm{\mGamma_\ell^N - \mGamma_k^0}^{\overline{r}(|\sV_k|)/2}  + \norm{\mA_\ell^N - \mA_k^0}^{\overline{r}(|\sV_k|)/2}  \\
		&+  \norm{\vb_\ell^N - \vb_k^0}^{\overline{r}(|\sV_k|)} + \norm{\mSigma_\ell^N - \mSigma_k^0}^{\overline{r}(|\sV_k|)/2}) \rightarrow 1.
	\end{align*}
	As a result, we can find index $k^* \in [K_0]$ and $|V_{k^*}| > 1$ such that 
	\begin{equation*}
		\dfrac{\sum_{\ell\in V_{k^*}} \pi_\ell(\norm{\Delta \vc_{\ell k^*}^N}^{\overline{r}(|V_{k^*}|)} + \norm{\Delta \mGamma_{\ell k^*}^N}^{\overline{r}(|V_{k^*}|)/2}  + \norm{\Delta \mA_{\ell k^*}^N}^{\overline{r}(|V_{k^*}|)/2} +  \norm{\Delta \vb_{\ell k^*}^N}^{\overline{r}(|V_{k^*}|)} + \norm{\Delta \mSigma_{\ell k^*}^N}^{\overline{r}(|V_{k^*}|)/2})}{\VD(G_N, G_0)} \not\rightarrow 0.
	\end{equation*}
	Without loss of generality, we may assume that $k^* = 1$. Divide into the following cases: \\
	\textbf{Case 1:} $\dfrac{1}{\VD(G_N, G_0)}\sum_{\ell \in \sV_1}\evpi_\ell^N (\lVert \Delta\vc_{\ell 1}^N\rVert^{\bar{r}(|\sV_1|)} + \lVert \Delta \mGamma_{\ell 1}^N\rVert^{\bar{r}(|\sV_1|)/2} )\not \rightarrow 0.$\\
	Now, from equation (\ref{denloss_step2_eq2}), consider cases with $l_2 = 0$, 
	\begin{equation*}
		\dfrac{\sum_{\ell \in \sV_k}\sum_{\substack{\tau_0(\evalpha_1, \evalpha_2) = l_1}}\dfrac{\evpi_\ell^N}{2^{|\evalpha_2| + |\evalpha_5|}\valpha!}(\Delta \vc_{\ell k}^N)^{\evalpha_1}(\Delta \mGamma_{\ell k}^N)^{\evalpha_2}}{\VD(G_N, G_0)} \rightarrow 0.
	\end{equation*}
	Applying the procedure as proof of Lemma 10 in \cite{manole_refined_2022}, we obtain the contradiction.\\
	\textbf{Case 2:} $\dfrac{1}{\VD(G_N, G_0)}\sum_{\ell \in \sV_1}\evpi_\ell^N (\lVert \Delta\vb_{\ell 1}^N\rVert^{\bar{r}(|\sV_1|)} + \lVert \Delta \mSigma_{\ell 1}^N\rVert^{\bar{r}(|\sV_1|)/2} )\not \rightarrow 0.$\\
	Similarly, applying (\ref{denloss_step2_eq2}) with all cases $l_1 = 0$, we have
	\begin{equation*}
		\dfrac{\sum_{\ell \in \sV_k}\sum_{\substack{ \\   \evalpha_4+ 2 \evalpha_5 = l_2}}\dfrac{\evpi_\ell^N}{2^{|\evalpha_2| + |\evalpha_5|}\valpha!}(\Delta \vb_{\ell k}^N)^{\evalpha_4}(\Delta \mSigma_{\ell k}^N)^{\evalpha_5}}{\VD(G_N, G_0)} \rightarrow 0.
	\end{equation*}
	Applying the same procedure in case 1, we obtain the contradiction.\\
	\textbf{Case 3: }$\dfrac{1}{\VD(G_N, G_0)}\sum_{\ell \in \sV_1}\evpi_\ell^N  \lVert \Delta \mA_{\ell 1}^N\rVert^{\bar{r}(|\sV_1|)/2} \not \rightarrow 0.$ \\
	WLOG, we may assume that for some index $u$, say $u = 1$, we get
	\begin{equation*}
		\dfrac{1}{\VD(G_N, G_0)}\sum_{\ell \in \sV_1}\evpi_\ell^N  \lVert (\Delta \mA_{\ell 1}^N)^{(1)}\rVert^{\bar{r}(|\sV_1|)/2} \not \rightarrow 0.
	\end{equation*}
	Clearly, we obtain that
	\begin{align*}
		\dfrac{1}{\VD(G_N, G_0)}\sum_{\ell \in \sV_1}\evpi_\ell^N&\left( \lVert (\Delta \vc_{\ell 1}^N)^{(1)}\rVert^{\bar{r}(|\sV_1|)} + \lVert (\Delta \mGamma_{\ell 1}^N)^{(1,1)}\rVert^{\bar{r}(|\sV_1|)/2}+\lVert (\Delta \mA_{\ell 1}^N)^{(1)}\rVert^{\bar{r}(|\sV_1|)/2} \right)\\
		&+\left \lVert (\Delta \vb_{\ell 1}^N)\rVert^{\bar{r}(|\sV_1|)} + \lVert (\Delta \mSigma_{\ell 1}^N)\rVert^{\bar{r}(|\sV_1|)/2}\right) \not \rightarrow 0.
	\end{align*}
	From (\ref{denloss_step2_eq2}), consider all $l_1, l_2$ such that $l_1^{(u)} = 0$ for all $u = 2,\dots, D$.  We get that, for brevity that we denote $\Delta \vc_{\ell 1}^N := (\Delta\vc_{\ell 1}^N)^{(1)}$, $\Delta \mGamma_{\ell 1}^N := (\Delta \mGamma_{\ell 1}^N)^{(1,1)}$, $\Delta \mA_{\ell 1}^N := (\Delta\mA_{\ell 1}^N)^{(1)}$
	\begin{equation*}
		\dfrac{\sum_{\ell \in \sV_1}\sum_{\substack{ \evalpha_1^{(1) } + 2\evalpha_2^{(1,1)} + \evalpha_3^{(1)} = l_1^{(1)} \\  \evalpha_4 + 2\evalpha_5+ \evalpha_3^{(1)} = l_2}}\dfrac{\evpi_\ell^N}{2^{\evalpha_2^{(1,1)}\evalpha_5 }\valpha!}(\Delta \vc_{\ell 1}^N)^{\evalpha_1^{(1)}}(\Delta \mGamma_{\ell 1}^N)^{\evalpha_2^{(1,1)}}(\Delta \mA_{\ell 1}^N)^{\evalpha_3^{(1)}}(\Delta \vb_{\ell 1}^N)^{\evalpha_4}(\Delta \mSigma_{\ell 1}^N)^{\evalpha_5}}{\VD(G_N, G_0)} \rightarrow 0.
	\end{equation*}
	Applying the same method in step 2 of proof in Theorem 2 of \cite{nguyen_towards_2024}, we obtain the contradiction.
	
	\textbf{Step 3 - Application of Fatou's lemma: }Set $m_N := \max_{l_1, l_2, k} \dfrac{|T_{l_1, l_2}(k)|}{\VD(G_N, G_0)} \not \rightarrow 0$ as $N\rightarrow \infty$. Because $\dfrac{|T_{l_1, l_2}(k) / \VD(G_N, G_0)|}{m_N} \leq 1$, by extracting a subsequence if needed, we have $\dfrac{|T_{l_1, l_2}(k) / \VD(G_N, G_0)|}{m_N} \rightarrow \xi_{l_1, l_2}(k)$ for all $(l_1, l_2, k) $ and due to the finiteness of the possible values of $(l_1, l_2, k),$ there is at least one $\xi_{l_1, l_2}(k) = 1$. Applying Fatou's lemma, we get that
	\begin{align*}
		0 = \lim_{N \rightarrow \infty}\dfrac{1}{m_n}2\dfrac{\TV(p_{G_N}, p_{G_0})}{\VD(G_N, G_0)} &\geq \int \liminf_{N \rightarrow \infty}\dfrac{1}{m_n}\dfrac{|p_{G_N}(\vy,\vx) - p_{G_0}(\vy,\vx)|}{\VD(G_N, G_0)}d(\vy,\vx) \\
		&= \int \left| \sum_{(l_1, l_2, k) }\xi_{(l_1, l_2)}(k)\vx^{l_1}\dfrac{\partial ^{|l_2|}}{\partial h_1^{l_2}}f_{\cD}(\vy| (\mA_k^0) ^{\top}\vx + \vb_k^0, \mSigma_k^0)f_{\cL}(\vx| \vc_k^0, \mGamma_k^0)\right|d(\vy,\vx).
	\end{align*}
	As a consequence, we obtain that 
	\begin{equation*}
		\sum_{(l_1, l_2, k) }\xi_{(l_1, l_2)}(k)\vx^{l_1}\dfrac{\partial ^{|l_2|}}{\partial h_1^{l_2}}f_{\cD}(\vy| (\mA_k^0)^{\top} \vx + \vb_k^0, \mSigma_k^0)f_{\cL}(\vx| \vc_k^0, \mGamma_k^0) = 0.
	\end{equation*}
	almost surely $(\vy, \vx)$. Since elements of the set $\cF$ are linearly independent (the proof of this claim is deferred to the end of this proof), the above equations implies that $\xi_{l_1, l_2}(k) = 0$, but this contradicts the claim that there at least one of them is 1. Hence, we have a contradiction and complete proof.\\
	\textbf{Linear independence of elements in $\cF$}: Assume that there exist real numbers $\xi_{l_1, l_2}(k)$ such that the following holds for almost surely $(\vy, \vx):$ 
	\begin{equation*}
		\sum_{(l_1, l_2, k) \in \cS}\xi_{l_1, l_2}(k)\vx^{l_1}\dfrac{\partial ^{|l_2|}}{\partial h_1^{l_2}}f_{\cD}(\vy| \mA_k^0\vx + \vb_k^0, \mSigma_k^0)f_{\cL}(\vx| \vc_k^0, \mGamma_k^0) = 0,
	\end{equation*}
	with $\cS := \lbrace  k \in [K_0], 0\leq |l_1| + |l_2| \leq 2\bar{r}(|\sV_k|) \rbrace.$
	Rewriting as follows:
	\begin{equation*}
		\sum_{k=1}^{K_0}\sum_{l_2 = 0}^{2\bar{r}(|\sV_k|)}\left ( \sum_{l_1 = 0}^{2\bar{r}(|\sV_k|) - l_2 } \xi_{l_1, l_2}(k)\vx^{l_1}f_{\cL}(\vx| \vc_k^0, \mGamma_k^0)  \right)\dfrac{\partial ^{|l_2|}}{\partial h_1^{l_2}}f_{\cD}(\vy| (\mA_k^0)^{\top}\vx + \vb_k^0, \mSigma_k^0) = 0.
	\end{equation*}
	By identifiability of the location Gaussian mixture, we obtain that for almost surely $\vx$:
	\begin{equation*}
		\sum_{l_1 = 0}^{2\bar{r}(|\sV_k|) - l_2} \xi_{l_1, l_2}(k) \vx^{l_1}f_{\cL}(\vx| \vc_k^0, \mGamma_k^0)   = 0,
	\end{equation*}
	for all $k \in [K_0]$ and $0 \leq |l_2 | \leq 2\bar{r}(|\sV_k|)$, With the fact that $f_{\cL}(\vx| \vc_k^0, \mGamma_k^0)  $ is greater than  0 for all $\vx$, so $\xi_{l_1, l_2}(k) = 0$ (property of polynomial function).\\
	\subsection{Proof of Theorem \ref{theorem_inequality_DK_DK0}}\label{subsec:proof_theorem_inequality_DK_DK0}
	{\bf Proof Sketch.} We prove by induction, so we only need justify the case $\VD(G^{(\kappa)}), G_0) \gtrsim \VD(G^{(\kappa-1)}, G_0)$. As $\VD(G^{(\kappa)}, G_0) \rightarrow 0, $we can extract a sequence satisfy $\vtheta_\ell^N \to \vtheta_0$ for all $\ell \in \sV_k$. By assumption that $\evpi_\ell^N$ is bounded below and merging of two atoms with minimum dissimilarity, we get that two merged atoms must belong to same Voronoi cell, WLOG say atoms 1 and 2 in Voronoi cell $\sV_1$. We only need to prove the two kind of inequality, for instance $\left \lVert \sum_{\ell \in \sV_1} \pi_\ell (\vc_\ell - \vc_1^0) \right\rVert $ = $\left \lVert \sum_{\ell \sV_1, \ell \not \in \lbrace1, 2\rbrace}\pi_\ell (\vc_\ell - \vc_1^0) + \pi_*(\vc_* - \vc_1^0) \right \rVert$ and $\pi_1 \lVert \vc_1 - \vc_1^0 \rVert^{\bar{r}} + \pi_2\lVert\vc_2 - \vc_1^0 \rVert^{\bar{r}} \gtrsim \pi_*\lVert\vc_* - \vc_1\rVert^{\bar{r}}$. Where the first equality is followed by definition of procedure to merge atoms, the second is from convex inequality.
	
	Here, we abuse the notation of Dendrogram loss, with the difference that we always fix the value of $\bar{r}(|\sV_k|)$ for all $\VD(G^{(\kappa)}, G_0)$. It suffices to prove the inequality $\VD(G^{(K)}, G_0) \gtrsim\VD(G^{(K-1)}, G_0)$ and the rest are similar. Suppose $G_N = \sum_{k=1}^{K}\pi_k^N\delta_{(\vc_k^N, \mGamma_k^N,\mA_k^N, \vb_k^N, \mSigma_k^N)} \in \cE_{K, c_0}$ varies so that 
	\begin{align*}
		\VD(G, G_0) & = \sum_{k : |\sV_k| = 1}\sum_{\ell \in \sV_k}\pi_\ell(\norm{\vc_\ell - \vc_k^0} + \norm{\mGamma_\ell - \mGamma_k^0} + \norm{\mA_\ell - \mA_k^0} + \norm{\vb_\ell - \vb_k^0} + \norm{\mSigma_\ell - \mSigma_k^0})\\
		&+\sum_{k : |\sV_k| > 1}\sum_{\ell\in \sV_k} \pi_\ell\left(\norm{\vc_\ell - \vc_k^0}^{\overline{r}(|\sV_k|)} 
		+ \norm{\mGamma_\ell - \mGamma_k^0}^{\overline{r}(|\sV_k|)/2}  + \norm{\mA_\ell - \mA_k^0}^{\overline{r}(|\sV_k|)/2}  \right) \\
		&+ \left   \norm{\vb_\ell - \vb_k^0}^{\overline{r}(|\sV_k|)} + \norm{\mSigma_\ell - \mSigma_k^0}^{\overline{r}(|\sV_k|)/2}  \right)\\
		&+
		\sum_{k=1}^{K_0} \left|\sum_{\ell\in V_{k}} \pi_\ell - \pi_k^0 \right| \\
		&+ \sum_{k : |\sV_k| > 1}\left( \norm{\sum_{\ell\in \sV_k}\pi_\ell (\vc_\ell - \vc_k^0)} +\norm{\sum_{\ell\in \sV_k}\pi_\ell ((\vc_\ell - \vc_k^0) (\vc_\ell - \vc_k^0)^{\top} + \mGamma_\ell - \mGamma_k^0)} \right) \\
		&\left( +  \lVert\sum_{\ell\in \sV_k}\pi_\ell ((\vb_\ell - \vb_k^0)(\vc_\ell - \vc_k^0)^{\top}+ \mA_\ell - \mA_k^0)\rVert \right) \\
		&\left( +   \norm{\sum_{\ell\in \sV_k}\pi_\ell (\vb_\ell - \vb_k^0)}   + \norm{\sum_{\ell\in \sV_k}\pi_\ell ((\vb_\ell - \vb_k^0) (\vb_\ell - \vb_k^0)^{\top} + \mSigma_\ell - \mSigma_k^0))}\right).
	\end{align*}
	Because all $\pi_l^N$ are bounded below by $c_0$, then $\vc_\ell \rightarrow \vc_k^0$,$\mGamma_\ell \rightarrow \mGamma_k^0$ ,  $\mA_\ell \rightarrow \mA_k^0 $, $\vb_\ell \rightarrow \vb_k^0$ and $\mSigma_\ell \rightarrow \mSigma_k^0$ for all $\ell \in \sV_k$, $ k \in [K_0].$ By definition of dissimilarity and merging two atoms with minimum dissimilarity, we get that two merged atoms must belong to same Voronoi cell for large enough $N$. WLOG, assuming that Voronoi cell is $\sV_1$ and two atoms merged belong that cell are 1 and 2. Let the merge atom be $\pi_*\delta_{(\vc_*, \mGamma_*, \mA_*, \vb_*, \mSigma_*)}$, i.e,
	\begin{equation*}
		\pi_{*} = \pi_{1} + \pi_{2},\quad  \vc_{*} = \dfrac{\pi_{1}}{\pi_{*}} \vc_{1} + \dfrac{\pi_{2}}{\pi_{*}} \vc_{2},\quad  \vb_{*} = \dfrac{\pi_{1}}{\pi_{*}} \vb_{1} + \dfrac{\pi_{2}}{\pi_{*}} \vb_{2},
	\end{equation*}
	\begin{align*}
		\mGamma_* &= \dfrac{\pi_{1}}{\pi_{*}} \left(\mGamma_{1} + (\vc_{1} - \vc_*)(\vc_{1} - \vc_*)^{\top} \right) + \dfrac{\pi_{2}}{\pi_{*}} \left(\mGamma_{2} + (\vc_{2} - \vc_*)(\vc_{2} - \vc_*)^{\top} \right)\\
		&= \dfrac{\pi_1}{\pi_*}\mGamma_1 + \dfrac{\pi_2}{\pi_*}\mGamma_2 + \dfrac{\pi_1 \pi_2}{\pi_*^2}(\vc_1- \vc_2)(\vc_1-\vc_2)^{\top} ,
	\end{align*}
	\begin{align*}
		\mA_* &= \dfrac{\pi_{1}}{\pi_{*}} \left(\mA_{1} + (\vb_{1} - \vb_*)(\vc_{1} - \vc_*)^{\top} \right) + \dfrac{\pi_{2}}{\pi_{*}} \left(\mA_{2} + (\vb_{2} - \vb_*)(\vc_{2} - \vc_*)^{\top} \right)\\
		&= \dfrac{\pi_1}{\pi_*}\mA_1 + \dfrac{\pi_2}{\pi_*}\mA_2 + \dfrac{\pi_1 \pi_2}{\pi_*^2}(\vb_1- \vb_2)(\vc_1-\vc_2)^{\top} ,
	\end{align*}
	and
	\begin{align}
		\mSigma_* &= \dfrac{\pi_{1}}{\pi_{*}} \left(\mSigma_{1} + (\vb_{1} - \vb_*)^2 \right) + \dfrac{\pi_{2}}{\pi_{*}} \left(\mSigma_{2} + (\vb_{2} - \vb_*)^2 \right)\\
		&= \dfrac{\pi_1}{\pi_*}\mSigma_1 + \dfrac{\pi_2}{\pi_*}\mSigma_2 + \dfrac{\pi_1 \pi_2}{\pi_*^2}(\vb_1- \vb_2)(\vb_1-\vb_2)\label{lem14_sigma}.
	\end{align}
	Moreover, we have that for large $N$ enough, the merged atom must be in the same Voronoi cell $\sV_1$. Hence,
	\begin{equation*}
		\left \lvert \sum_{\ell \in \sV_1} \pi_{\ell} -\pi_1^0 \right \rvert =  \left \lvert \sum_{\ell \in \sV_1, 
			\ell \not \in \lbrace 1,2\rbrace} \pi_{\ell} + \pi_* -\pi_1^0 \right \rvert,
	\end{equation*}
	\begin{equation*}
		\left \lVert \sum_{\ell \in \sV_1} \pi_{\ell}(\vc_\ell - \vc_1^0) \right \rVert = \left \lVert \sum_{\ell \in \sV_1, \ell \not \in \lbrace 1,2\rbrace} \pi_{\ell}(\vc_\ell - \vc_1^0) + \pi_*(\vc_* - \vc_1^0) \right \rVert,
	\end{equation*}
	\begin{equation*}
		\left \lVert \sum_{\ell \in \sV_1} \pi_{\ell}(\vb_\ell - \vb_1^0) \right \rVert = \left \lVert \sum_{\ell \in \sV_1, \ell \not \in \lbrace 1,2\rbrace} \pi_{\ell}(\vb_\ell - \vb_1^0) + \pi_*(\vb_* - \vb_1^0) \right \rVert,
	\end{equation*}
	\begin{align*}
		&\left \lVert \sum_{\ell\in \sV_1}\pi_\ell ((\vc_\ell - \vc_1^0) (\vc_\ell - \vc_1^0)^{\top} + \mGamma_\ell - \mGamma_1^0) \right \rVert\\
		&= \left \lVert \sum_{\ell \in \sV_1, \ell \not \in \lbrace 1,2\rbrace} \pi_\ell ((\vc_\ell - \vc_1^0) (\vc_\ell - \vc_1^0)^{\top} + \mGamma_\ell - \mGamma_1^0) + \pi_* ((\vc_* - \vc_1^0)(\vc_*- \vc_1^0)^{\top} + \mGamma_* - \mGamma_1^0) \right \rVert ,
	\end{align*}
	\begin{align*}
		&\left \lVert \sum_{\ell\in \sV_1}\pi_\ell ((\vb_\ell - \vb_1^0) (\vc_\ell - \vc_1^0)^{\top} + \mA_\ell - \mA_1^0) \right \rVert\\
		&= \left \lVert \sum_{\ell \in \sV_1, \ell \not \in \lbrace 1,2\rbrace} \pi_\ell ((\vb_\ell - \vb_1^0) (\vc_\ell - \vc_1^0)^{\top} + \mA_\ell - \mA_1^0) + \pi_* ((\vb_* - \vb_1^0)(\vc_*- \vc_1^0)^{\top} + \mA_* - \mA_1^0) \right \rVert ,
	\end{align*}
	and 
	\begin{align*}
		&\left \lVert \sum_{\ell\in \sV_1}\pi_\ell ((\vb_\ell - \vb_1^0)^2 + \mSigma_\ell - \mSigma_1^0) \right \rVert\\
		&= \left \lVert \sum_{\ell \in \sV_1, \ell \not \in \lbrace 1,2\rbrace} \pi_\ell ((\vb_\ell - \vb_1^0)^2 + \mSigma_\ell - \mSigma_1^0) + \pi_* ((\vb_* - \vb_1^0)^2 + \mSigma_* - \mSigma_1^0) \right \rVert .
	\end{align*}
	Due to $|\sV_1| \geq 2$, then $\bar{r}(|\sV_1|) \geq 4$, it suffices to show that for any $\bar{r} \geq 4,$
	\begin{align*}
		&\pi_1(\lVert \vc_1 - \vc_1^0\lVert^{\bar{r}}+ \lVert \mGamma_1 - \mGamma_1^0\rVert^{\bar{r}/2} + \lVert \mA_1 - \mA_1^0\rVert^{\bar{r}/2} + \lVert \vb_1 - \vb_1^0\lVert^{\bar{r}} + \lVert \mSigma_1 - \mSigma_1^0 \lVert^{\bar{r}/2})\\
		&+ \pi_2(\lVert \vc_2 - \vc_2^0\lVert^{\bar{r}}+ \lVert \mGamma_2 - \mGamma_2^0\rVert^{\bar{r}/2} +\lVert \mA_2 - \mA_1^0\rVert^{\bar{r}/2} + \lVert \vb_2 - \vb_1^0\lVert^{\bar{r}} + \lVert \mSigma_2 - \mSigma_1^0 \lVert^{\bar{r}/2}) \\
		&\gtrsim \pi_*(\lVert \vc_* - \vc_1^0\lVert^{\bar{r}}+ \lVert \mGamma_* - \mGamma_1^0\rVert^{\bar{r}/2}+ \lVert \mA_* - \mA_1^0\rVert^{\bar{r}/2} + \lVert \vb_* - \vb_1^0\lVert^{\bar{r}} + \lVert \mSigma_* - \mSigma_1^0 \lVert^{\bar{r}/2}).
	\end{align*}
	We know that, $\lVert \cdot \lVert^r$ is a convex function for any $r > 1$, and apply convex inequality,we have
	\begin{equation*}
		\lVert \vc_* - \vc_1^0\lVert^{\bar{r}}  \leq \dfrac{\pi_1}{\pi_*} \lVert \vc_1 - \vc_1^0 \rVert^{\bar{r}}  + \dfrac{\pi_2}{\pi_*}\lVert \vc_2 - \vc_1^0\rVert^{\bar{r}} ,
	\end{equation*}
	\begin{equation*}
		\lVert \vb_* - \vb_1^0\lVert^{\bar{r}} \leq \dfrac{\pi_1}{\pi_*} \lVert \vb_1 - \vb_1^0 \rVert^{\bar{r}} + \dfrac{\pi_2}{\pi_*}\lVert \vb_2 - \vb_1^0\rVert^{\bar{r}},
	\end{equation*}
	and with identity in $(\ref{lem14_sigma})$
	\begin{align*}
		\pi_*\lVert \mSigma_* - \mSigma_1^0 \rVert^{\bar{r}/2} &\lesssim \pi_*(\lVert \dfrac{\pi_1}{\pi_*}\mSigma_1 + \dfrac{\pi_2}{\pi_*}\mSigma_2 - \mSigma_1^0 \rVert^{\bar{r}/2} + \lVert \dfrac{\pi_1 \pi_2}{\pi_*^2}(\vb_1- \vb_2)(\vb_1-\vb_2)^{\top} \rVert^{\bar{r}/2})\\
		&\leq \pi_1\lVert \mSigma_1 - \mSigma_1^0\rVert^{\bar{r}/2} +  \pi_2\lVert \mSigma_2 - \mSigma_1^0\rVert^{\bar{r}/2} + \pi_*(\dfrac{\pi_1 \pi_2}{\pi_*^2})^{\bar{r}/2}\lVert \vb_1 - \vb_2 \rVert^{\bar{r}}.
	\end{align*}
	The last term in the above forms can be bounded as follows:
	\begin{align*}
		\pi_1 \lVert \vb_1 - \vb_1^0 \rVert^{\bar{r}} + \pi_2\lVert \vb_2 - \vb_1^0 \rVert^{\bar{r}} &\geq \min\lbrace \pi_1, \pi_2 \rbrace ( \lVert \vb_1 - \vb_1^0 \rVert^{\bar{r}} +\lvert \vb_2 - \vb_1^0 \rVert^{\bar{r}})\\
		&\gtrsim \min\lbrace \pi_1, \pi_2 \rbrace \lVert \vb_1 - \vb_2 \rVert^{\bar{r}}\\
		&\geq \dfrac{\pi_1 \pi_2}{\pi_*}\lVert \vb_1 - \vb_2 \rVert^{\bar{r}} = \pi_*\dfrac{\pi_1 \pi_2}{\pi_*^2}\lVert \vb_1 - \vb_2 \rVert^{\bar{r}}\\
		&\geq \pi_*(\dfrac{\pi_1 \pi_2}{\pi_*^2})^{\bar{r}/2}\lVert \vb_1 - \vb_2 \rVert^{\bar{r}},
	\end{align*}
	where the last inequality follows from the fact that $\dfrac{\pi_1 \pi_2}{\pi_*^2} < 1$ and $\bar{r}/2 \geq 1.$\\
	Similarly, we obtain
	\begin{equation*}
		\pi_*\lVert \mGamma_* - \mGamma^0 \rVert^{\bar{r}/2} \lesssim \pi_1\lVert \mGamma - \mGamma^0\rVert^{\bar{r}/2} +  \pi_2\lVert \mGamma - \mGamma^0\rVert^{\bar{r}/2} + \pi_1 \lVert \vc_1 - \vc_1^0 \rVert^{\bar{r}} + \pi_2\lVert \vc_2 - \vc_1^0 \rVert^{\bar{r}}.
	\end{equation*}
	Now, consider the following case, we have 
	\begin{align*}
		\pi_*\lVert \mA_* - \mA_1^0 \rVert^{\bar{r}/2} &\lesssim \pi_*(\lVert \dfrac{\pi_1}{\pi_*}\mA_1 + \dfrac{\pi_2}{\pi_*}\mA_2 - \mA_1^0 \rVert^{\bar{r}/2} + \lVert \dfrac{\pi_1 \pi_2}{\pi_*^2}(\vb_1- \vb_2)(\vc_1-\vc_2)^{\top} \rVert^{\bar{r}/2})\\
		&\leq \pi_1\lVert \mA_1 - \mA_1^0\rVert^{\bar{r}/2} +  \pi_2\lVert \mA_2 - \mA_1^0\rVert^{\bar{r}/2} + \pi_*(\dfrac{\pi_1 \pi_2}{\pi_*^2})^{\bar{r}/2}\lVert \vb_1 - \vb_2 \rVert^{\bar{r}/2}\lVert \vc_1 - \vc_2 \rVert^{\bar{r}/2}.
	\end{align*}
	The last term in the above equation can be bounded as follow:
	\begin{align*}
		&\pi_1 \lVert \vb_1 - \vb_1^0 \rVert^{\bar{r}} + \pi_2\lVert \vb_2 - \vb_1^0 \rVert^{\bar{r}} + \pi_1 \lVert \vc_1 - \vc_1^0 \rVert^{\bar{r}} + \pi_2\lVert \vc_2 - \vc_1^0 \rVert^{\bar{r}}\\
		&\geq \min\lbrace \pi_1, \pi_2 \rbrace \max\lbrace \lVert \vb_1 - \vb_1^0 \rVert^{\bar{r}} +\lVert \vb_2 - \vb_1^0 \rVert^{\bar{r}}, \lVert \vc_1 - \vc_1^0 \rVert^{\bar{r}} +\lVert \vc_2 - \vc_1^0 \rVert^{\bar{r}} \rbrace\\
		&\gtrsim \min\lbrace \pi_1, \pi_2 \rbrace \max \lbrace \lVert \vb_1 - \vb_2 \rVert^{\bar{r}}, \lVert \vc_1 - \vc_2 \rVert^{\bar{r}} \rbrace\\
		&\geq \min\lbrace \pi_1, \pi_2 \rbrace \lVert \vb_2 - \vb_1^0 \rVert^{\bar{r}/2}\lVert \vc_2 - \vc_1^0 \rVert^{\bar{r}/2} \\
		&\geq \dfrac{\pi_1 \pi_2}{\pi_*}\lVert \vb_2 - \vb_1^0 \rVert^{\bar{r}/2}\lVert \vc_2 - \vc_1^0 \rVert^{\bar{r}/2}  = \pi_*\dfrac{\pi_1 \pi_2}{\pi_*^2}\lVert \vb_2 - \vb_1^0 \rVert^{\bar{r}/2}\lVert \vc_2 - \vc_1^0 \rVert^{\bar{r}/2} \\
		&\geq \pi_*(\dfrac{\pi_1 \pi_2}{\pi_*^2})^{\bar{r}/2}\lVert \vb_2 - \vb_1^0 \rVert^{\bar{r}/2}\lVert \vc_2 - \vc_1^0 \rVert^{\bar{r}/2}. 
	\end{align*}
	
	\subsection{Proof of Theorem \ref{theorem_rate_DIC_GLLiM}}\label{sec_proof_theorem_rate_DIC_GLLiM}
	Before we go into details of the proof, we introduce the Wasserstein distances~\cite{villani_topics_2003} to measure the difference between two measures. For two mixing measure $G = \sum_{k=1}^K\evp_k\delta_{\evtheta_k}$ and $G' = \sum_{l=1}^{K'}\evp'_l\delta_{\evtheta'_l}$, the Wasserstein-r distance (for $r \geq 1$) between $G$ and $G'$ is defined as 
	\begin{equation*}
		W_r(G, G') := \left(\inf_{\mQ \in \Pi(\vp, \vp')}\sum_{k,l=1}^{K,K'}\emQ_{kl}\norm{\evtheta_k - \evtheta'_l}^r \right)^{1/r},
	\end{equation*}
	where $\Pi(\vp, \vp')$ is the set of all couplings between $\vp = (\evp_1, \dots, \evp_K)$ and $\vp' = (\evp_1', \dots, \evp_{K'}')$, i.e, $\Pi(\vp, \vp') = \lbrace \mQ \in \mathbb{R}_{+}^{K\times K'}: \sum_{k=1}^K \emQ_{kl} = \evp'_l, \sum_{l=1}^{K'} \emQ_{kl} = \evp_k, \forall k\in [K], l \in [K'] \rbrace$.\\
	The following fact is important, it describes the relation between the convergence in Wasserstein distances of mixing measures and the convergence of components~\cite{ho_singularity_2019}. Fix $G_0 = \sum_{k=1}^{K_0}\pi_k^0 \delta_{\theta_k^0} \in \mathcal{E}_{K_0},$ and consider $G =\sum_{\ell = 1}^{K}\pi_\ell \delta_{\theta_\ell} $ such that $W_r(G, G_0) \to 0, $ we have
	\begin{equation*}
		W_r^r(G,G_0) \asymp \sum_{k=1}^{K_0}\left( \left|\sum_{\ell \in \sV_k(G)}\pi_\ell - \pi_k^0 \right|+\sum_{\ell \in \sV_k}\pi_\ell \norm{\theta_\ell - \theta_k^0}^r \right).
	\end{equation*}
	Now, we delve into detail of the proof. From \cref{fact_density_rate}, there exists a constant c depending on $\Theta$ and $K$ so that on an event, we called $A_N$,  with probability at least $1 - c_1N^{-c_2},$ we have
	\begin{equation*}
		\TV(p_{\widehat{G}_N}, p_{G_0}) \leq \sqrt{2}\hel(p_{\widehat{G}_N}, p_{G_0}) \le \left( \dfrac{\log N}{N} \right)^{1/2}.
	\end{equation*}
	Furthermore, from the two theorem above, we obtain that 
	\begin{equation}{\label{thm_bound_dg}}
		\VD(\widehat{G}_N^{(\kappa)}, G_0) \lesssim \left( \dfrac{\log N}{N} \right)^{1/2}.
	\end{equation}
	When $\kappa = K_0$, by defintion of $\VD(\widehat{G}_N^{(\kappa)}, G_0)$, we obtain that $\VD(\widehat{G}_N^{(\kappa_0)}, G_0) \asymp W_1(\widehat{G}_N^{(\kappa_0)}, G_0)$
	Suppose $\kappa \geq K_0 + 1$, then there exist two atoms belonging to the same Voronoi cell. Hence, by (\ref{thm_bound_dg}), there exist $i, j$ and $\sV_t$(for convention, we get rid of dependence of $i, j$ and $\sV_t$ on $N$) such that
	\begin{align*}
		&\pi_i^N\left(\norm{\vc_i^N - \vc_t^0}^{\overline{r}(|\sV_t|)} + \norm{\mGamma_i^N - \mGamma_t^0}^{\overline{r}(|\sV_t|)/2} + \norm{\mA_i^N - \mA_t^0}^{\overline{r}(|\sV_t|)/2} + \norm{\vb_i^N - \vb_t^0}^{\overline{r}(|\sV_t|)} + \norm{\mSigma_i^N - \mSigma_t^0}^{\overline{r}(|\sV_t|)/2}  \right) \\
		&+\pi_j^N\left(\norm{\vc_j^N - \vc_t^0}^{\overline{r}(|\sV_t|)} + \norm{\mGamma_j^N - \mGamma_t^0}^{\overline{r}(|\sV_t|)/2} + \norm{\mA_j^N - \mA_t^0}^{\overline{r}(|\sV_t|)/2} + \norm{\vb_j^N - \vb_t^0}^{\overline{r}(|\sV_t|)} + \norm{\mSigma_j^N - \mSigma_t^0}^{\overline{r}(|\sV_t|)/2}  \right)\\
		&\lesssim \left( \dfrac{\log N}{N} \right)^{1/2}.
	\end{align*}
	With the fact that $\min\lbrace \pi_i, \pi_j\rbrace \geq \dfrac{1}{(\pi_i)^{-1} + (\pi_j)^{-1}}$ and $\bar{r}(\widehat{G}_N) \geq \bar{r}(|\sV_t|) \geq 4$, we get the  lower bound for above left hand side term as follow
	\begin{align*}
		&LHS \gtrsim \\
		&\dfrac{1}{(\pi_i^N)^{-1} + (\pi_j^N)^{-1}}(\norm{\vc_i^N - \vc_j^N}^{\overline{r}(|\sV_t|)} + \norm{\mGamma_i^N - \mGamma_j^N}^{\overline{r}(|\sV_t|)/2} 
		+ \norm{\mA_i^N - \mA_j^N}^{\overline{r}(|\sV_t|)/2}\\
		&+ \norm{\vb_i^N - \vb_j^N}^{\overline{r}(|\sV_t|)} + \norm{\mSigma_i^N- \mSigma_j^N}^{\overline{r}(|\sV_t|)/2}  ) .
	\end{align*}
	Because the height of the Dendrogram is the minimum of dissimilarity $\divclus$ over all pairs $i,j$, we have that
	\begin{align*}
		\height_N^{(\kappa)} &\leq  \dfrac{1}{(\pi_i^N)^{-1} + (\pi_j^N)^{-1}} \left(\norm{\vc_i^N - \vc_j^N}^{2} + \norm{\mGamma_i^N - \mGamma_j^N} + \norm{\mA_i^N - \mA_j^N} +\norm{\vb_i^N - \vb_j^N}^{2} + \norm{\mSigma_i^N - \mSigma_j^N} \right) \\
		&\lesssim   \left( \dfrac{\log N}{N} \right)^{1/\overline{r}(|\sV_t|)}  \lesssim \left( \dfrac{\log N}{N} \right)^{1/\overline{r}(|\sV_t|)} \lesssim \left( \dfrac{\log N}{N} \right)^{1/\bar{r}(\widehat{G}_N)}.
	\end{align*}
	\textbf{Part 2 : convergence rate on under-fitted levels. } Because $\VD(\widehat{G}_N^{(K_0)}, G_0) $ share the same rates with $W_1(G_N^{(K_0)}, G_0)$, we get the convergence rate 
	\begin{equation*}
		W_1(\widehat{G}_N^{(K_0)}, G_0) \lesssim \left( \dfrac{\log N}{N} \right)^{1/2}.
	\end{equation*}
	Let $\widehat{G}_N^{(K_0)} = \sum_{k = 1}^{K_0}\pi_k^N\delta_{(\vc_N, \mGamma_N, \mA_N, \vb_N, \mSigma_N)}$, follow by definition of $\VD(\widehat{G}_N^{(K_0)}, G_0)$, we get for large N enough
	\begin{equation*}
		|\pi_k^N - \pi_k^0| \lesssim \left( \dfrac{\log N}{N} \right)^{1/2}.
	\end{equation*}
	\begin{equation*}
		\lVert \vc_k^N - \vc_{k}^0\rVert \lesssim \left( \dfrac{\log N}{N} \right)^{1/2}, \quad \lVert \mGamma_k^N - \mGamma_{k}^0\rVert \lesssim \left( \dfrac{\log N}{N} \right)^{1/2}.
	\end{equation*}
	\begin{equation*}
		\lVert \mA_k^N - \mA_{k}^0\rVert \lesssim \left( \dfrac{\log N}{N} \right)^{1/2}, \quad \lVert \vb_k^N - \vb_{k}^0\rVert \lesssim \left( \dfrac{\log N}{N} \right)^{1/2}, \quad \lVert \mSigma_k^N - \mSigma_{k}^0\rVert \lesssim \left( \dfrac{\log N}{N} \right)^{1/2}.
	\end{equation*}
	Hence, we can show that
	\begin{align}
		&\left| \dfrac{1}{(\pi_i^N)^{-1} + (\pi_j^N)^{-1}}(\lVert \vc_{i}^N - \vc_{j}^N\rVert^2 + \lVert \mGamma_{i}^N - \mGamma_{j}^N\rVert + \lVert \mA_{i}^N - \mA_{j}^N\rVert + \lVert \vb_{i}^N - \vb_{j}^N\rVert^2 + \lVert \mSigma_{i}^N - \mSigma_{j}^N\rVert )\right|\\
		&-\left|
		\dfrac{1}{(\pi_i^0)^{-1} + (\pi_j^0)^{-1}}( \lVert \vc_{i}^0 - \vc_{j}^0\rVert^2 + \lVert \mGamma_{i}^0 - \mGamma_{j}^0\rVert + \lVert \mA_{i}^0 - \mA_{j}^0\rVert + \lVert \vb_{i}^0 - \vb_{j}^0\rVert^2 + \lVert \mSigma_{i}^0 - \mSigma_{j}^0\rVert ) \right| \\
		&\lesssim \left( \dfrac{\log N}{N} \right)^{1/2}.\label{proof_height_exact_fitted}
	\end{align}
	Hence, for large $N$ enough, the optimal choice to merge $\widehat{G}_N^{(K_0)} $ will be the same as $G_0$. 
	Assume two atoms $i, j$ are merged together and let $(\vc_*^N, \mGamma_*^N, \mA_*^N, \vb_*^N, \mSigma_*^N)$ be merged atom. Likewise, $(\vc_*^0, \mGamma_*^0, \mA_*^0, \vb_*^0, \mSigma_*^0)$ be merged atom for true mixing measure, We can show that 
	\begin{equation*}
		|\pi_*^N - \pi_*^0 | =   \left | \pi_i^N + \pi_j^N - \pi_i^0 - \pi_j^0  \right| \lesssim \left( \dfrac{\log N}{N} \right)^{1/2},
	\end{equation*}
	\begin{align*}
		\left \lVert\dfrac{\pi_i^N}{\pi_*^N}\vc_i^N + \dfrac{\pi_j^N}{\pi_*^N}\vc_j^N - \dfrac{\pi_i^0}{\pi_*^0}\vc_i^0 - \dfrac{\pi_j^0}{\pi_*^0}\vc_j^0 \right \rVert \leq \left \lVert\dfrac{\pi_i^N}{\pi_*^N}\vc_i^N  - \dfrac{\pi_i^0}{\pi_*^0}\vc_i^0 \right \rVert +  \left \lVert\dfrac{\pi_j^N}{\pi_*^N}\vc_j^N  - \dfrac{\pi_j^0}{\pi_*^0}\vc_j^0 \right \rVert .
	\end{align*}
	First of all, we will obtain the following bound
	\begin{align*}
		\left | \dfrac{\pi_i^N}{\pi_*^N} - \dfrac{\pi_i^0}{\pi_*^0} \right| = \left | \dfrac{\pi_i^N\pi_j^0 - \pi_i^0 \pi_j^N}{\pi_*^N \pi_*^0} \right | &\leq \dfrac{1}{c_0^2}|\pi_i^N\pi_j^0 - \pi_i^0 \pi_j^N|\\
		&\lesssim |\pi_i^N||\pi_j^0 - \pi_j^N| + |\pi_j^N||\pi_i^N - \pi_i^0|  \lesssim \left( \dfrac{\log N}{N} \right)^{1/2},
	\end{align*}
	we should note that all $\pi_i^N $ is bounded from below by $c_0$. Now, consider
	\begin{equation*}
		\left \lVert\dfrac{\pi_i^N}{\pi_*^N}\vc_i^N  - \dfrac{\pi_i^0}{\pi_*^0}\vc_i^0 \right \rVert \leq  \left|\dfrac{\pi_i^N}{\pi_*^N}\right| \left \lVert \vc_i^N - \vc_i^0 \right \rVert + \lVert \vc_i^0 \rVert \left | \dfrac{\pi_i^N }{\pi_*^N} - \dfrac{\pi_i^0}{\pi_*^0} \right| \lesssim \left( \dfrac{\log N}{N} \right)^{1/2}
	\end{equation*}
	with a note that parameters are supposed to be bounded. Hence, applying similar arguments, we get that
	\begin{equation*}
		\left \lVert\dfrac{\pi_i^N}{\pi_*^N}\vc_i^N + \dfrac{\pi_j^N}{\pi_*^N}\vc_j^N - \dfrac{\pi_i^0}{\pi_*^0}\vc_i^0 - \dfrac{\pi_j^0}{\pi_*^0}\vc_j^0 \right \rVert \lesssim \left( \dfrac{\log N}{N} \right)^{1/2}.
	\end{equation*}
	Similarly,
	\begin{equation*}
		\left |\dfrac{\pi_i^N}{\pi_*^N}\vb_i^N + \dfrac{\pi_j^n}{\pi_*^N}\vb_j^N - \dfrac{\pi_i^0}{\pi_*^0}\vb_i^0 - \dfrac{\pi_j^0}{\pi_*^0}\vb_j^0 \right | \lesssim \left( \dfrac{\log N}{N} \right)^{1/2},
	\end{equation*}
	\begin{align*}
		&\left |\dfrac{\pi_i^N}{\pi_*^N}(\mGamma_i^N + (\vc_i^N-\vc_*^N)(\vc_i^N-\vc_*^N)^{\top}) + \dfrac{\pi_j^N}{\pi_*^N}(\mGamma_j^N + (\vc_j^N-\vc_*^N)(\vc_j^N-\vc_*^N)^{\top} ) \right.\\
		&\left.- \dfrac{\pi_i^0}{\pi_*^0}(\mGamma_i^0 + (\vc_i^0-\vc_*^0)(\vc_i-\vc_*^0)^{\top}) - \dfrac{\pi_j^0}{\pi_*^0}(\mGamma_j^0 + (\vc_j^0-\vc_*^0)(\vc_j^0-\vc_*^0)^{\top}) \right | \lesssim \left( \dfrac{\log N}{N} \right)^{1/2},
	\end{align*}
	\begin{align*}
		&\left |\dfrac{\pi_i^N}{\pi_*^N}(\mSigma_i^N + (\vb_i^N-\vb_*^N)(\vb_i^N-\vb_*^N)^{\top}) + \dfrac{\pi_j^N}{\pi_*^N}(\mSigma_j^N + (\vb_j^N-\vb_*^N)(\vb_j^N-\vb_*^N)^{\top} ) \right.\\
		&\left.- \dfrac{\pi_i^0}{\pi_*^0}(\mSigma_i^0 + (\vb_i^0-\vb_*^0)(\vb_i-\vb_*^0)^{\top}) - \dfrac{\pi_j^0}{\pi_*^0}(\mSigma_j^0 + (\vb_j^0-\vb_*^0)(\vb_j^0-\vb_*^0)^{\top}) \right | \lesssim \left( \dfrac{\log N}{N} \right)^{1/2},
	\end{align*}
	and
	\begin{align*}
		&\left |\dfrac{\pi_i^N}{\pi_*^N}(\mA_i^N+ (\vb_i^N-\vb_*^N)(\vc_i^N-\vc_*^N)^{\top}) + \dfrac{\pi_j^N}{\pi_*^N}(\mA_j^N + (\vb_j^N-\vb_*^N)(\vc_j^N-\vc_*^N)^{\top} ) \right.\\
		&\left.- \dfrac{\pi_i^0}{\pi_*^0}(\mA_i^0 + (\vb_i^0-\vb_*^0)(\vc_i-\vc_*^0)^{\top}) - \dfrac{\pi_j^0}{\pi_*^0}(\mA_j^0 + (\vb_j^0-\vb_*^0)(\vc_j^0-\vc_*^0)^{\top}) \right | \lesssim \left( \dfrac{\log N}{N} \right)^{1/2}.
	\end{align*}
	Thus,denote $\vtheta_i^N = (\vc_i^N, \mGamma_i^N, \mA_i^N, \vb_i^N, \mSigma_i^N)$, $\vtheta_i^0 = (\vc_i^0, \mGamma_i^0, \mA_i^0, \vb_i^0, \mSigma_i^0)$ and after merging two atoms $i, j$, we call $\vtheta_*^N = (\vc_*^N, \mGamma_*^N, \mA_*^N, \vb_*^N, \mSigma_*^N) $ as well as $\vtheta_*^0 = (\vc_*^0, \mGamma_*^0, \mA_*^0, \vb_*^0, \mSigma_*^0) $, we deduce that
	\begin{equation*}
		W_1(\widehat{G}_N^{(K_0-1)}, G_0^{(K_0-1)}) \lesssim \left( \dfrac{\log N}{N} \right)^{1/2}.
	\end{equation*}
	Follow by (\ref{proof_height_exact_fitted}), we can conclude that
	\begin{equation*}
		|\height_N^{(K_0)} - \height_0^{(K_0)}| \ \lesssim \left( \dfrac{\log N}{N} \right)^{1/2}.
	\end{equation*}
	The rest is followed by induction.\\
	
	\subsection{Proof of Theorem \ref{theorem_likelihood_DIC_GLLiM}} \label{sec_proof_theorem_likelihood_DIC_GLLiM}
	\subsubsection{Preliminary on Empirical Process Theory}
	Suppose $\rvx_1, \dots, \rvx_N \sim P_{G_0}.$ Denote $P_N := \dfrac{1}{N}\sum_{n = 1}^N\delta_{\rvx_n}$ is the empirical measure.
	Denote the empirical process for $G:$
	\begin{equation*}
		\nu_N(G) := \sqrt{N}(P_N - P_{G_0})\log\dfrac{p_{\bar{G}}}{p_{G_0}}.
	\end{equation*}
	The following results is important in proof below.
	\begin{fact}[Theorem 5.11 in \cite{geer2000empirical}]{\label{emerical-process-bounding}}
		Let positive numbers $R, C, C_1, a$ satisfy:
		\begin{equation*}
			a \leq C_1\sqrt{N}R^2 \wedge 8\sqrt{N}R,
		\end{equation*}
		and
		\begin{equation*}
			a \geq \sqrt{C^2(C_1 + 1)} \left( \int_{a/(2^6\sqrt{N}}^R H_B^{1/2}\left( \dfrac{u}{\sqrt{2}}, \lbrace p_G : G \in \mathcal{O}_K, h(p_G, p_{G_0}) \leq R \rbrace, \nu\right)du \vee R\right),
		\end{equation*}
		then 
		\begin{equation*}
			\mathbb{P}_{G_0}\left( \sup_{\substack{G \in \mathcal{O}_K \\
					\hel(p_G, p_{G_0}) \leq R}
			}|\nu_N(G)| \geq a \right) \leq C\exp\left( -\dfrac{a^2}{C^2(C_1 + 1)R^2} \right).
		\end{equation*}
	\end{fact}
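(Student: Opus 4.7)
The plan is to prove both parts via a population-versus-empirical decomposition, combining the parameter rates from Theorem \ref{theorem_rate_DIC_GLLiM}, the empirical-process tail bound (Fact \ref{emerical-process-bounding}), and Condition K.

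For the first part with $\kappa \in [K_0+1,K]$, I would write
\begin{equation*}
\bar l_N(\widehat G_N^{(\kappa)}) - \cL(p_{G_0})
= \underbrace{[\bar l_N(\widehat G_N^{(\kappa)}) - \cL(p_{\widehat G_N^{(\kappa)}})]}_{(\mathrm{I})}
+ \underbrace{[\cL(p_{\widehat G_N^{(\kappa)}}) - \cL(p_{G_0})]}_{(\mathrm{II})}.
\end{equation*}
For $(\mathrm{I})$, \cref{theorem_V_DG_GLLiM} together with the standard TV--Hellinger inequalities and Theorem \ref{theorem_rate_DIC_GLLiM} place $\widehat G_N^{(\kappa)}$ inside a Hellinger ball of radius $R_N \lesssim (\log N/N)^{1/2}$ with the probability stated in Theorem \ref{theorem_rate_DIC_GLLiM}. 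A peeling argument applied to Fact \ref{emerical-process-bounding} with the standard polynomial bracketing-entropy bound for the parametric class $\cO_K$ then yields $|(\mathrm{I})| \lesssim (\log N/N)^{1/2}$ on the same event.

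For $(\mathrm{II})$, set $\epsilon_N := (\log N/N)^{1/(2\bar r(\widehat G_N))}$. Theorem \ref{theorem_rate_DIC_GLLiM} and the uniform lower bound $\pi_\ell^N \geq c_0$ force each fitted atom $\hat\vtheta_\ell$ assigned to the Voronoi cell $\sV_{k(\ell)}$ of a true atom to satisfy $\|\hat\vtheta_\ell - \vtheta_{k(\ell)}^0\| \lesssim \epsilon_N$, while the aggregated weights satisfy $|\sum_{\ell\in\sV_k}\hat\pi_\ell - \pi_k^0|\lesssim (\log N/N)^{1/2}$. Introducing the auxiliary measure $\tilde G_N := \sum_\ell \hat\pi_\ell \delta_{\vtheta_{k(\ell)}^0}$, which shares weights with $\widehat G_N^{(\kappa)}$ and atoms with $G_0$, Condition K applied componentwise yields $|\log f(\vx,\vy\mid\hat\vtheta_\ell) - \log f(\vx,\vy\mid\vtheta_{k(\ell)}^0)| \lesssim \epsilon_N (1 + |\log f(\vx,\vy\mid\vtheta_{k(\ell)}^0)|)$. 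A min/max bound on mixture log-ratios then lifts this to $|\log p_{\widehat G_N^{(\kappa)}} - \log p_{\tilde G_N}| \lesssim \epsilon_N (1 + \max_k|\log f(\cdot\mid\vtheta_k^0)|)$ pointwise, and the weight deviation contributes $|\log p_{\tilde G_N} - \log p_{G_0}| \lesssim (\log N/N)^{1/2}$, dominated by $\epsilon_N$. Taking $P_{G_0}$-expectation and using $\mathbb{E}_{G_0}|\log p_{G_0}|<\infty$ (finite because $\vTheta$ is compact and $\cX\times\cY$ is bounded) yields $|(\mathrm{II})| \lesssim \epsilon_N$, and combining with $(\mathrm{I})$ gives the announced rate $(\log N/N)^{1/(2\bar r(\widehat G_N))}$.

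For the second part with $\kappa' \in [K_0]$, Theorem \ref{theorem_rate_DIC_GLLiM} gives $\widehat G_N^{(\kappa')} \to G_0^{(\kappa')}$ at rate $(\log N/N)^{1/2}$ after a permutation of atoms. Continuity of the Gaussian densities in their parameters and boundedness of the support then deliver uniform convergence $p_{\widehat G_N^{(\kappa')}} \to p_{G_0^{(\kappa')}}$ in $\mathbb{P}_{G_0}$-probability, so $|\bar l_N(\widehat G_N^{(\kappa')}) - \bar l_N(p_{G_0^{(\kappa')}})| \to 0$, and the strong law $\bar l_N(p_{G_0^{(\kappa')}}) \to \cL(p_{G_0^{(\kappa')}})$ closes the triangle. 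The main obstacle is term $(\mathrm{II})$ in the first part: the logarithm is not linear in mixture weights, so per-atom Condition K bounds cannot be added up naively. The routing through $\tilde G_N$ factors the problem into a Condition K-driven parameter step of order $\epsilon_N$ and a linear weight step of order $(\log N/N)^{1/2}$, with the former producing the slower rate that appears in the theorem.
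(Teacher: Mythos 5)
Your proposal does not address the statement you were asked to prove. The statement is Fact~\ref{emerical-process-bounding}, an imported result (Theorem~5.11 of van de Geer's \emph{Empirical Processes in M-Estimation}): a uniform sub-Gaussian tail bound for the empirical process $\nu_N(G) = \sqrt{N}(P_N - P_{G_0})\log(p_{\bar G}/p_{G_0})$ over a Hellinger ball $\{G \in \cO_K : \hel(p_G,p_{G_0}) \le R\}$, with the deviation level $a$ calibrated against the bracketing entropy integral. A proof of this would require the classical empirical-process machinery — truncation of the log-likelihood ratios, a Bernstein-type exponential inequality for each bracket, and a chaining/peeling argument over the entropy integral $\int H_B^{1/2}(u/\sqrt 2,\cdot,\nu)\,du$. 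The paper does not prove it either; it cites it as a Fact, which is why it carries the external attribution.

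What you have written instead is a proof sketch of Theorem~\ref{theorem_likelihood_DIC_GLLiM} (the convergence of $\bar l_N(\widehat G_N^{(\kappa)})$ to $\cL(p_{G_0})$), and your argument explicitly \emph{invokes} Fact~\ref{emerical-process-bounding} as one of its tools. Taken as a proof of the Fact itself, this is circular: you cannot establish the empirical-process tail bound by appealing to it. As a side remark, your sketch of the likelihood theorem is broadly aligned with the paper's actual proof of that theorem (decomposition into an empirical-process term and a population term, Condition~K to control the population discrepancy at the exact-fitted level, and the entropy bound $H_B^{1/2}(u,\cdot,\nu) \lesssim \log(1/u)$ fed into the Fact), but that is not the statement under review. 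If the intent is to prove the Fact, the argument must be rebuilt from the ground up following van de Geer's Chapter~5, or the statement should simply be cited as the paper does.
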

	\textbf{Proof of Theorem \ref{theorem_likelihood_DIC_GLLiM}.} We divide into three cases.
	
	\textbf{Case 1: } $\kappa \geq K_0$. The empirical measure is denoted by $P_N = \dfrac{1}{N}\sum_{n=1}^N\delta_{\rvy_n, \rvx_n}$. For any $G$, denoting $P_G$ by the distribution of $p_G$. Using the property of concave function, 
	\begin{equation*}
		\dfrac{1}{2}\log\dfrac{p_G}{p_{G_0}} \leq \log \dfrac{p_G + p_{G_0}}{2p_{G_0}} = \log\dfrac{\bar{p}_G}{p_{G_0}}.
	\end{equation*}
	Thus, for all $\kappa \geq K_0$,
	\begin{align*}
		\dfrac{1}{2}P_n\log\dfrac{p_{\widehat{G}_N^{(\kappa)}}}{p_{G_0}} &\leq P_n \log \dfrac{\bar{p}_{\widehat{G}_N^{(\kappa)}}}{p_{G_0}} \\
		&= (P_n - P_{G_0})\log \dfrac{\bar{p}_{\widehat{G}_N^{(\kappa)}}}{p_{G_0}}  - \KL ( p_{G_0} \Vert \bar{p}_{\widehat{G}_N^{(\kappa)}} )\\
		&\leq (P_n - P_{G_0})\log \dfrac{\bar{p}_{\widehat{G}_N^{(\kappa)}}}{p_{G_0}} .
	\end{align*}
	Hence, 
	\begin{align*}
		P_N\log p_{\widehat{G}_N^{(\kappa)}} - P_{G_0}\log p_{G_0} &= P_N\log\dfrac{p_{\widehat{G}_N^{(\kappa)}}}{p_{G_0}} + (P_N - P_{G_0})\log p_{G_0}\\
		&\leq 2(P_N - P_{G_0})\log \dfrac{\bar{p}_{\widehat{G}_N^{(\kappa)}}}{p_{G_0}}  + (P_N - P_{G_0})\log p_{G_0}.
	\end{align*}
	In the theorem \ref{theorem_rate_DIC_GLLiM}, we know that $\VD(\widehat{G}_N^{(\kappa)}, G_0) \lesssim (\log N/N)^{1/2}$ and clearly we have $W_{\bar{r}(\widehat{G}_N)}^{\bar{r}(\widehat{G}_N)}(\widehat{G}_N^{(\kappa)}, G_0)  \leq \VD(\widehat{G}_N^{(\kappa)}, G_0) $ ,hence there is a constant $D$ so that
	\begin{equation*}
		\mathbb{P}_{G_0}\left(W_{\bar{r}(\widehat{G}_N)}(\widehat{G}_N^{(\kappa)}, G_0) \leq D \left( \dfrac{\log N}{N} \right)^{1/2\bar{r}(\widehat{G}_N)} \right) \geq 1 - c_1N^{-c_2}, \quad \kappa \in [K_0 , K].
	\end{equation*}
	Using the fact that $\hel \lesssim W_{2}$, see for example, \cite{nguyen_convergence_2013}, with a note that for a probability $q_{ij}$, we have $q_{ij} \leq q_{ij}^{2/{\bar{r}(\widehat{G}_N)}}  $, since $2/{\bar{r}(\widehat{G}_N)} \leq2/4 < 1.$ Combining with all norms on finite space is equivalent, we obtain that 
	\begin{equation*}
		\left( \sum_{i,j}q_{ij}\norm{\evtheta_k - \evtheta'_l}^2 \right)^{1/2}\leq  \left( \sum_{i,j}q_{ij}^{2/{\bar{r}(\widehat{G}_N)}}\norm{\evtheta_k - \evtheta'_l}^2 \right)^{1/2} \lesssim \left( \sum_{i,j}q_{ij}\norm{\evtheta_k - \evtheta'_l}^{{\bar{r}(\widehat{G}_N)}}\right)^{1/{\bar{r}(\widehat{G}_N)}}
	\end{equation*}
	Then, we get $\hel \lesssim W_{\bar{r}(\widehat{G}_N)}$, we also have
	\begin{equation*}
		\mathbb{P}_{G_0}\left(\hel(p_{\widehat{G}_N^{(\kappa)}}, p_{G_0}) \leq D \left( \dfrac{\log N}{N} \right)^{1/2\bar{r}(\widehat{G}_N)} \right) \geq 1 - c_1N^{-c_2}, \quad \kappa \in [K_0 , K].
	\end{equation*}
	Define, $\alpha := 1/2\bar{r}(\widehat{G}_N) \leq 1/4$, substitute $R = D\left( \dfrac{\log N}{N} \right)^{\alpha}, a = D\dfrac{\log^{\alpha + 1/2} N}{N^{\alpha}}$, we have $a \leq \sqrt{N}R^2 \leq \sqrt{N}R$, for large N enough, and with the fact that $H_B^{1/2}(u, P_K, h) \lesssim \log(1/u) $( Lemma 4 in \cite{nguyen_towards_2024}), we obtain that
	\begin{equation*}
		a \geq R\left( \log \left( \dfrac{2^6 \sqrt{N}}{a} \right) \right) \geq \int_{a / (2^6\sqrt{N})}^{R}H_B^{1/2}\left(\dfrac{u}{\sqrt{2}}, \lbrace p_G: G \in \mathcal{O}_K, \hel(p_G, p_{G_0}) \leq R \rbrace , \nu \right)du.
	\end{equation*}
	Applying \cref{emerical-process-bounding}, we get
	\begin{equation*}
		\mathbb{P}_{G_0}\left( \sup_{\hel(p_G, p_{G_0}) \leq D(\log N / N)^{\alpha}}\left| \sqrt{N} (P_N - P_{G_0}) \log\dfrac{p_G}{p_{G_0}}
		\right| \geq D\dfrac{\log^{\alpha + 1/2}N}{N^{\alpha}} \right) \leq N^{-c_2}.
	\end{equation*}
	Combinining with the bound on Hellinger distance, we have
	\begin{align*}
		&\mathbb{P}_{G_0}\left( \left| (P_N - P_{G_0})\log \dfrac{p_{\widehat{G}_N^{(\kappa)}}}{p_{G_0}} \right| \geq D\dfrac{\log^{\alpha + 1/2}N}{N^{\alpha + 1/2}}\right) \leq \mathbb{P}_{G_0}((\hel(p_{\widehat{G}_N^{(\kappa)}}, p_{G_0}) \geq D(\log n/ n)^{\alpha})\\
		&+ \mathbb{P}_{G_0}\left( \left| (P_N - P_{G_0})\log \dfrac{p_{\widehat{G}_N^{(\kappa)}}}{p_{G_0}} \right| \geq D\dfrac{\log^{\alpha + 1/2}N}{N^{\alpha + 1/2}},\hel(p_{\widehat{G}_N^{(\kappa)}}, p_{G_0}) \leq D(\log N/ N)^{\alpha} \right) \\
		&\leq c_1N^{-c_2} + \mathbb{P}_{G_0}\left( \sup_{\hel(p_G, p_{G_0}) \leq D(\log N / N)^{\alpha}}\left| \sqrt{n} (P_N - P_{G_0}) \log\dfrac{p_G}{p_{G_0}}
		\right| \geq D\dfrac{\log^{\alpha + 1/2}N}{N^{\alpha}} \right)\\
		&\leq c_1'N^{-c_2}.
	\end{align*}
	Furthermore, applying Chebyshev inequality, 
	\begin{equation}{\label{chebyshev ineq}}
		\mathbb{P}_{G_0}(|(P_N - P_{G_0})\log p_{G_0}| \geq t ) \leq \dfrac{Var(\log p_{G_0})}{Nt^2}.
	\end{equation}
	By choosing, $t \gtrsim N^{-1/2}$, we can also bound this term. Choose $t = (\log N / N)^{\alpha}$, we have
	\begin{equation*}
		\mathbb{P}_{G_0}(|(P_N - P_{G_0})\log p_{G_0}| \leq (\log N / N)^{\alpha}  ) \geq 1 - c_1N^{-c_2},
	\end{equation*}
	Hence, we conclude that
	\begin{equation*}
		\mathbb{P}_{G_0}( P_n\log p_{\widehat{G}_N^{(\kappa)}} - P_{G_0}\log p_{G_0} \leq (\log N / N) ^{1/2\bar{r}(\widehat{G}_N)}) \geq 1 - c_1N^{-c_2}.
	\end{equation*}
	\textbf{Case 2:} $\kappa = K_0$. At the exact-fitted level, we have
	\begin{equation*}
		W_1(\widehat{G}_N^{(K_0)}, G_0) \lesssim \left( \dfrac{\log N}{N} \right)^{1/2}.
	\end{equation*}
	Let $\widehat{G}_N^{(K_0)} = \sum_{k = 1}^{K_0}\pi_k^N\delta_{(\vc_k^N, \mGamma_k^N, \mA_k^N, \vb_k^N, \mSigma_k^N)}$, follow by definition of $\VD(\widehat{G}_N^{(K_0)}, G_0)$, we get for large N enough
	\begin{equation*}
		|\pi_k^N - \pi_k^0| \lesssim \left( \dfrac{\log N}{N} \right)^{1/2},
	\end{equation*}
	\begin{equation*}
		\lVert \vc_k^N - \vc_{k}^0\rVert \lesssim \left( \dfrac{\log N}{N} \right)^{1/2}, \quad \lVert \mGamma_k^N - \mGamma_{k}^0\rVert \lesssim \left( \dfrac{\log N}{N} \right)^{1/2},
	\end{equation*}
	\begin{equation*}
		\lVert \mA_k^N - \mA_{k}^0\rVert \lesssim \left( \dfrac{\log N}{N} \right)^{1/2}, \quad \lVert \vb_k^N - \vb_{k}^0\rVert \lesssim \left( \dfrac{\log N}{N} \right)^{1/2}, \quad \lVert \mSigma_k^N - \mSigma_{k}^0\rVert \lesssim \left( \dfrac{\log N}{N} \right)^{1/2}.
	\end{equation*}
	Let $\epsilon_N = (\log N /N)^{1/2} \rightarrow 0$, from condition $(K)$, there exist $c_\alpha$ and $c_\beta > 0$ such that
	\begin{equation*}
		f_{\cL}(\vx| \vc_k^N, \mGamma_k^N)f_{\cD}(\vy| \mA_k^N\vx + \vb_k^N, \mSigma_k^N) \geq \left(f_{\cL}(\vx| \vc_k^0, \mGamma_k^0)f_{\cD}(\vy| \mA_k^0\vx + \vb_k^0, \mSigma_k^0\right)^{(1 + c_\beta \epsilon_N) }e^{-c_\alpha \epsilon_N}.
	\end{equation*}
	Besides, we can find constant $c_p > 0$ such that $\pi_k^N \geq (1 - c_p\epsilon_N)\pi_k^0$ for all $k \in [K_0].$ With the fact that $g(t) = t^{1 + c_\beta \epsilon_n}$ is convex function, we have
	\begin{align*}
		p_{\widehat{G}_N^{(K_0)}} &\geq (1- c_p\epsilon_N)\sum_{k = 1}^{K_0}\pi_k^0 \left(f_{\cL}(\vx| \vc_k^0, \mGamma_k^0)f_{\cD}(\vy| \mA_k^0\vx + \vb_k^0, \mSigma_k^0\right)^{(1 + c_\beta \epsilon_N) }e^{-c_\alpha \epsilon_N}\\
		&\geq (1- c_p\epsilon_N)p_{G_0}^{(1 + c_\beta\epsilon_N)}e^{-c_\alpha \epsilon_N}.
	\end{align*}
	The last inequality followed by convex property, Therefore, we have
	\begin{equation*}
		\dfrac{1}{N}\sum_{i=1}^{N}\log\dfrac{p_{\widehat{G}_N^{(K_0)}}}{p_{G_0}}(\rvx_i, \rvy_i) \geq \log (1- c_p\epsilon_N) - (c_\alpha \epsilon_N) + (c_\beta\epsilon_N)\dfrac{1}{N}\sum_{i=1}^N\log p_{G_0}(\rvx_i, \rvy_i).
	\end{equation*}
	then
	\begin{equation}{\label{eq-exact-avgl}}
		\bar{l}_N(p_{\widehat{G}_N^{(K_0)}}) - \cL(p_{G_0}) \geq \log(1-c_p\epsilon_N) - c_\alpha\epsilon_N + c_\beta\epsilon_NP_{G_0}\log p_{G_0} + (1 + c_\beta \epsilon_N)(P_N - P_{G_0})\log p_{G_0}.
	\end{equation}
	Now, we will bound the right hand side of above equation, from Chebyshev inequality from (\ref{chebyshev ineq}), choose $t = (\log N / N)^{1/2}$, we get that
	\begin{equation*}
		\mathbb{P}_{G_0}\left( |(P_N - P_{G_0})\log p_{G_0}| \geq \left(\dfrac{\log N}{N}\right)^{1/2}\right) \leq \dfrac{Var(\log p_{G_0})}{\log N}.
	\end{equation*}
	Clearly, the terms $\log(1-c_p\epsilon_N) - c_\alpha\epsilon_N + c_\beta\epsilon_NP_{G_0}\log p_{G_0} \lesssim \epsilon_N = (\log N / N)^{1/2}$, thus there exits a constant $C > 0$ such that $\log(1-c_p\epsilon_N) - c_\alpha\epsilon_N + c_\beta\epsilon_NP_{G_0}\log p_{G_0}  \geq -C (\log N / N)^{1/2}$. Then, for some constant $C_e > 0$,
	\begin{equation*}
		\mathbb{P}_{G_0}\left( \text{RHS of (\ref{eq-exact-avgl})} \geq -C_{e}\left(\dfrac{\log N}{N}\right)^{1/2}\right) \geq 1- \dfrac{Var(\log p_{G_0})}{\log N}.
	\end{equation*}
	Call the event under above case is B, then we obtain that
	\begin{align*}
		\mathbb{P}_{G_0}\left(  \bar{l}_N(p_{\widehat{G}_N^{(K_0)}}) - \cL(p_{G_0}) \geq -C_e (\log N / N)^{1/2}  \right) &\geq P_{G_0}(A \cap B) = P_{G_0}(B) - P_{G_0}(B \cap A^c) 
		\\
		&\geq P_{G_0}(B) - P_{G_0}(A^c)
		=1 - \dfrac{Var(\log p_{G_0})}{\log N} -  c_1N^{-c_2}.
	\end{align*}
	approach 1 when $N \rightarrow \infty.$ So, combine both results, we can conclude that
	\begin{equation*}
		|\bar{l}_N(p_{\widehat{G}_N^{(K_0)}}) - \cL(p_{G_0})| \lesssim (\log N / N) ^{1/2\bar{r}(\widehat{G}_N)}.
	\end{equation*}
	\\
	\textbf{Case 3: }$\kappa < K_0.$ Since $|\log p_{G}(\vx, \vy)| \leq m(\vx, \vy)$ for a measurable function for all $G \in O_k$, we can use uniform law of large number~(Theorem 9.2 from \cite{keener2010theoretical}) to have that
	\begin{equation*}
		\sup_{G \in O_k}|\bar{l}_N(G) - P_{G_0}\log p_{G}| \overset{\sP}{\rightarrow} 0,
	\end{equation*}
	where $\overset{\sP}{\rightarrow}$ means convergence in probability. Therefore, 
	\begin{equation*}
		\left |\bar{l}_N(\widehat{G}_N^{(\kappa)}) - P_{G_0}\log p_{\widehat{G}_N^{(\kappa)}}\right| \overset{\sP}{\rightarrow} 0.
	\end{equation*}
	We know that $\log p_{\widehat{G}_N^{(\kappa)}} \rightarrow \log p_{\widehat{G}_0^{(\kappa)}}$ in probability, by application of Dominated Convergence theorem, we obtain
	\begin{equation*}
		P_{G_0}\log p_{\widehat{G}_N^{(\kappa)}} \overset{\sP}{\to} P_{G_0}\log p_{\widehat{G}_0^{(\kappa)}}.
	\end{equation*}
	Combines above results together, we get
	\begin{equation*}
		\bar{l}_N(\widehat{G}_N^{(\kappa)}) \overset{\sP}{\to} P_{G_0}\log p_{\widehat{G}_0^{(\kappa)}} = \cL(\log p_{\widehat{G}_0^{(\kappa)}} ).
	\end{equation*}
	\textbf{Checking condition K}. Now via \cref{lem_condition_K},  we check if condition $(K)$ is satisfied for $f(\vx, \vy| \vtheta) = f_{\cL}(\vx|\vc, \mGamma) f_{\cD}(\vy| \mA ^{\top} \vx + \vb, \mSigma)$ from our GGMoE model.
	\begin{lemma}[Condition K for GGMoE model]\label{lem_condition_K}
		The condition $(K)$ is satisfied for $f(\vx, \vy| \vtheta) = f_{\cL}(\vx|\vc, \mGamma) f_{\cD}(\vy| \mA ^{\top} \vx + \vb, \mSigma)$, where $\vtheta = (\vc, \mGamma, \mA, \vb, \mSigma) \in \Theta \in \mathbb{R}^{D} \times \mathcal{S}_D^{+} \times \mathbb{R}^{1 \times D} \times \sR \times \sS_1^{+} $ and $\mathcal{X}$ are bounded as from the initial setup, and the eigenvalues of $\mGamma, \mSigma$ is bounded below and above by the positive constants $\sigma_{\min}$ and $\sigma_{\max}$.
	\end{lemma}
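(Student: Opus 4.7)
The plan is to decompose $\log f(\vx,\vy\mid\vtheta) = \log f_{\cL}(\vx\mid \vc,\mGamma) + \log f_{\cD}(\vy\mid \mA^{\top}\vx+\vb,\mSigma)$ and control each factor separately. For the gating density, compactness of $\vTheta$, boundedness of $\cX$, and the uniform spectral bounds $\sigma_{\min}\leq \lambda(\mGamma)\leq \sigma_{\max}$ make $\log f_{\cL}$ a smooth, uniformly bounded function of its arguments, so a first-order Taylor expansion yields $|\log f_{\cL}(\vx\mid \vc,\mGamma) - \log f_{\cL}(\vx\mid \vc_0,\mGamma_0)|\leq C_{\cL}\,\epsilon$, contributing only an additive $O(\epsilon)$ term that can be absorbed into $c_\alpha$.

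For the expert factor, write $\log f_{\cD}(\vy\mid\vmu,\mSigma) = -\tfrac{1}{2}\log(2\pi\mSigma) - \tfrac{1}{2\mSigma}(\vy-\vmu)^2$ with $\vmu = \mA^{\top}\vx+\vb$, so that $|\vmu-\vmu_0|\leq C_{\cX}\,\epsilon$ by boundedness of $\cX$ and compactness of $\vTheta$. Expanding $(\vy-\vmu)^2 = (\vy-\vmu_0)^2 - 2(\vy-\vmu_0)(\vmu-\vmu_0) + (\vmu-\vmu_0)^2$, combining with $|\mSigma^{-1}-\mSigma_0^{-1}|\leq \epsilon/\sigma_{\min}^2$ and the crude bound $|\vy-\vmu_0|\leq 1+(\vy-\vmu_0)^2$, I will obtain the pointwise inequality
\[
\bigl|\log f_{\cD}(\vy\mid\vmu_0,\mSigma_0) - \log f_{\cD}(\vy\mid\vmu,\mSigma)\bigr|\;\leq\; C_1\,\epsilon\bigl(1+(\vy-\vmu_0)^2\bigr),
\]
where $C_1$ depends only on $\diam(\cX)$, $\diam(\vTheta)$, $\sigma_{\min}$ and $\sigma_{\max}$.

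The final step trades the quadratic in $\vy$ against $-\log f_0$, where $\log f_0 := \log f(\vx,\vy\mid\vtheta_0)$. Solving the Gaussian formula gives $(\vy-\vmu_0)^2 = -2\mSigma_0\log f_{\cD}(\vy\mid\vmu_0,\mSigma_0) - \mSigma_0\log(2\pi\mSigma_0)$, hence $(\vy-\vmu_0)^2 \leq 2\sigma_{\max}(-\log f_{\cD}(\vy\mid\vmu_0,\mSigma_0))^{+} + C_2$; since $\log f_{\cL}(\vx\mid\vc_0,\mGamma_0)$ is uniformly bounded, this propagates to $(\vy-\vmu_0)^2 \leq 2\sigma_{\max}(-\log f_0)^{+} + C_3$. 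Combining the three bounds yields $|\log f_0-\log f|\leq c_\alpha\,\epsilon + c_\beta\,\epsilon\,(-\log f_0)^{+}$. When $\log f_0 \leq 0$ this rearranges directly into $\log f \geq (1+c_\beta\epsilon)\log f_0 - c_\alpha\epsilon$; when $\log f_0 > 0$, the uniform upper bound $\log f_0 \leq M$ (with $M$ depending only on $\sigma_{\min}$ and $D$) lets me replace $c_\alpha$ by $c_\alpha + c_\beta M$ to absorb the sign switch, so Condition $K$ holds with a single pair of constants.

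The main obstacle is the second-step estimate: it must be genuinely uniform over the unbounded response domain $\vy\in\doout$, and the constant $C_1$ must be controlled by global quantities only, so that one choice of $(c_\alpha,c_\beta)$ works simultaneously for every $\vtheta,\vtheta_0\in\vTheta$ with $\|\vtheta-\vtheta_0\|\leq\epsilon$. Once that uniform control is in hand, the reduction to $|\log f_0|$ and the case split on the sign of $\log f_0$ are mechanical.
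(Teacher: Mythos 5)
Your proposal is correct, and it reaches Condition K by a genuinely different route than the paper's proof. The paper keeps the multiplicative factor $(1+c_\beta\epsilon)$ attached to the quadratic forms throughout and absorbs the dangerous cross term by completing the square: after passing from $\mGamma_0^{-1}$ to $c_{\mSigma}\mGamma^{-1}$ (and likewise for $\mSigma$), the surplus $c_\beta\epsilon\, u^{\top}\mGamma^{-1}u$ contributed by the factor $(1+c_\beta\epsilon)$ dominates $-2u^{\top}\mGamma^{-1}\Delta u$, leaving a remainder of size $(1+\tfrac{1}{\epsilon c_\beta c_{\mSigma}})\|\Delta u\|^2/\sigma_{\min}\lesssim\epsilon$ that is absorbed into $c_\alpha$; the same computation is carried out for both the gating and the expert factors, and the log-determinants are handled by a separate Lipschitz argument requiring $c_\beta$ large relative to $\log\sigma_{\min}$. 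You instead exploit the boundedness of $\cX$ to dispose of the entire gating factor with a plain Lipschitz bound (a genuine simplification the paper does not use), and for the expert factor you first prove the symmetric two-sided estimate $|\log f_{\cD}-\log f_{\cD,0}|\le C_1\epsilon(1+(\vy-\vmu_0)^2)$ and only afterwards reintroduce the multiplicative structure by trading $(\vy-\vmu_0)^2$ for $2\sigma_{\max}(-\log f_0)^{+}+C_3$ via the explicit Gaussian formula, closing with a case split on the sign of $\log f_0$ and the uniform upper bound $\log f_0\le M$. Both arguments use exactly the same hypotheses (bounded $\cX$, compact $\vTheta$, spectral bounds on $\mGamma,\mSigma$); yours is more modular and makes transparent that the only place where the multiplicative penalty $c_\beta\epsilon\log f_0$ is truly needed is the far tail in $\vy$, while the paper's completion-of-squares argument is more self-contained in that it never needs to invert the Gaussian formula or discuss the sign of $\log f_0$. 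The steps you flag as delicate (uniformity of $C_1$ over unbounded $\vy$, and the constant in $(\vy-\vmu_0)^2\le 2\sigma_{\max}(-\log f_0)^{+}+C_3$) do go through exactly as you describe, so no gap remains.
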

	\begin{proof}[Proof of \cref{lem_condition_K}]
		When $\lVert \vtheta - \vtheta_0 \rVert\leq \epsilon$, by the equivalence of the norm, we can consider the cases where $\lVert \vc - \vc_0 \rVert  , \lVert \mGamma - \mGamma_0 \rVert , \lVert \mA - \mA_0 \rVert , \lVert \vb - \vb_0 \rVert , \lVert \mSigma - \mSigma_0 \rVert \leq \epsilon$. We want to show that for sufficiently small $\epsilon$, there exist $c_\alpha, c_\beta > 0$ such that
		\begin{equation*}
			\log \left( f_{\cL}(\vx|\vc, \mGamma)f_{\cD}(\vy| \mA \vx + \vb, \mSigma) \right) \geq (1 + c_\beta\epsilon)\log \left( f_{\cL}(\vx|\vc_0, \mGamma_0)f_{\cD}(\vy| \mA_0 \vx + \vb_0, \mSigma_0) \right) - c_\alpha \epsilon.
		\end{equation*}
		which is equivalent to 
		\begin{align*}
			&(1 + c_\beta\epsilon)\log(|\mGamma_0|) - \log(|\mGamma|) + (1 + c_\beta \epsilon)(\vx - \vc_0)^{\top}(\mGamma_0)^{-1}(\vx - \vc_0) - (\vx - \vc)^{\top}\mGamma^{-1}(\vx - \vc) \\
			+&(1 + c_\beta\epsilon)\log(|\mSigma_0|) - \log(|\mSigma|) \\
			+ &(1 + c_\beta \epsilon)(\vy - \mA_0 \vx - \vb_0)^{\top}(\mSigma_0)^{-1}(\vy - \mA_0\vx - \vb_0) - (\vy - \mA \vx - \vb)^{\top}(\mSigma)^{-1}(\vy - \mA\vx - \vb) + 2c_\alpha \epsilon \geq 0.
		\end{align*}
		Note that 
		\begin{equation*}
			\dfrac{d\log(|\mGamma|)}{d\mGamma} = \mGamma^{-1},
		\end{equation*}
		and if  $\lVert \mGamma \rVert$ is bounded above and below far from 0 ( which satisfies due to $\mGamma$ is positive definite), then the map $\mGamma \to \log(|\mGamma|)$ is Lipschitz, or there exists constant $c_\sigma$ such that 
		\begin{equation*}
			|\log(|\mGamma_0|) - \log(|\mGamma|) | \leq c_\sigma \lVert \mGamma_0 - \mGamma \rVert.
		\end{equation*}
		Moreover, we know that $|\mGamma|\geq \sigma_{\min}^{D}$ and  $|\mSigma| \geq \sigma_{min}$. Hence, for all $c_\beta >  \dfrac{c_\sigma}{\log(\sigma_{\min})}\max \left\lbrace\dfrac{1}{D},\dfrac{1}{1}\right\rbrace $, then we have 
		\begin{equation*}
			c_\beta\epsilon\log(|\mGamma_0|) \geq c_\sigma \epsilon \geq c_\sigma \lVert \mGamma - \mGamma_0\rVert \geq |\log(|\mGamma|) - \log(|\mGamma_0|) |.
		\end{equation*}
		So that
		\begin{equation*}
			(1 + c_\beta\epsilon) \log(|\mGamma_0|) \geq \log(|\mGamma|).
		\end{equation*}
		Similarly, we obtain that
		\begin{equation*}
			(1 + c_\beta\epsilon) \log(|\mSigma_0|) \geq \log(|\mSigma|).
		\end{equation*}
		We want to choose $c_\alpha > 0$ such that 
		\begin{equation*}
			(1 + c_\beta \epsilon)(\vx - \vc_0)^{\top}(\mGamma_0)^{-1}(\vx - \vc_0) - (\vx - \vc)^{\top}\mGamma^{-1}(\vx - \vc) + c_\alpha\epsilon \geq 0.
		\end{equation*}
		let $u := \vx - \vc_0, \Delta u := \vc_0 - \vc$, using the boundedness of $\mGamma, \mSigma$, there exist $c_\mSigma$ such that
		\begin{equation*}
			\mGamma_0^{-1} \geq c_\mSigma \mGamma^{-1}, \quad \mSigma_0^{-1} \geq  c_\mSigma \mSigma^{-1},
		\end{equation*}
		then we only need to prove
		\begin{equation*}
			(1 + c_\beta\epsilon)c_\mSigma u^{\top}\mGamma^{-1}u - (u + \Delta u)^{\top}\mGamma^{-1}(u + \Delta u) + c_\alpha \epsilon \geq 0,
		\end{equation*}
		which is equivalent to 
		\begin{align*}
			&c_\beta \epsilon c_\mSigma u^{\top} \mGamma^{-1}u - u^{\top}\mGamma^{-1}\Delta u - (\Delta u)^{\top}\mGamma^{-1}u - (\Delta u)^{\top} \mGamma^{-1} \Delta u + c_\alpha \epsilon \geq 0\\
			=& \epsilon c_\beta c_\mSigma \left( u - \dfrac{\Delta u}{\epsilon c_\beta c_\mSigma} \right)^{\top} \mGamma^{-1}\left( u - \dfrac{\Delta u}{\epsilon c_\beta c_\mSigma} \right) + c_\alpha \epsilon \geq \left(1 + \dfrac{1}{\epsilon c_\beta c_\mSigma}  \right)(\Delta u)^{\top}\mGamma^{-1}(\Delta u).
		\end{align*}
		We can bound the RHS of above equation as follow
		\begin{equation*}
			\left(1 + \dfrac{1}{\epsilon c_\beta c_\mSigma}  \right)(\Delta u)^{\top}\mGamma^{-1}(\Delta u) \leq  \left(1 + \dfrac{1}{\epsilon c_\beta c_\mSigma}  \right) \dfrac{\lVert \Delta u\rVert^2}{\sigma_{\min}} \leq  \left(1 + \dfrac{1}{\epsilon c_\beta c_\mSigma}  \right)\dfrac{\epsilon^2}{\sigma_{\min}}.
		\end{equation*}
		Hence, we only need to choose $c_\alpha$ such that
		\begin{equation*}
			c_\alpha \geq  \left(1 + \dfrac{1}{\epsilon c_\beta c_\mSigma}  \right)\dfrac{\epsilon}{\sigma_{\min}} = \dfrac{\epsilon}{\sigma_{\min}} + \dfrac{1}{c_\beta c_\mSigma \sigma_{\min}}.
		\end{equation*}
		Similarly, we want to choose $c_\alpha > 0$ such that
		\begin{equation*}
			(1 + c_\beta \epsilon)(\vy - \mA_0 \vx - \vb_0)^{\top}(\mSigma_0)^{-1}(\vy - \mA_0\vx - \vb_0) - (\vy - \mA \vx - \vb)^{\top}(\mSigma)^{-1}(\vy - \mA\vx - \vb) + c_\alpha \epsilon \geq 0.
		\end{equation*}
		Define $u := \vy - \mA_0 \vx - \vb_0, \Delta u := \mA_0 \vx + \vb_0 - \mA \vx - \vb$, same as previous argument, we only need to prove
		\begin{equation*}
			(1 + c_\beta\epsilon)c_\mSigma u^{\top}\mSigma^{-1}u - (u + \Delta u)^{\top}\mSigma^{-1}(u + \Delta u) + c_\alpha \epsilon \geq 0.
		\end{equation*}
		Likewise, we obtain
		\begin{equation*}
			\epsilon c_\beta c_\mSigma \left( u - \dfrac{\Delta u}{\epsilon c_\beta c_\mSigma} \right)^{\top} \mSigma^{-1}\left( u - \dfrac{\Delta u}{\epsilon c_\beta c_\mSigma} \right) + c_\alpha \epsilon \geq \left(1 + \dfrac{1}{\epsilon c_\beta c_\mSigma}  \right)(\Delta u)^{\top}\mSigma^{-1}(\Delta u).
		\end{equation*}
		Remember that $\mathcal{X}$ is bounded, thus $\lVert \Delta u\rVert \leq \lVert \mA_0  - \mA \rVert \lVert \vx \rVert + \lVert \vb_0 - \vb\rVert \leq R \epsilon $, for some $R > 0$ so we can bound the RHS of above equation as follow
		\begin{equation*}
			\left(1 + \dfrac{1}{\epsilon c_\beta c_\mSigma}  \right)(\Delta u)^{\top}\mSigma^{-1}(\Delta u) \leq  \left(1 + \dfrac{1}{\epsilon c_\beta c_\mSigma}  \right) \dfrac{\lVert \Delta u\rVert^2}{\sigma_{\min}}
			\leq  \left(1 + \dfrac{1}{\epsilon c_\beta c_\mSigma}  \right)\dfrac{\epsilon^2 R^2}{\sigma_{\min}}.
		\end{equation*}
		So we only need to choose $c_\alpha$ such that 
		\begin{equation*}
			c_\alpha \geq \left(1 + \dfrac{1}{\epsilon c_\beta c_\mSigma}  \right)\dfrac{\epsilon R^2}{\sigma_{\min}}  = \dfrac{\epsilon R^2}{\sigma_{\min}} + \dfrac{R^2}{c_\beta c_\mSigma \sigma_{\min}}.
		\end{equation*}
		Combine both ways to obtain $c_\alpha$, we complete the proof.
	\end{proof}
	\subsection{Proof of Theorem \ref{DIC}}
	Note that entropy $ H(p_{G_0}) = - \cL(p_{G_0})$. We have 
	\begin{equation*}
		\height_N^{(\kappa)} = \begin{cases}
			O\left(\left( \dfrac{\log N}{N} \right)^{1/ \bar{r}(\widehat{G}_N)}\right), \hspace{22pt}\text{if } \kappa > K_0,\\
			\height_0^{(\kappa)} + O\left(\left( \dfrac{\log N}{N} \right)^{1/ 2}\right), \hspace{10pt} \text{if } \kappa \leq K_0.
		\end{cases}
	\end{equation*}
	and in the proof of Theorem $\ref{theorem_likelihood_DIC_GLLiM}$, we get
	\begin{equation*}
		\begin{cases}
			\bar{l}_N^{(\kappa)} \leq -H(p_{G_0}) +  O\left(\left( \dfrac{\log N}{N} \right)^{1/ 2\bar{r}(\widehat{G}_N)}\right), \hspace{10pt} \text{if } \kappa > K_0,\\
			\bar{l}_N^{(\kappa)} = -H(p_{G_0}) +  O\left(\left( \dfrac{\log N}{N} \right)^{1/ 2\bar{r}(\widehat{G}_N)}\right), \hspace{10pt} \text{if } \kappa = K_0,\\
			\bar{l}_N^{(\kappa)} = -H(p_{G_0)} - D_{KL}(p_{G_0} || p_{G_0}^{(\kappa)}) + o(1) , \hspace{16pt} \text{if } \kappa < K_0.
		\end{cases}
	\end{equation*}
	Then we have
	\begin{equation*}
		\begin{cases}
			DSC_N^{(\kappa)} \geq\omega_N H(p_{G_0}) + O\left(\omega_N\left( \dfrac{\log N}{N} \right)^{1/ 2\bar{r}(\widehat{G}_N)}\right),\hspace{37pt}\text{if } \kappa > K_0,\\
			DSC_N^{(\kappa)} = \omega_N H(p_{G_0}) - \height_0^{(\kappa)} + O\left(\omega_N \left( \dfrac{\log N}{N} \right)^{1/ 2\bar{r}(\widehat{G}_N)}\right), \hspace{10pt}\text{if } \kappa = K_0,\\
			DSC_N^{(\kappa)} = \omega_N H(p_{G_0}) + \omega_N D_{KL}(p_{G_0}|| p_{G_0}^{(\kappa)}) - \height_0^{(\kappa)} + o(\omega_N), \hspace{7pt} \text{if } \kappa < K_0.
		\end{cases}
	\end{equation*}
	Since $\omega_N \rightarrow \infty$, $\omega_N (\log N / N)^{1/2\bar{r}(\widehat{G}_N)} \rightarrow 0$ and $ D_{KL}(p_{G_0}|| p_{G_0}^{(\kappa)}) > 0$, then as $N \rightarrow \infty$, $\dsc_{N}^{K_0}$ is the smallest number. Hence, $\mathbb{P}_{p_{G_0}}(\widehat{K}_N =  K_0) \geq \mathbb{P}_{p_{G_0}(A_N)}\rightarrow 1$ as $N \rightarrow \infty$ , or $\widehat{K}_N \rightarrow K_0$ in probability.
	

	\section{Additional Experiments}\label{sec_additional_experiments}
	
	As a complement to the results presented in \cref{fig_dendrogram_illustration}, we include a representative simulation of a dataset generated from the GGMoE model in \cref{fig_typical_simulation}.
	
	\begin{figure*}[!ht]
		\centering
		\begin{subfigure}[t]{0.5\textwidth}
			\centering
			\includegraphics[width=\linewidth]{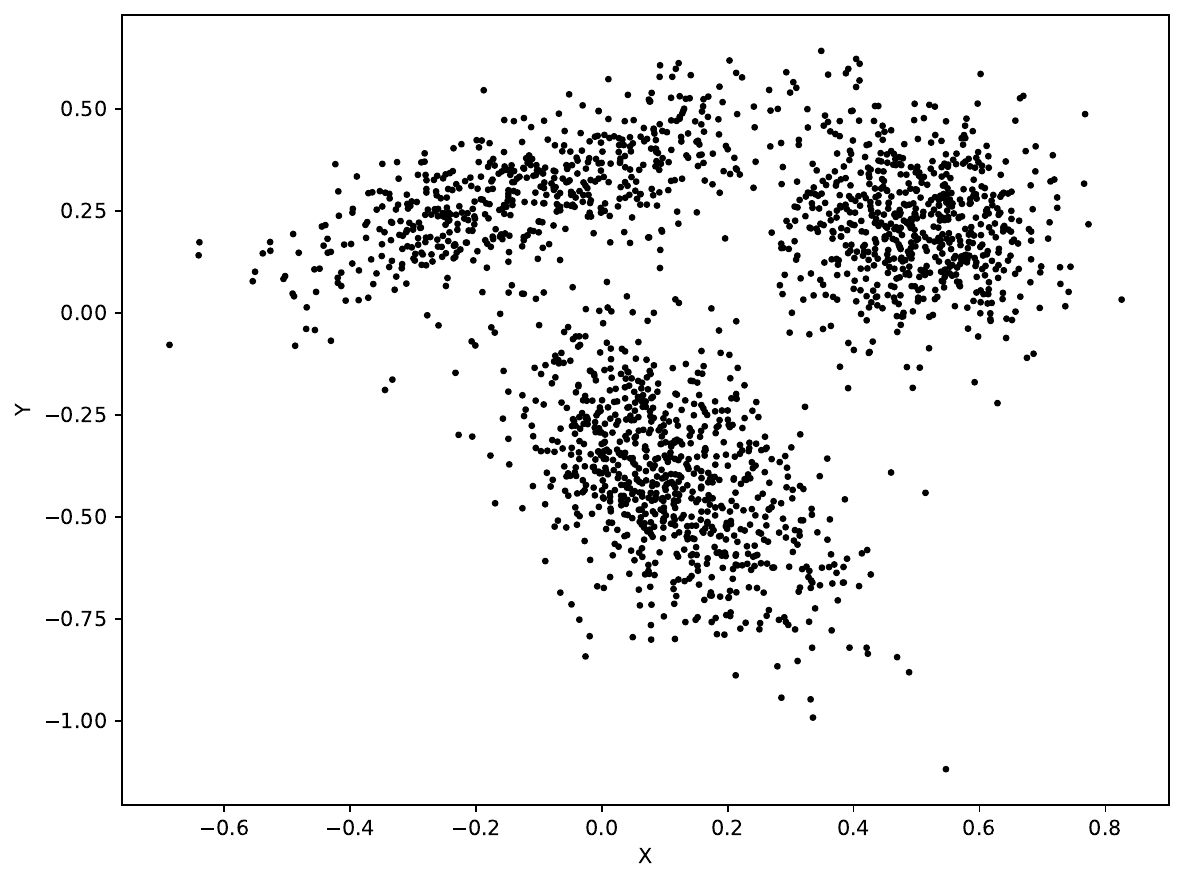}
			\caption{A typical realization of the simulated dataset.}
			\label{fig_draw_data_GGMoE1}
		\end{subfigure}%
		~ 
		\begin{subfigure}[t]{0.5\textwidth}
			\centering
			\includegraphics[width=\linewidth]{figure/plot_true_GGMoE_model_1_K10_nmin100_nmax10000_rep20_nnum21_initAtom1.pdf}
			\caption{True clustering structure with local means.}
			\label{fig_true_cluster_mean_GGMoE1}
		\end{subfigure}
		~
		\begin{subfigure}[t]{0.475\textwidth}
			\centering
			\includegraphics[width=\linewidth]{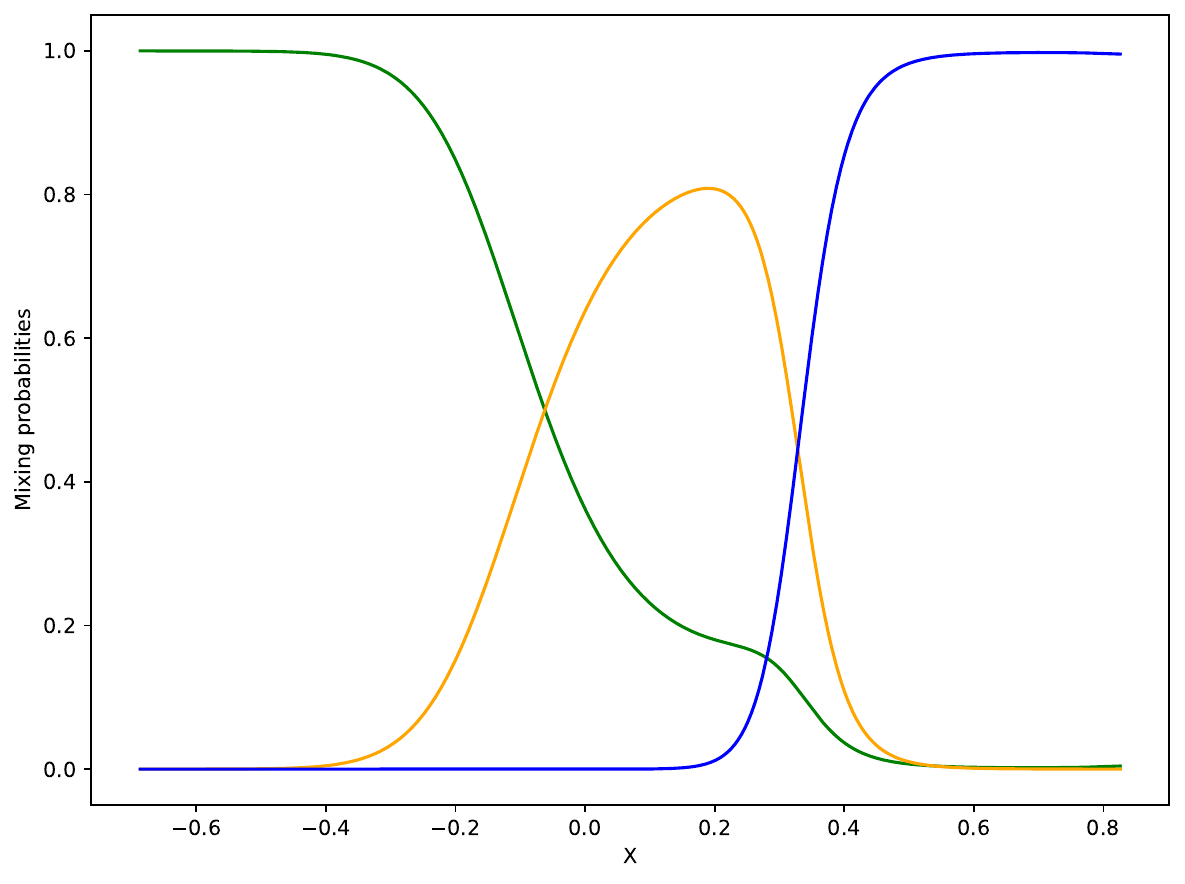}
			\caption{True Gaussian-gated functions.}
			\label{fig_true_Gaussian-gated_GGMoE1}
		\end{subfigure}
		~ 
		\begin{subfigure}[t]{0.5\textwidth}
			\centering
			\includegraphics[width=\linewidth]{figure/plot_estimated_GGMoE_saved_data_model_1_kappa3_nmin100_nmax10000_rep20_nnum21_initAtom1.pdf}
			\caption{Estimated clustering structure with local means.}
			\label{fig_estimated_cluster_mean_GGMoE1}
		\end{subfigure}
		\caption{Typical simulation of data set from GGMoE model.}\label{fig_typical_simulation}
	\end{figure*}

	\bibliography{reference}
	\bibliographystyle{abbrv}

\end{document}